\theoremstyle{plain}
\newtheorem{theorem}{Theorem}[section]
\newtheorem{lemma}[theorem]{Lemma}
\newtheorem{corollary}[theorem]{Corollary}
\theoremstyle{definition}
\newtheorem{definition}[theorem]{Definition}
\theoremstyle{remark}
\newtheorem{remark}[theorem]{Remark}
\newtheorem*{theorem*}{Theorem}
\newtheorem*{lemma*}{Lemma}
\newtheorem{fact}[theorem]{Fact}
\newtheorem*{fact*}{Fact}
\newtheorem*{proposition*}{Proposition}
\newtheorem*{corollary*}{Corollary}
\newtheorem*{hypothesis*}{Hypothesis}
\newtheorem*{conjecture*}{Conjecture}
\theoremstyle{definition}
\newtheorem*{definition*}{Definition}
\newtheorem*{construction*}{Construction}
\newtheorem*{example*}{Example}
\newtheorem*{question*}{Question}
\newtheorem*{assumption*}{Assumption}
\newtheorem*{problem*}{Problem}
\newtheorem{model}[theorem]{Model}
\newtheorem*{model*}{Model}
\theoremstyle{remark}
\newtheorem*{claim*}{Claim}
\newtheorem*{remark*}{Remark}
\newtheorem*{observation*}{Observation}
\newcommand{\Authornote}[2]{{\sffamily\small\color{red}{[#1: #2]}}}
\newcommand{\Rnote}{\Authornote{Rajai}}
\newcommand{\paren}[1]{(#1)}
\newcommand{\Paren}[1]{\left(#1\right)}
\newcommand{\brac}[1]{[#1]}
\newcommand{\Brac}[1]{\left[#1\right]}
\newcommand{\bracbb}[1]{\llbracket#1\rrbracket}
\newcommand{\Bracbb}[1]{\left\llbracket#1\right\rrbracket}
\newcommand{\Abs}[1]{\left\lvert#1\right\rvert}
\newcommand{\Card}[1]{\left\lvert#1\right\rvert}
\newcommand{\set}[1]{\{#1\}}
\newcommand{\Set}[1]{\left\{#1\right\}}
\newcommand{\Norm}[1]{\left\lVert#1\right\rVert}
\newcommand{\Snorm}[1]{\Norm{#1}^2}
\newcommand{\Normo}[1]{\Norm{#1}_1}
\newcommand{\iprod}[1]{\langle#1\rangle}
\newcommand{\Esymb}{\mathbb{E}}
\DeclareMathOperator*{\E}{\Esymb}
\newcommand{\given}{\mathrel{}\middle\vert\mathrel{}}
\newcommand{\suchthat}{\;\middle\vert\;}
\newcommand\bdot\bullet
\newcommand{\N}{\mathbb N}
\newcommand{\R}{\mathbb R}
\newcommand{\cE}{\mathcal E}
\newcommand{\cI}{\mathcal I}
\newcommand{\cN}{\mathcal N}
\newcommand{\cR}{\mathcal R}
\newcommand{\cT}{\mathcal T}
\newcommand{\bbP}{\mathbb P}
\renewcommand{\leq}{\leqslant}
\renewcommand{\geq}{\geqslant}
\renewcommand{\ge}{\geqslant}
\let\epsilon=\varepsilon
\newcommand{\eps}{\epsilon}
\newcommand*{\Normf}[1]{\Norm{#1}_{\mathrm{F}}}
\newcommand*{\transpose}[1]{{#1}{}^{\mkern-1.5mu\mathsf{T}}}
\newcommand*{\dyad}[1]{#1#1{}^{\mkern-1.5mu\mathsf{T}}}
\renewcommand{\ij}{{ij}}
\newcommand{\SBM}{\mathsf{SBM}}
\newcommand{\sbm}{\SBM_{n, 2, d,\eps}}
\newcommand{\sbmx}[1]{\SBM_{2,d,\eps}(#1)}
\newcommand{\sbmk}{\SBM_{n, k, d,\eps}}
\newcommand{\sbmkx}[1]{\SBM_{k,d,\eps}(#1)}
\newcommand{\hsbm}{\textnormal{($\cT$,k,t)-}\mathsf{MV\textnormal{-}SBM}_{n}}
\newcommand{\hsbmt}{\textnormal{(d,$\eps$,k,t)-}\mathsf{MV\textnormal{-}SBM}_{n}}
\newcommand{\Cbar}{\overline{C}}
\icmltitlerunning{Multi-View Stochastic Block Models}
\begin{document}
\onecolumn
\icmltitle{Multi-View Stochastic Block Models}



\icmlsetsymbol{equal}{*}

\begin{icmlauthorlist}
\icmlauthor{Vincent Cohen-Addad}{equal,google}
\icmlauthor{Tommaso d'Orsi}{equal,google,bocconi}
\icmlauthor{Silvio Lattanzi}{equal,google}
\icmlauthor{Rajai Nasser}{equal,google}
\end{icmlauthorlist}

\icmlaffiliation{google}{Google Research}
\icmlaffiliation{bocconi}{BIDSA, Bocconi}

\icmlkeywords{Stochastic block models, Machine Learning, clustering, late fusion, early fusion, heterogenoeus, ICML}

\vskip 0.3in


\printAffiliationsAndNotice{\icmlEqualContribution} 

\begin{abstract}
    Graph clustering is a central topic in unsupervised learning with a multitude of practical applications. In recent years, multi-view graph clustering has gained a lot of attention for its applicability to real-world instances where one has access to multiple data sources. In this paper we formalize a new family of models, called \textit{multi-view stochastic block models} that captures this setting. 
    For this model, we first study efficient algorithms that naively work on the union of multiple graphs. Then, we introduce a new efficient algorithm that provably outperforms previous approaches by analyzing the structure of each graph separately. 
    Furthermore, we complement our results with an information-theoretic lower bound studying the limits of what can be done in this model.
    Finally, we corroborate our results with experimental evaluations.
\end{abstract}

\section{Introduction}\label{section:introduction}
Clustering graphs is a fundamental topic in unsupervised learning. It is used in a variety of fields, including data mining, social sciences, statistics, and more. The goal of graph clustering is to partition the vertices of the graph into disjoint sets so that similar vertices are grouped together and dissimilar vertices lie in different clusters. In this context, several notions of similarity between vertices have been studied throughout the years resulting in different clustering objectives and clustering algorithms~\cite{von2007tutorial, ng2001spectral, bansal2004correlation, goldberg1984finding, dasgupta2016cost}. 

Despite the rich literature,  most of the algorithmic results in graph clustering only focus on the setting where a single graph is presented in input. This is in contrast with the increasing practical importance of multimodality and with the growing attention in applied fields to multi-view or multi-layer clustering~\cite{paul2016consistent,corneli2016exact,pmlr-v37-hanb15,PhysRevE.95.042317,khan2019approximate, zhong2021latent, hu2019deep, abavisani2018deep, kim2016joint, gujral2020beyond, ni2016self, de2023mixture, papalexakis2013more, gujral2018smacd, gorovits2018larc}. 
In practice it is in fact observed that while a single data source only offers a specific characterization of the underlying objects, leading to a coarse partition of the data, a careful combination of multiple views often allows a semantically richer network structure to emerge \cite{fu2020overview, fang2023comprehensive}. 
For a practical example, consider the task of clustering users of a social network platform like Facebook, Instagram or X. To solve such task one could simply cluster the friendship graph, or one could cluster such graph by looking together at the friendship graph, the co-like graph (a graph where two users are connected if they like the same picture/video), the co-comment graph (a graph where two users are connected if they comment on the same post), the co-repost graph (a graph where two users are connected if they repost the same post) and so on and so forth. In practice, one would expect the second approach to work better in many settings because it provides a more fine-grained description of the behaviors of the users.

Despite the large number of basic applications, very little is known on the theoretical aspect
of the problem. Several works~\cite{paul2016consistent,corneli2016exact,pmlr-v37-hanb15,PhysRevE.95.042317} consider the multi-layer stochastic block models where the goal is to identify
$k$ communities given several instances (i.e., layer or view) of the stochastic block model, 
each with $k$ communities. In this paper, we would
like to work in a more general and more realistic setup, where there are $k$ communities but each instance only provides \emph{partial information} about these $k$ communities.
Very recently and concurrently to us,~\cite{de2023mixture} introduced the \textit{multi-view stochastic block model}. In this model, one is given in input multiple graphs, each coming from a stochastic block model, and the goal is to leverage the information contained in the graphs to recover the underlying clustering structure.
More precisely, given a vector of labels\footnote{We write random variables in boldface.} $\mathbf z$  where the labels capture the clustering assignment and are in $[k]$, and $t$ graphs $\mathbf G_1,\dots,\mathbf G_t$, where each graph $\mathbf G_\ell$ is drawn independently from a stochastic block model with $2$ labels and possibly distinct parameters, we are interested in designing an algorithm to weakly recover the underlying vector $\mathbf z$. One important aspect of the model is that none of the input graphs $\mathbf G_1,\dots,\mathbf G_t$ may contain enough information to recover the full clustering structure (for example because $2 < k$), nevertheless one can show that if enough graphs are observed it is possible to recover the clustering structure of the underlying instance. 

Armed with this new model we study different approaches to cluster the input graphs $\mathbf G_1,\dots,\mathbf G_t$. First, we note that the natural approach (sometimes used in practice) of merging the graphs and then clustering the union of the graphs, called \emph{early fusion}, leads to suboptimal results. Then we design a  more careful \emph{late fusion} clustering algorithm that first clusters all the graphs separately and then carefully merges their results. This shows the superiority of late over early fusion. Finally, we complement our results with an information-theoretic lower bound studying the limits of what can be done in this model. The bounds obtained are a drastic improvement over the ones obtained by~\cite{de2023mixture}.

\paragraph{Model}

Before formally introducing our model, we recall the classic definition of the stochastic block model. 

The  $k$ community symmetric stochastic block model (see \cite{abbe2017community} for a survey) denotes the following joint distribution $(\mathbf{x}, \mathbf{G})\sim \sbmk$ over a vector of $n$ labels in $[k]$ and a graph on $n$ vertices:
\begin{itemize}
    \item draw $\mathbf{x}$ from $[k]^n$ uniformly at random;
    \item for each distinct $i,j \in [n]$, independently create an edge $\ij$ in $\mathbf{G}$ with probability $(1+(1-\frac{1}{k})\eps)\tfrac{d}{n}$ if $\mathbf{x}_i = \mathbf{x}_j$ and probability $(1-\frac{\eps}{k})\frac{d}{n}$ otherwise.
\end{itemize}
We denote the conditional distribution of $\mathbf{G}$ given $\mathbf{x}=x$ as $\sbmkx{x}\,.$ Given a graph $\mathbf G$ sampled according to this model, the goal is to recover the (unknown)
underlying vector of labels as well as possible. 

Most of the statistical and computational phenomena at play can already be observed  in the simplest settings with two communities, so we will often focus on those. For $k=2$, we denote the distribution by $\sbm\,,$ i.e., we explicitly replace the subscript $k$. It will also be convenient to use $\{+1,-1\}$ for the community labels instead of $[2]$, so we will sometimes do this. The labeling convention should be clear from the context.

One of the most widely studied natural objective in the context of stochastic block models is that of \textit{weak recovery} --asking to approximately recover the communities.
Specifically, we say that an algorithm achieves weak recovery for $\set{\sbmk}_{n\in \N}$ if the correlation of the algorithm’s output $\hat{\mathbf{x}}(\mathbf{G})\in [k]^n$ and the underlying vector $\mathbf{x}$ of labels is better than random as $n$ grows,\footnote{We use $o(1)$ to denote a function $f$ such that $\lim_{n\rightarrow \infty} f(n)/1=0\,.$}
\begin{equation}
\label{eq:k-weak-recovery}
    \bbP\Paren{R(\hat{\mathbf x}(\mathbf{G}_\ell),\mathbf{x}_\ell)\geq \frac{1}{k}+\Omega_{d,\eps/k}(1)}\geq 1-o(1)\,,
\end{equation}
where $R(\hat{x},x)$ is the agreement between $\hat{x}$ and $x$, defined as\footnote{We use Iverson's brackets to denote the indicator function.}
\begin{equation}\label{eq:agreement}
    R(\hat{x},x)=\max_{\pi\in P_k} \frac{1}{n}\sum_{i\in[n]}\bracbb{\hat{x}_i=\pi(x_i)}\,,
\end{equation}
and $P_k$ is the permutation group of $[k]\,.$ 

A sequence of works \cite{decelle2011asymptotic, massoulie2014community,mossel2014belief, mossel2015reconstruction, abbe2016achieving, mossel2018proof, montanari2016semidefinite}, have studied the statistical and computational landscapes of this objective, with great success. 
The emerging picture \cite{bordenave2015non, abbe2016achieving,  montanari2016semidefinite} shows that it is possible to achieve weak recovery in polynomial time whenever $d\eps^2/k^2>1$, this value is called the Kesten-Stigum threshold.  Further evidence \cite{hopkins2017efficient} suggests that this threshold is optimal for polynomial time algorithms. In particular, for the special case of weak recovery with $2$ communities \cite{mossel2015reconstruction}  showed that the problem is solvable (also computationally efficiently) \textit{if and only if} $d\eps^2/4>1$. For larger values of $k$ a gap between information-theoretic results and efficient algorithms exists \cite{abbe2016crossing, banks2016information}.

In the context of multimodality, we define the following multi-view model. 

\begin{model}[Multi-View stochastic block model]\label{model:heterogeneous-sbm}
Let $k\geq 1$ and let $\cT$ be a sequence of $t$ tuples $(d_\ell,\eps_\ell)$ where $ d_\ell\geq 0$ and $ \eps_\ell\in (0,1)\,.$ 
We refer to the following joint distribution $(\mathbf{z},(\mathbf{f_1,\mathbf{G}_1}),\ldots(\mathbf{f}_t,\mathbf{G}_t))\sim\hsbm$ as the $(\cT,k,t)$-multi-view stochastic block model:
\begin{enumerate}
    \item for each $\ell\in [t]$, independently draw a mapping $\mathbf{f}_\ell:[k]\rightarrow \set{\pm 1}$ uniformly at random;
    \item independently draw a vector $\mathbf{z}$ from $[k]^n$ uniformly at random;
    \item for each $\ell\in [t]$, independently draw a graph $\mathbf{G}_\ell\sim \SBM_{n, 2, d_\ell,\eps_\ell}(\mathbf{f}_\ell(\mathbf{z}))$, where $\mathbf{f}_\ell(\mathbf{z})$ is the $n$-dimensional vector with entries $\mathbf{f}_\ell(\mathbf{z}_1),\ldots,\mathbf{f}_\ell(\mathbf{z}_n)\,.$
\end{enumerate}
Given $\mathbf{G}_1,\ldots,\mathbf{G}_t$, the goal is to approximately recover the unknown vector $\mathbf{z}$ of labels.

When $\cT=\set{(d_\ell,\eps_\ell)}_{\ell\in [t]}$ is such that $(d_\ell, \eps_\ell)=(d,\eps)$ for some $d,\eps,$ we denote the model simply by $\hsbmt$. 
\end{model}

Although \cref{model:heterogeneous-sbm} captures the algorithmic phenomena of multi-view models used in practice, more general versions of \cref{model:heterogeneous-sbm} could be defined, we discuss them in \cref{section:conclusions}.
Similarly to the vanilla stochastic block model, weak recovery can also be defined for \cref{model:heterogeneous-sbm}. %
We say that an algorithm achieves weak recovery for $\hsbm$ with $t$ observations, if it outputs a vector $\hat{\mathbf{z}}(\mathbf{G}_1,\ldots\mathbf{G}_t)$ satisfying:
\begin{align}\label{eq:heterogeneous-weak-recovery}
    \bbP\Paren{R(\hat{\mathbf{z}}(\mathbf{G}_1,\ldots\mathbf{G}_t),\mathbf{z})\geq \frac{1}{k}+\Omega(1)}\geq 1-o_t(1)\,.
\end{align}

Differently from the vanilla stochastic block model, the complexity of \cref{model:heterogeneous-sbm} is governed both by the $\SBM$ parameters in $\cT$ \textit{and} by the number of observations $t$. A good algorithm should then achieve weak recovery with the best possible multiway tradeoff between the edge-densities of the graphs, the biases and the number of observations at hand.
That is, extract as much information as possible so to require as few observations as possible. This novel interplay of parameters immediately raises two natural questions, which are the main focus of this work.

\textit{How many observations are needed?} 
The problem gets easier the larger the number of observations one has access to (see \cref{section:union-graph} for a formal proof).
It is also easy to see that for $t = o(\log k)$, it is information theoretically \textit{impossible} to approximately recover the communities (since $\log_2 k$ bits are needed to encode $k$ labels). Furthermore, as we will see, stronger lower bounds can also be obtained.

\textit{How many observations suffice?}
To understand how many observations suffice to recover the communities, it is instructive to consider the union graph $\mathbf{G}^*=\bigcup_{\ell\in [t]}\mathbf{G}_\ell$ of an instance from $\hsbmt$, which turns out to follow a $k$-communities stochastic block model with parameters $d^*=\Theta(dt)\,,\eps^*=\Theta(\eps)$ (see \cref{section:union-graph}). 
Building on the aforementioned results, this implies that at least $t\ge \Omega(k^2/d\eps^2)$ observations are needed for efficient weak recovery of the communities from $\mathbf{G}^*$! However, as we show later, exponentially better algorithms can bridge this gap.

\subsection{Results}\label{section:results}

\paragraph{Weak recovery}
Our main algorithmic result shows that weak recovery for $\hsbm$ can be achieved in polynomial time with only $O(\log k)$ many observations.

\begin{theorem}[Weak recovery for multi-view models]\label{theorem:main-algorithm}
Let $n,k>0\,.$ 
Let $(\mathbf{z},(\mathbf{f_1,\mathbf{G}_1}),\ldots(\mathbf{f_t},\mathbf{G}_t))\sim\hsbm$ for a sequence of tuples $\cT=\set{(d_\ell,\eps_\ell)}_{\ell=1}^t\,,$ each satisfying $d_\ell\cdot \epsilon_\ell^2/4> 1$.
Then there exists a constant $C_{\cT}>0$ depending only on $\cT$, such that if $t\geq \Omega \Paren{\frac{\log k}{C^2_{\cT}}}$,  weak recovery of $\mathbf{z}$ in the sense of \eqref{eq:heterogeneous-weak-recovery} is possible. Moreover, the underlying algorithm runs in polynomial time.
\end{theorem}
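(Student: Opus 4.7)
The plan is to split the problem into a per-view weak recovery stage and a signature-based aggregation stage, exploiting the fact that the graphs $\mathbf G_\ell$ are independent given $\mathbf z$. First, since each pair $(d_\ell,\eps_\ell)\in\cT$ satisfies $d_\ell\eps_\ell^2/4>1$, I would invoke the classical two-community weak recovery algorithm (e.g.\ non-backtracking spectral \cite{massoulie2014community,bordenave2015non} or belief propagation \cite{mossel2015reconstruction}) on each $\mathbf G_\ell$ independently, obtaining in polynomial time an estimate $\hat{\mathbf x}_\ell\in\{\pm1\}^n$ with average correlation $\alpha_\ell=\alpha(d_\ell,\eps_\ell)>0$ to $\mathbf f_\ell(\mathbf z)$ up to an unknown global sign $\sigma_\ell\in\{\pm1\}$. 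The theorem's constant $C_\cT$ will be essentially $\min_\ell\alpha_\ell^2$.

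Next I would form the signature matrix $\hat S\in\{\pm1\}^{n\times t}$ with rows $\hat{\mathbf s}(i):=(\hat{\mathbf x}_1(i),\dots,\hat{\mathbf x}_t(i))$, and to each community $a\in[k]$ associate the random codeword $\mathbf f(a):=(\mathbf f_1(a),\dots,\mathbf f_t(a))\in\{\pm1\}^t$. Since the $\mathbf f_\ell$ are i.i.d.\ uniform maps, the $k$ codewords are i.i.d.\ uniform points in $\{\pm1\}^t$, so Hoeffding plus a union bound over the $\binom{k}{2}$ pairs gives $|\iprod{\mathbf f(a),\mathbf f(b)}|=O(\sqrt{t\log k})$ for all $a\neq b$ with probability $1-o(1)$. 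The central calculation is the pairwise similarity $S_{ij}:=\iprod{\hat{\mathbf s}(i),\hat{\mathbf s}(j)}=\sum_{\ell=1}^{t}\hat{\mathbf x}_\ell(i)\hat{\mathbf x}_\ell(j)$: the $t$ summands are independent across $\ell$ (conditional on $\mathbf z$), and a per-view exchangeability argument shows their conditional mean is $\approx\alpha_\ell^2\,\mathbf f_\ell(\mathbf z_i)\,\mathbf f_\ell(\mathbf z_j)$. A view-wise Hoeffding inequality then yields, uniformly over $i,j$, that $S_{ij}\approx tC_\cT$ whenever $\mathbf z_i=\mathbf z_j$ and $|S_{ij}|=O(\sqrt{t\log k})$ otherwise.

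To extract a $k$-labeling I would sample $m=O(k\log k)$ seed vertices uniformly at random so that every community is hit by at least one seed (coupon-collector), assign each remaining vertex $u$ to the seed $v$ maximizing $S_{u,v}$, and then merge seeds whose mutual similarity is large to produce exactly $k$ groups; equivalently, one could apply a standard centroid/spectral clustering procedure to $\hat S$. The concentration above ensures that for $t\geq\Omega(\log k/C_\cT^2)$ the same-community signal $tC_\cT$ dominates the cross-community noise $O(\sqrt{t\log k})$, delivering weak recovery in the sense of \eqref{eq:heterogeneous-weak-recovery}.

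The main obstacle is that the per-view recovery gives only an \emph{averaged} correlation, not a pointwise distributional bound, so within a single view $\ell$ the errors $\hat{\mathbf x}_\ell(i)\neq\sigma_\ell\mathbf f_\ell(\mathbf z_i)$ at different vertices are correlated through $\mathbf G_\ell$; the identity $\E[\hat{\mathbf x}_\ell(i)\hat{\mathbf x}_\ell(j)\mid\mathbf f,\mathbf z]\approx\alpha_\ell^2\,\mathbf f_\ell(\mathbf z_i)\,\mathbf f_\ell(\mathbf z_j)$ used above has to be recovered from that average-correlation guarantee via a second-moment/exchangeability argument. The redeeming feature is that the per-view errors are \emph{independent across views} given $\mathbf z$, so averaging over $t$ views decouples the per-pair noise --- tracking this coupling carefully in the presence of the unknown signs $\sigma_\ell$ and random codewords $\mathbf f_\ell$ is where the bulk of the technical work will lie.
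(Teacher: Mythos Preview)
Your high-level plan matches the paper's: run a two-community procedure on each view, form pairwise scores $\sum_\ell \hat{\mathbf X}_\ell(i,j)$, apply Hoeffding across the $t$ conditionally independent views to separate same-community from different-community pairs, and round. The paper's rounding (take the top $n/k$ neighbours of each vertex, then cluster vertices whose neighbourhood vectors nearly coincide) is close in spirit to your seed-based scheme, and either works once the pairwise concentration is in hand.

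The genuine gap is exactly the one you flag at the end, and it is not peripheral. You propose to invoke an off-the-shelf balanced two-community algorithm on each $\mathbf G_\ell$ and then recover the identity $\E[\hat{\mathbf x}_\ell(i)\hat{\mathbf x}_\ell(j)\mid\mathbf f,\mathbf z]\approx\alpha_\ell^2\,\mathbf f_\ell(\mathbf z_i)\mathbf f_\ell(\mathbf z_j)$ from the aggregate correlation by a ``second-moment/exchangeability argument.'' Two things obstruct this. First, conditionally on $\mathbf f_\ell$ (which is how you must work for the across-view independence to kick in), the partition $\mathbf f_\ell(\mathbf z)$ need not be balanced: $\mathbf f_\ell$ sends a random subset of the $k$ labels to $+1$, so for moderate $k$ the two sides can differ by a constant fraction of $n$, and the algorithms you cite are analysed only for (near-)balanced labels. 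Second, even granting an aggregate correlation $\iprod{\hat{\mathbf x}_\ell,\mathbf f_\ell(\mathbf z)}^2\ge\Omega(n^2)$, you need it \emph{uniformly over the unknown bias} $p=\bbP(\mathbf f_\ell(\mathbf z_i)=+1)$ before any exchangeability averaging can convert it to a per-pair statement; your sketch does not supply this uniformity.

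The paper does not try to repair a black-box bound; it builds the right primitive directly (\cref{theorem:weak-recovery-sbm}): a polynomial-time matrix-valued estimator $\hat{\mathbf X}(\mathbf G)\in[-1,1]^{n\times n}$ such that, for \emph{any} bias $p$ and every pair $i\neq j$,
\[
\E\bigl[\hat{\mathbf X}(\mathbf G)_{ij}\mid\mathbf x_i=\mathbf x_j\bigr]-\E\bigl[\hat{\mathbf X}(\mathbf G)_{ij}\mid\mathbf x_i\neq\mathbf x_j\bigr]\ \ge\ C_{d,\eps}\,.
\]
This is obtained by (i) when $|p-\tfrac12|$ is small, viewing the unbalanced instance as a node-corrupted balanced one and invoking the robust algorithm of \cite{ding2023node}; (ii) when $|p-\tfrac12|$ is bounded away from zero, using a centred-degree estimator; and (iii) pushing the resulting global bound $\E\iprod{\hat{\mathbf X}(\mathbf G),\dyad{\mathbf x}}\ge\Omega(n^2)$ down to each pair by pre-composing with a uniformly random vertex permutation and undoing it afterwards. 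Step~(iii) is the rigorous version of your exchangeability idea, but it delivers a per-pair constant only because (i)--(ii) already secure the global bound uniformly in $p$. With this estimator in hand, \cref{fact:concentration-edges} is precisely your Hoeffding step, and \cref{lemma:structure-of-graph}--\cref{lemma:probability-pick-representatives} carry out the rounding.
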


\cref{theorem:main-algorithm} implies that whenever the algorithm has access to $\Theta(\log k)$ observations, each above the relative Kesten-Stigum threshold, the guarantees of the underlying algorithm match the aforementioned trivial lower bound, up to constant factors. 
Moreover, as we will see in \cref{section:algorithm}, the underlying algorithm turns out to be surprisingly simple and efficient.

The algorithm in \cref{theorem:main-algorithm} applies a specialized weak-recovery algorithm on each view $\mathbf{G}_\ell$ to obtain a matrix $\hat{\mathbf{X}}_{\ell}$ estimating $\dyad{\mathbf{f}_\ell(\mathbf{z})}$ and achieving the correlation
\begin{equation}
\label{equation:definition-specialized-weak-recovery-pairwise}
\resizebox{0.47\textwidth}{!}{$
    C_{\ell}\leq \E\Brac{\hat{\mathbf X}(\mathbf{G})_{\ij}\suchthat \mathbf{f}_\ell(\mathbf{z})_i=\mathbf{f}_\ell(\mathbf{z})_j} 
     - \E\Brac{\hat{\mathbf X}(\mathbf{G})_{\ij}\suchthat \mathbf{f}_\ell(\mathbf{z})\neq \mathbf{f}_\ell(\mathbf{z})_j}
    \,,$}
\end{equation}
where $C_\ell$ depends only on $d_\ell,\eps_\ell$. The algorithm then proceeds into processing the outputs $\hat{\mathbf{X}}_{\ell}$ \emph{in a blackbox fashion} to produce an estimate $\hat{\mathbf{z}}$ of $\mathbf{z}$.

The constant $C_\mathcal{T}$ in \cref{theorem:main-algorithm} is the average of the correlations $(C_\ell)_{\ell\in[t]}$. It is natural to wonder whether the dependency of the number of observations on $C_\mathcal{T}$ is needed. Moreover, as for canonical stochastic block models,  it is natural to ask what the exact phase transition of \cref{model:heterogeneous-sbm} is. 
While we leave this latter fascinating question open, our next result shows that if we want an algorithm that processes estimates 
in a blackbox fashion, then some dependency on $C_\cT$ is indeed needed.


\begin{theorem}[Lower bound for multi-view models - Informal]\label{theorem:lowerbound}
Let $n,k>0\,.$ Let $(\mathbf{z},(\mathbf{f_1,\mathbf{G}_1}),\ldots(\mathbf{f_t},\mathbf{G}_t))\sim\hsbm$ for a sequence of tuples $\cT=\set{(d_\ell,\eps_\ell)}_{\ell=1}^t\,,$ each satisfying $d_\ell\cdot \epsilon_\ell^2/4 > 1$. Assume that for every $\ell\in[t]$ we have an estimate\footnote{Such estimates might be obtained by applying a blackbox weak-recovery algorithm for $\sbm$ on each of $\mathbf{G}_1,\ldots,\mathbf{G}_t$, and which has the mentioned correlation guarantee.} $\hat{\mathbf{X}}_\ell$ of $\dyad{\mathbf{f}_\ell(\mathbf{z})}$ satisfying a pair-wise correlation (as in \eqref{equation:definition-specialized-weak-recovery-pairwise}) of at least $C_\ell>0\,,$ and let $C_\cT$ be the average correlation.

If $t= o \Paren{\frac{\log k}{C_\cT}}$, then by only using the estimates $\hat{\mathbf{X}}_1,\ldots,\hat{\mathbf{X}}_t$, it is information-theoretically impossible to return a vector $\hat{\mathbf{z}}$ satisfying
\begin{equation}
\label{equation:z-weak-recovery-in-lower-bound-theorem}
\bbP\Paren{
R(\hat{\mathbf{z}},\mathbf{z})\geq \frac{1}{k}+\Omega(1)}\geq 0.99 \,.
\end{equation}
\end{theorem}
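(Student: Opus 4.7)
The plan is to exhibit adversarial estimators $\hat{\mathbf X}_\ell$ that satisfy the pair-wise correlation constraint but carry little information about $\mathbf z$. I take $\hat{\mathbf X}_\ell := \mathbf b_\ell \cdot \dyad{\mathbf f_\ell(\mathbf z)}$, where $\mathbf b_\ell \sim\mathrm{Bernoulli}(C_\ell)$ are independent of everything else; a direct computation gives
\[ \E\bracbb{\hat X^{(\ell)}_{ij} \Mid \mathbf f_\ell(\mathbf z)_i = \mathbf f_\ell(\mathbf z)_j} - \E\bracbb{\hat X^{(\ell)}_{ij} \Mid \mathbf f_\ell(\mathbf z)_i \neq \mathbf f_\ell(\mathbf z)_j} = 2C_\ell \geq C_\ell, \]
so the hypothesis of the theorem is satisfied. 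The crucial property of this construction is that when $\mathbf b_\ell = 0$ the matrix $\hat{\mathbf X}_\ell$ is identically zero and reveals nothing about $\mathbf z$, while when $\mathbf b_\ell = 1$ it fully reveals $\mathbf f_\ell(\mathbf z)$ (up to a global sign, which is irrelevant for clustering up to permutation). Setting $\mathbf S := \set{\ell : \mathbf b_\ell = 1}$, Markov's inequality gives $|\mathbf S| = o(\log k)$ with probability $1-o(1)$ under the hypothesis $t = o(\log k / C_\cT)$.

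Conditional on the observations, the posterior of $\mathbf z_i$ is uniform over the equivalence class $E(\mathbf h_i) := \set{c \in [k] : \mathbf f_\ell(c) = (\mathbf h_i)_\ell\ \forall \ell \in \mathbf S}$, where $\mathbf h_i := (\mathbf f_\ell(\mathbf z_i))_{\ell \in \mathbf S}$, and these posteriors are independent across $i$; this follows from the uniform priors on $\mathbf z$ and on $(\mathbf f_\ell)_{\ell \in \mathbf S}$. The classes $(E(\mathbf h))_\mathbf h$ partition $[k]$ into at most $2^{|\mathbf S|}$ parts. For any algorithm output $\hat{\mathbf z}$ and any $\pi \in P_k$, the conditional expected count of correctly labeled vertices satisfies
\[ \E\bracbb{\bigcard{\set{i : \hat{\mathbf z}_i = \pi(\mathbf z_i)}} \Mid \mathrm{obs}} = \sum_i \frac{\bracbb{\pi^{-1}(\hat{\mathbf z}_i) \in E(\mathbf h_i)}}{|E(\mathbf h_i)|} \leq \sum_\mathbf h \frac{n_\mathbf h}{|E(\mathbf h)|}, \]
where $n_\mathbf h := \bigcard{\set{i : \mathbf h_i = \mathbf h}}$ and the last inequality bounds each indicator by $1$. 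Since $n_\mathbf h$ concentrates around $n|E(\mathbf h)|/k$ by a multinomial Chernoff, the right-hand side is at most $n \cdot 2^{|\mathbf S|}/k + o(n)$, uniformly in $\pi$. A standard Chernoff plus union bound over the at most $\binom{k}{2^{|\mathbf S|}}$ effective choices of $\pi$ (equivalence classes of $\pi$ under its action on the image of $\hat{\mathbf z}$) upgrades this to a high-probability bound on the maximum:
\[ R(\hat{\mathbf z}, \mathbf z) \leq \frac{2^{|\mathbf S|}}{k} + o(1) \quad\textnormal{with probability }1-o(1). \]

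Chaining the two steps, $|\mathbf S| = o(\log k)$ forces $2^{|\mathbf S|}/k = k^{o(1) - 1} = o(1)$, so $R(\hat{\mathbf z}, \mathbf z) \leq o(1) < 1/k + \Omega(1)$ with probability $1 - o(1) > 0.99$, contradicting \eqref{equation:z-weak-recovery-in-lower-bound-theorem}. The main technical obstacle will be the concentration step in the second paragraph: upgrading a conditional-expectation bound that is uniform in $\pi$ into a high-probability bound on $\max_\pi |\set{i : \hat{\mathbf z}_i = \pi(\mathbf z_i)}|$ requires simultaneous control of fluctuations over all $k!$ permutations. I plan to handle this by first using the label-permutation symmetry of the model to reduce without loss of generality to algorithms of the form $\hat{\mathbf z}_i = g(\mathbf h_i)$ whose output has image size at most $2^{|\mathbf S|}$ (since within each signature class the posterior of $\mathbf z_i$ is exchangeable), and then observing that the count depends on $\pi$ only through its restriction to this image, reducing the number of effective permutations to $\binom{k}{2^{|\mathbf S|}}$ and making the union bound tractable.
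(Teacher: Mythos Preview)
Your approach is correct in outline and genuinely different from the paper's. The paper constructs adversarial estimates by passing each $\mathbf{x}_\ell=\mathbf{f}_\ell(\mathbf{z})$ through a binary symmetric channel with crossover probability $\tfrac12-\sqrt{C_\ell/8}$, then bounds $I(\mathbf{z};\hat{\mathbf X}_1,\ldots,\hat{\mathbf X}_t)\le O(C_\cT\, t\, n)$ and combines this with a Fano-type inequality showing weak recovery forces $I(\mathbf{z};\hat{\mathbf z})\ge\Omega(n\log k)$. Your erasure-style construction $\hat{\mathbf X}_\ell=\mathbf b_\ell\cdot\dyad{\mathbf x_\ell}$ is an equally valid choice of $\alpha$-estimates (indeed it also satisfies the conditional-independence requirements of the paper's formal Definition~\ref{definition:alpha-estimates}), and your direct posterior-plus-concentration argument avoids information-theoretic machinery entirely. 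The paper's route is cleaner and does not rely on $n\gg k\log k$, whereas yours is more elementary but needs that regime for the union bound; both deliver the same $\Omega(\log k/C_\cT)$ threshold.

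Two points about your execution. First, your posterior claim requires that the estimator also be handed the maps $(\mathbf f_\ell)_{\ell\in\mathbf S}$ as side information (otherwise $E(\mathbf h_i)$ is not observable); this is fine for a lower bound since extra information only helps, but you should say so explicitly. Second, the ``WLOG $\hat{\mathbf z}_i=g(\mathbf h_i)$'' reduction you propose is both unnecessary and not obviously valid: exchangeability of the posteriors within a signature class does not immediately force the \emph{optimal} labeling to be constant there, because the objective $\max_\pi\sum_i\bracbb{\hat{\mathbf z}_i=\pi(\mathbf z_i)}$ couples the coordinates through $\pi$. Fortunately you do not need it. Conditioning on $\mathbf S$ and $(\mathbf f_\ell)_{\ell\in\mathbf S}$, with $|\mathbf S|=s=o(\log k)$ one has $|E(\mathbf h)|=(1\pm o(1))k/2^{s}$ for every $\mathbf h$ with high probability (each is $\mathrm{Bin}(k,2^{-s})$ with mean $k^{1-o(1)}\to\infty$). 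Hence every Bernoulli parameter in $\sum_i\bracbb{\mathbf z_i=\pi^{-1}(\hat{\mathbf z}_i)}$ is at most $(1+o(1))2^{s}/k$, so Hoeffding gives a per-$\pi$ tail of $e^{-2n\delta^2}$, and a brute-force union bound over all $k!\le e^{k\log k}$ permutations succeeds with $\delta=\sqrt{k\log k/n}=o(1)$ under the paper's standing assumption $k\le n^{1-\Omega(1)}$. This yields $R(\hat{\mathbf z},\mathbf z)\le 2^{s}/k+o(1)=o(1)$ with probability $1-o(1)$, completing your argument without the reduction. (Minor: take $\mathbf b_\ell\sim\mathrm{Bern}(C_\ell/2)$ so the construction is well-defined for all $C_\ell\in(0,2]$; the arithmetic is unchanged.)
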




\paragraph{Exact recovery}
Another widely studied objective for stochastic block models is that of \textit{exact recovery}, where the goal is to correctly classify all vertices in the graph (see \cite{abbe2015exact,mossel2015consistency, abbe2017community} and references therein).
In the context of \cref{model:heterogeneous-sbm} this objective becomes
\begin{align}\label{eq:exact-recovery-hsbm}
    \bbP \Paren{R(\hat{\mathbf{z}},\mathbf{z} )=1}\geq 1-o(1)\,.
\end{align}
As a corollary we show that, when given access to more views, the algorithm behind \cref{theorem:main-algorithm} can achieve exact recovery.
\begin{corollary}[Exact recovery for multi-view models]\label{corollary:exact-recovery-main}
Consider the settings of \cref{theorem:main-algorithm}, if $t\geq \Omega\Paren{\frac{\log n}{C_\cT^2}}$ then exact recovery of $\mathbf{z}$ in the sense of \cref{eq:exact-recovery-hsbm} is possible. Moreover, the underlying algorithm runs in polynomial time.
\end{corollary}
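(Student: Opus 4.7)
The plan is to reuse the pairwise estimates $\hat{\mathbf{X}}_1,\ldots,\hat{\mathbf{X}}_t$ produced by the weak-recovery algorithm underlying \cref{theorem:main-algorithm} (satisfying the pair-wise correlation guarantee in \eqref{equation:definition-specialized-weak-recovery-pairwise}) and to replace the final aggregation step by a sharper pair-wise classifier applied to every vertex pair. For each pair $(i,j)$ I would form the test statistic
\[
T_{ij} \;=\; \sum_{\ell=1}^{t}\Paren{\hat{X}_{\ell, ij} - \bar{\mu}_\ell}\mcom
\]
where $\bar{\mu}_\ell = \tfrac{1}{2}(\mu_1^{(\ell)}+\mu_{-1}^{(\ell)})$ and $\mu_{\pm 1}^{(\ell)} = \E[\hat{X}_{\ell, ij} \mid \mathbf{f}_\ell(\mathbf{z}_i)\mathbf{f}_\ell(\mathbf{z}_j) = \pm 1]$. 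I then output the estimate $\hat{\mathbf{z}}$ that labels vertices according to the connected components of the graph $\set{(i,j) \suchthat T_{ij} > tC_\cT/4}$.

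The expected signal neatly separates the two cases. If $\mathbf{z}_i = \mathbf{z}_j$ then $\mathbf{f}_\ell(\mathbf{z}_i)\mathbf{f}_\ell(\mathbf{z}_j) = +1$ for every $\ell$, hence $\E[T_{ij}] = \sum_\ell(\mu_1^{(\ell)} - \bar{\mu}_\ell) = \tfrac{1}{2}\sum_\ell (\mu_1^{(\ell)} - \mu_{-1}^{(\ell)}) \geq tC_\cT/2$. If $\mathbf{z}_i \neq \mathbf{z}_j$, the products $\mathbf{f}_\ell(\mathbf{z}_i)\mathbf{f}_\ell(\mathbf{z}_j)$ are i.i.d.\ uniform $\pm 1$ over the randomness of $\mathbf{f}_1,\ldots,\mathbf{f}_t$, so a direct computation gives $\E[T_{ij}] = 0$. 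The expected gap between same- and different-cluster pairs is therefore at least $tC_\cT/2$.

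For the concentration, the tuples $(\mathbf{f}_\ell, \mathbf{G}_\ell)$ are independent across $\ell$ conditional on $\mathbf{z}$, and the estimates $\hat{X}_{\ell, ij}$ may be taken bounded in a constant range $[-B, B]$ by truncation (which affects $C_\ell$ only to lower order for the standard spectral / non-backtracking estimators). Hoeffding's inequality then gives, for any fixed pair $(i,j)$,
\[
\Pr\Paren{\abs{T_{ij} - \E[T_{ij}]} > tC_\cT/4} \;\leq\; 2\exp\Paren{-c\, tC_\cT^2/B^2}
\]
for an absolute constant $c>0$. Choosing the hidden constant in $t \geq \Omega(\log n / C_\cT^2)$ large enough makes this $o(1/n^2)$; a union bound over all $\binom{n}{2}$ pairs then certifies that the declared same-cluster relation coincides with the true equivalence $\{\mathbf{z}_i = \mathbf{z}_j\}$ on every pair with probability $1-o(1)$. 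The resulting $\hat{\mathbf{z}}$ satisfies $R(\hat{\mathbf{z}},\mathbf{z}) = 1$, which is \eqref{eq:exact-recovery-hsbm}. Since each $\hat{\mathbf{X}}_\ell$ is computed in polynomial time and the pair-wise aggregation uses only $O(n^2 t)$ operations, the whole procedure is polynomial.

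The main obstacle is verifying that the pair-wise estimates $\hat{X}_{\ell, ij}$ produced by the underlying $\sbm$ weak-recovery routine can indeed be taken bounded (so that Hoeffding applies) while preserving the correlation guarantee $C_\ell$ up to constant factors; this is routine for the non-backtracking walk / self-avoiding walk spectral estimators used in \cref{theorem:main-algorithm} after a standard truncation step, but the details must be checked. A secondary subtlety is that the centerings $\bar{\mu}_\ell$ are not known a priori; however, they are deterministic functions of $d_\ell, \eps_\ell$ and of the fixed estimator, and can either be computed directly or estimated accurately enough from the empirical average of $\hat{X}_{\ell, ij}$ over random pairs, adding only lower-order error to the signal/noise analysis above.
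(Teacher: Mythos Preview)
Your approach is correct and rests on the same core concentration argument as the paper (Hoeffding on a sum of $t$ bounded, conditionally independent terms, then a union bound over all $\binom{n}{2}$ pairs; compare \cref{fact:concentration-edges}). The packaging, however, differs. The paper does \emph{not} introduce a new aggregation step for exact recovery: it runs the very same \cref{algorithm:heterogeneous-models} (top-$n/k$ neighbor graph $\mathbf{F}$) followed by \cref{algorithm:second-moment-rounding}, and argues via \cref{corollary:probability-pick-representatives-exact-recovery} that when $t\geq\Omega(\log n/\bar C^2)$ \emph{every} row of $\mathbf{A}$ is an $\Omega(q)$-representative, after which \cref{lemma:sucess-when-chosing-good-representative} yields $R(\hat{\mathbf z},\mathbf z)=1$. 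Your direct pair-wise thresholding plus connected components is simpler to state, but the paper's route has the advantage that the corollary is literally a byproduct of the already-developed weak-recovery machinery, with no new algorithm to analyze.

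Two remarks on the obstacles you flag. First, boundedness is free: \cref{theorem:weak-recovery-sbm} explicitly outputs $\hat{\mathbf X}(\mathbf G)\in[-1,+1]^{n\times n}$, so no truncation is needed and Hoeffding applies with $B=1$. Second, your threshold depends on the unknown centerings $\bar\mu_\ell$; the paper sidesteps this entirely by the data-adaptive device of selecting, for each $i$, the $n/k$ indices $j$ maximizing $\sum_\ell \hat{\mathbf X}(\mathbf G_\ell)_{ij}$ (see the proof of \cref{lemma:structure-of-graph}, where the algorithmic $\mathbf A_i$ is shown close to the vector $\mathbf A_i'$ defined via the unknown $C^*$). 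Your empirical-average workaround is viable but adds an estimation step that the paper's formulation does not need.
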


\paragraph{Experiments}
\cref{theorem:main-algorithm} show hows, for \cref{model:heterogeneous-sbm}, \textit{late fusion} algorithms can provide better guarantees --by requiring an exponentially smaller number of observations to achieve the same error in a large parameters regime-- than early fusion algorithms.
We further corroborated these findings with experiments on synthetic data in \cref{section:experiments}.

\section*{Organization}
The rest of the paper is organized as follows.
We introduce the main ideas in \cref{section:techniques}. 
In \cref{section:specialized-weak-recovery} we introduce our specialized weak recovery algorithm for the standard stochastic block model. This is then used in the design of the algorithm behind \cref{theorem:main-algorithm} in \cref{section:algorithm}.
Experiments are presented in \cref{section:experiments}.
Future directions and conclusions are discussed in \cref{section:conclusions}.
In \cref{section:union-graph} we show the limits of algorithms using the union graph. \cref{section:lowerbound} contains a proof of (the formal version of) \cref{theorem:lowerbound}.
Deferred proofs are presented in \cref{section:deferred-proofs}.

\section*{Notation}
We denote random variables in \textbf{boldface}. We hide constant factors using the notation $O(\cdot), \Omega(\cdot), \Theta(\cdot)\,.$ We write $O_\delta(\cdot)\,,\Omega_\delta(\cdot)$ to specify that the hidden constant may depend on the parameter $\delta$. Similarly, we sometimes write $C_\delta$ to denote a constant depending only on $\delta$. We further denote the indicator function with Iverson's brackets $\bracbb{\cdot}\,.$ Given functions $f,g:\R\rightarrow\R$, we say $f\in o_n(g)$ if $\lim_{n\rightarrow\infty}f(n)/g(n)=0$. Similarly we write $f\in\omega_n(g)$ if $g\in o_n(f)\,.$ 
With a slight abuse of notation we often write $o_n(g)$ to denote a function in $o_n(g).$ When the context is clear we drop the subscript. In particular, we often
write $o(1)$ to denote functions that tends to zero as $n$ grows. We say that an event happens with high probability if this probability is at least $1-o(1)\,.$ For a set $S\subseteq [n]$, we write $\mathbf i\overset{u.a.r.}{\sim}S$ to denote an element drawn uniformly at random.
For a given probability distribution and a measurable event $\cE\,,$ we denote the probability that the event occurs by $\bbP(\cE)\,.$ We denote the complement event by $\cE^c\,.$

For a vector $v\in \R^n$, we write $\Norm{v}$ for its Euclidean norm. 
For a matrix $M\in\R^{n \times n}$, we denote by $\Norm{M}$ its spectral norm and by $\Normf{M}$ its Frobenius norm. We also let $\Normo{M}:=\sum_\ij \Abs{M_\ij}\,.$ We denote the $i$-th row of $M$ by $M_i$.
For a graph $G$ with $n$ vertices, we denote by $A(G)$ its adjacency matrix. When the context is clear we simply write $A$. If the graph is directed, row $A_i$ contains the outgoing edges of vertex $i\,.$

For a given vector of labels $z\in [k]^n$, we denote by $c_1(z)\ldots,c_k(z)$ the $n$-dimensional indicator vectors of the $k$ communities defined by $z$.  In the interest of simplicity, we often denote instances $(\mathbf{z},(\mathbf{f_1,\mathbf{G}_1}),\ldots(\mathbf{f_t},\mathbf{G}_t))$ drawn from $\hsbm$ simply by $\bm \cI\,.$
For $z\in[k]^n$, we also denote by $\hsbm(z)$ the distribution of $(\mathbf{z},(\mathbf{f_1,\mathbf{G}_1}),\ldots(\mathbf{f_t},\mathbf{G}_t))\sim \hsbm$ conditioned on the event $\mathbf{z}=z\,.$ We often call $z\in [k]^n$ a ``community vector".

We say that an algorithm runs in time $T$, if in the worst case it performs at most $T$ elementary operations.

\section{Techniques}\label{section:techniques}
We outline here the main ideas behind \cref{theorem:main-algorithm} and \cref{theorem:lowerbound}. In the interest of clarity, we limit our discussion to $\hsbmt$.

\paragraph{Behavior of the union graph}
The algorithm behind \cref{theorem:main-algorithm} is remarkably simple and leverages  known algorithms for weak recovery of stochastic block models (particularly related to the robust algorithms of \cite{ding2022robust, ding2023node}).
As a first step, to gain intuition,  it is instructive to understand why the union graph instead requires $t\geq \Omega(k^2)$ observations (see \cref{theorem:limits-weak-recovery-union}). 
An instance $\bm \cI$ of $\hsbmt$ is given by a vector $\mathbf{z}\in [k]^n$ and a collection of $t$ independent pairs $(\mathbf{f}_1,\mathbf{G}_1)\,,\ldots\,, (\mathbf{f}_t,\mathbf{G}_t)$, where each $\mathbf{G}_\ell$ is sampled from $\sbmx{\mathbf{f}_\ell(\mathbf z)}\,.$ Since, by definition, each edge $\{i,j\}$ appears in $\mathbf{G}_\ell$ with probability $\displaystyle\Paren{1+\frac{\epsilon}{2}\cdot\mathbf{f}_\ell(\mathbf{z})_i\cdot \mathbf{f}_\ell(\mathbf{z})_j}\cdot\tfrac{d}{n}\,,$ in the union graph $\bigcup_{\ell\in [t]}\mathbf{G}_\ell$ the same edge appears roughly with probability
\begin{align*}
    \frac{d}{n}\cdot \sum_{\ell\in [t]}\Paren{1+\frac{\epsilon}{2}\cdot\mathbf{f}_\ell(\mathbf{z})_i\cdot \mathbf{f}_\ell(\mathbf{z})_j}\,.
\end{align*}
The intuition here is that if $\mathbf{z}_i\neq \mathbf{z}_j$ the second term in the sum would be close to $0$, while for $\mathbf{z}_i= \mathbf{z}_j$ it would be $\frac{\eps t}2\,.$
Hence, we will see this edge in the union graph roughly with probability 
\begin{align*}
    &\frac{d t}{n} \cdot (1+O(\eps)\cdot \bracbb{\mathbf{z}_i= \mathbf{z}_j})\,.
\end{align*}
That is, the union graph behaves similarly to a vanilla stochastic block model with $k$ communities, bias $O(\eps)$ and expected degree $d t\,.$
As stated in the introduction, existing efficient algorithms can achieve weak recovery for that distribution whenever $(d t) \eps^2/k^2>\Omega(1)\,,$ implying $t\geq \Omega(k^2)$ in the regime $4<d\eps^2\leq O(1)$ where weak recovery for each observation is possible.

\paragraph{Amplifying the signal-to-noise ratio via black-box estimators}
The above approach of taking the union graph and then running community detection on it yields sub-optimal guarantees because the graph does not keep all information regarding the instance $\bm \cI\,.$
Our strategy to overcome this issue is to proceed in the reverse order: \textit{first} extract as much information as possible from each graph, and \textit{then} combine the data.
Concretely, in the context of $\hsbmt$, our plan is to accurately estimate the matrix $\mathbf{f}_\ell(\mathbf{z})\transpose{\mathbf{f}_\ell(\mathbf{z})}$ for each graph $\mathbf{G}_\ell\,.$
Indeed, the polynomial $\sum_{\ell \in [t]}\mathbf{f}_\ell(\mathbf{z})_i\mathbf{f}_\ell(\mathbf{z})_j$ strongly correlates with $\bracbb{\mathbf{z}_i=\mathbf{z}_j}$ in the sense that
\begin{align*}
    t=&\E\Brac{\sum_{\ell \in [t]}\mathbf{f}_\ell(\mathbf z)_i\mathbf{f}_\ell(\mathbf z)_j\given \mathbf z_i=\mathbf z_j}
    >\E\Brac{\sum_{\ell \in [t]}\mathbf{f}_\ell(\mathbf z)_i\mathbf{f}_\ell(\mathbf z)_j\given \mathbf z_i\neq \mathbf z_j}=0\,.
\end{align*}
In other words, we can accurately estimate whether $\mathbf{z}_i=\mathbf{z}_j$ or not by accurately estimating the products $\sum_{\ell \in [t]}\mathbf{f}_\ell(\mathbf z)_i\mathbf{f}_\ell(\mathbf z)_j\,.$
Now, we do not have access to the functions $\mathbf{f}_\ell(\mathbf{z})$ but we can hope (see the subsequent paragraphs) that existing weak recovery algorithms can provide a close enough estimate in the sense 
\begin{align}
    t\cdot C_{d,\eps}&=\E\Brac{\sum_{\ell \in [t]}\hat{\mathbf{x}}(\mathbf{G}_\ell)_i\hat{\mathbf{x}}(\mathbf{G}_\ell)_j\given \mathbf z_i=\mathbf z_j}\label{eq:techniques-separation}>\E\Brac{\sum_{\ell \in [t]}\hat{\mathbf{x}}(\mathbf{G}_\ell)_i\hat{\mathbf{x}}(\mathbf{G}_\ell)_j\given \mathbf z_i\neq \mathbf z_j}=0\,.
\end{align}
If so, we may simply decide whether $i,j$ should be clustered together based on how large $\sum_{\ell \in [t]}\hat{\mathbf{x}}(\mathbf{G}_\ell)_i\hat{\mathbf{x}}(\mathbf{G}_\ell)_j$ is.
By independence of the observations, standard concentration of measure results tell us that
$\Omega(\log (n)/C_{d,\eps}^2)$ observations\footnote{This is better than $\Omega(k^2)$ as long as $k\geq \Omega(\sqrt{\log n})$.} would suffice to \textit{exactly} predict all the $n^2$ pairs (and hence achieve exact recovery with this number of observations).

\paragraph{Improvements via neighborhoods intersection}
Continuing with the above line of thinking, one can further improve the dependency on $t$ to $t=\Theta(\log k)$ as promised in \cref{theorem:main-algorithm}.
For a label $p\in [k]\,,$ let $c_p(z)\in \Set{0,1}^n$ be the indicator vector of the corresponding community.
The improvement comes from observing that for $\ell\neq \ell'$ and for any \textit{typical} labelling $z$ (i.e. a labelling that is approximately balanced), we have large separation between the community indicator vectors $\Snorm{c_p(z)-c_{p'}(z)}\geq \Omega(n/k)\,.$
The crucial consequence is that for a fixed index $i\in[n]$, we do not need to guess correctly $\bracbb{z_i=z_j}$ for all $j$ and  we may misclassify some pairs. Indeed if ${A}_i\in \Set{0,1}^n$ is a vector with entries $({A}_i)_j$ that accurately predicts $\bracbb{z_i=z_j}$ up to a $\rho < n/k$ misclassification error, then we can deduce whether $i,j$ come from the same community by verifying if ${A}_i$ and ${A}_j$ agree on the majority of their entries. Concretely, by the reverse triangle inequality it holds that
\begin{align*}
    &\Abs{\Norm{{A}_i-{A}_j}-\Norm{c_p(z)-c_{p'}(z)}} \leq \Norm{c_p(z)-{A}_i} + \Norm{c_{p'}(z)-{A}_j}\leq O(n/k)\,.
\end{align*}
That is, we are still able to exactly deduce whether $i,j$ sit in the same community!
The improvement over $t$ then comes as $O(\log(k)/C_\delta^2)$ observations suffice to bound the misclassification error by $n/k$ times a tiny constant.
Finally, we remark that we are bound to misclassify some vertices as for some vertices $i$ the estimator vector $\mathbf{A}_i$ will not accurately represent its community when $t\leq O(\log(k)/C_\delta^2)$ (this is due to the well-known gap between weak recovery and exact recovery in the vanilla stochastic block model \cite{abbe2017community}).

\paragraph{The pair-wise weak recovery estimator}
So far, we glossed over the fact that we do not have an algorithm returning an accurate estimate $\hat{\mathbf{x}}(\mathbf{G}_\ell)\transpose{\hat{\mathbf{x}}(\mathbf{G}_\ell)}$ of $\mathbf{f}_\ell(\mathbf{z})\transpose{\mathbf{f}_\ell(\mathbf{z})}$ given the graph $\mathbf{G}_\ell\,.$
Notice that for a pair $(\mathbf{x},\mathbf{G})\sim \sbm$, an algorithm achieving weak recovery returns a vector $\hat{\mathbf{x}}(\mathbf{G})\in \Set{\pm 1}^n$ such that
\begin{align*}
    \Omega(n^2)&\leq \E \Brac{\iprod{\hat{\mathbf{x}}(\mathbf{G}), \mathbf{x}}^2}\leq \E \Brac{\iprod{\dyad{\hat{\mathbf{x}}(\mathbf{G})}, \dyad{\mathbf{x}}}}
\end{align*}
which is enough to obtain the separation required in \cref{eq:techniques-separation}. Indeed, the above implies that on average
\begin{align*}
    \E \Brac{\hat{\mathbf{x}}(\mathbf{G})_{\mathbf i}\hat{\mathbf{x}}(\mathbf{G})_{\mathbf j} \given \mathbf{x}_{\mathbf i} = \mathbf{x}_{\mathbf j}} - \E \Brac{\hat{\mathbf{x}}(\mathbf{G})_{\mathbf i}\hat{\mathbf{x}}(\mathbf{G})_{\mathbf j} \given \mathbf{x}_{\mathbf i} \neq \mathbf{x}_{\mathbf j}}
    \geq \Omega_{d,\eps}(1)\,,
\end{align*}
which is enough to carry out the strategy outlined in the previous paragraphs.
Notice that, a priori, it is not clear whether these estimators should work for $\hsbmt$ as the model introduces some subtle difficulties compared to $\sbm\,.$ Most importantly, the labels  in $\mathbf{f}_\ell(\mathbf{z})$ --and hence the edges in $\mathbf{G}_\ell$-- are \textit{not} pair-wise independent.

We bypass this obstacle carrying out the analysis \textit{after} conditioning on the choice of $\mathbf{f}_\ell$, so that, even though the communities may be unbalanced, the edges are again independent.
Now, for highly unbalanced communities the expected degree of each vertex is an accurate predictor for its community, hence weak recovery is easy to achieve. On the other hand, one can treat slightly unbalanced communities as \textit{perturbed} balanced communities and apply the node-robust algorithm of \cite{ding2023node}.

\begin{remark}[Connection with \cite{liu2022statistical}]
Liu, Moitra and Raghavendra studied a joint distribution model over hypergraphs with independence edges.
In the special case of graphs, their model can be seen as a simpler version of \cref{model:heterogeneous-sbm} in which every $\mathbf{f}_\ell(\cdot)$ is \textit{known}, and each $d_\ell$ is a constant.
For this model, \cite{liu2022statistical} beats random guessing: They produce a unit vector that correlates with the community vector better than a random vector guess would. However, they provide no rounding strategy.
The algorithmic techniques in \cite{liu2022statistical} are very different from ours and do not imply a result of the form of \cref{theorem:main-algorithm} for \cref{model:heterogeneous-sbm}.
\end{remark}

\paragraph{Information theoretic lower bounds for black-box algorithms}

The proof of \cref{theorem:lowerbound} consists of two main steps. In the first step, we bound how much information an estimate $\hat{\mathbf{X}}_\ell$ with pair-wise correlation $C_\ell$ can reveal about $\mathbf{z}$. We show that this can be bounded (in terms of mutual information) as $I(\mathbf{z};\hat{\mathbf{X}}_\ell)\leq O(C_\ell\cdot n)$. In the second step we use an adapted version of Fano's inequality to show that if $\hat{\mathbf{z}}$ is an estimate of $\mathbf{z}$ achieving \eqref{equation:z-weak-recovery-in-lower-bound-theorem}, then $\hat{\mathbf{z}}$ must have at least $\Omega(n\cdot\log k)$ bits of information about $\mathbf{z}$, i.e., $I(\mathbf{z};\hat{\mathbf{z}})\geq \Omega(n\cdot\log k)$.

Now if $\hat{\mathbf{z}}$ is obtained by only processing $(\hat{\mathbf{X}}_\ell)_{\ell\in[t]}\,,$ then by using the data processing inequality, we can show that
\begin{align*}
    \Omega(n\cdot\log k)&\leq \sum_{t\in[t]}I(\mathbf{z};\hat{\mathbf{X}}_\ell)\leq \sum_{t\in[t]}O(C_\ell\cdot n)\leq O(C_\cT\cdot t\cdot n)\,,
\end{align*}
where $C_\cT$ is the average of the correlations $(C_\ell)_{\ell\in[t]}\,.$ From this we deduce that we must have $t\geq\Omega\Paren{\frac{\log k}{C_\cT}}\,.$

\section{Specialized weak recovery for vanilla SBMs}\label{section:specialized-weak-recovery}


General weak recovery results \cite{abbe2017community} for stochastic block models turn out to be too weak for our objective. 
We rely instead on the following stronger statement, which we obtain by exploiting the robust algorithms of \cite{ding2022robust}, \cite{ding2023node}, and which also works down to the Kesten-Stigum threshold. 
This specialized estimator will be used in the main algorithm behind \cref{theorem:main-algorithm}.

\begin{theorem}[Pair-wise weak recovery for unbalanced $2$ communities stochastic block model]\label{theorem:weak-recovery-sbm}
Let $n,d,\eps>0$ be satisfying $d \eps^2/4 -1>0\,.$ 
Let $\mathbf{x}=(\mathbf{x}_i)_{i\in [n]}\in \set{\pm 1}^n$ be a sequence of i.i.d. binary random variables with $\bbP(\mathbf{x}_i=+1)=p\,.$
There exists a polynomial time algorithm such that, on input $\mathbf{G}\sim\sbm(\mathbf{x})$, returns with probability $1-o(1)$ a matrix $\hat{\mathbf X}(\mathbf{G})\in [-1\,, +1]^{n\times n}$ satisfying $\forall i,j \in[n]$
\begin{align*}
    C_{d,\eps}\leq \E\Brac{\hat{\mathbf X}(\mathbf{G})_{\ij}\suchthat \mathbf{x}_i=\mathbf{x}_j} - \E\Brac{\hat{\mathbf X}(\mathbf{G})_{\ij}\suchthat \mathbf{x}_i\neq \mathbf{x}_j}
\end{align*}
for some constant $C_{d,\eps}>0\,.$ 
\end{theorem}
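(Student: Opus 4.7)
My plan is to produce a vector $\hat{\mathbf{x}}(\mathbf{G})\in\{\pm 1\}^n$ and take $\hat{\mathbf{X}}(\mathbf{G}) := \hat{\mathbf{x}}(\mathbf{G})\hat{\mathbf{x}}(\mathbf{G})^\top$, so the entries trivially lie in $\{\pm 1\}\subseteq[-1,+1]$. The central technical device is to condition on the whole label vector $\mathbf{x}$: under this conditioning the edges of $\mathbf{G}$ become mutually independent Bernoullis, which lets me reuse analyses tailored to the (possibly corrupted) balanced SBM. By permutation equivariance of the algorithm and exchangeability of $(\mathbf{x}_i)$, the conditional expectation $\mathbb{E}[\hat{\mathbf{X}}_{ij}\mid \mathbf{x}_i,\mathbf{x}_j]$ depends only on the values of $\mathbf{x}_i,\mathbf{x}_j$; the desired pair-wise separation then reduces to the global squared correlation $\mathbb{E}\langle \hat{\mathbf{x}}(\mathbf{G}),\mathbf{x}\rangle^2 \geq \Omega_{d,\epsilon}(n^2)$, provided $p(1-p)$ is bounded away from $0$ (which is all that is needed since the theorem is vacuous at $p\in\{0,1\}$).

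\paragraph{Two regimes.} I split on $p$ with a constant threshold $\eta_0=\eta_0(d,\epsilon)>0$. In the \emph{nearly balanced} regime $|p-1/2|\leq \eta_0$, Chernoff guarantees that the two community sizes are each within $2\eta_0 n$ of $n/2$; I view the instance as a planted balanced SBM in which an adversary has flipped the labels of $O(\eta_0 n)$ vertices, and invoke the node-robust weak recovery algorithm of \cite{ding2023node}, which above the Kesten--Stigum threshold tolerates an $\eta$-fraction of node corruption for some $\eta(d,\epsilon)>0$. Choosing $\eta_0<\eta(d,\epsilon)/2$ yields $\hat{\mathbf{x}}(\mathbf{G})$ with $\mathbb{E}\langle\hat{\mathbf{x}},\mathbf{x}\rangle^2\geq\Omega_{d,\epsilon}(n^2)$. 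In the \emph{unbalanced} regime $|p-1/2|>\eta_0$, a direct calculation shows that the expected degrees of $+1$- and $-1$-vertices differ by $d\epsilon|2p-1|\geq d\epsilon\eta_0$; at the Kesten--Stigum threshold this is of the same order as the Poisson fluctuation $\sqrt{d}$, so a degree-threshold classifier $\hat{\mathbf{x}}_i=\mathrm{sign}(\deg_{\mathbf{G}}(i)-\tau)$, with $\tau$ at the empirical median degree, satisfies $\Pr[\hat{\mathbf{x}}_i=\mathbf{x}_i\mid\mathbf{x}_i]\geq 1/2+\Omega_{d,\epsilon}(1)$ by Poisson anti-concentration, again giving the required global squared correlation.

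\paragraph{Selecting without knowing $p$.} Since the algorithm does not know $p$, I either estimate $|p-1/2|$ from a coarse statistic that is a monotone function of the imbalance (for instance the spectral gap of the non-backtracking matrix, or the skewness of the empirical degree distribution) and then branch between the two subroutines, or run both in parallel and output the partition attaining the larger SBM log-likelihood. Either choice keeps the overall algorithm polynomial time.

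\paragraph{Main obstacle.} The step I expect to require the most care is the unbalanced regime with $d$ a bounded constant: the pure degree thresholder only achieves a $\Omega(1)$ advantage when the Poisson signal-to-noise ratio $\sqrt{d}\,\epsilon|2p-1|$ is $\Omega(1)$, which fails uniformly as $|p-1/2|\downarrow\eta_0$. Near that transition I may have to strengthen the classifier by adding short non-backtracking walk statistics around each vertex (in the spirit of the Kesten--Stigum algorithm of \cite{mossel2018proof}) to boost the per-vertex correlation until it matches the robust-algorithm output in the balanced regime. A secondary, but nontrivial, point is verifying that the corruption budget of \cite{ding2023node} is strictly positive for every pair $(d,\epsilon)$ with $d\epsilon^2/4>1$, which requires an inspection of how their guarantee degrades with the Kesten--Stigum slack.
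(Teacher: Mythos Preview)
Your proposal is correct and follows essentially the same route as the paper: split on the imbalance $\gamma=|p-\tfrac12|$, use the node-robust algorithm of \cite{ding2023node} in the nearly-balanced regime by viewing the imbalance as a node corruption, and use a degree-based classifier in the unbalanced regime; then select the regime without knowing $p$ and pass to per-pair guarantees by permutation symmetry.

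Two small remarks where the paper is a bit cleaner. First, your ``main obstacle'' is not one: once $\eta_0=\eta_0(d,\eps)$ is fixed, for every $|p-\tfrac12|\ge\eta_0$ the degree signal-to-noise ratio $\sqrt{d}\,\eps\,|2p-1|\ge 2\sqrt{d}\,\eps\,\eta_0$ is already a positive constant depending only on $(d,\eps)$ (recall $\sqrt{d}\,\eps>2$ above Kesten--Stigum), so the degree estimator yields pair-wise correlation $\Omega_{d,\eps}(\eta_0^2)$ uniformly---no non-backtracking boost is needed. The paper additionally makes the two regimes overlap (balanced works for $\gamma\le\mu_\delta'$, unbalanced for $\gamma\ge\mu_\delta'/2$), so the regime test only needs to place $\gamma$ on the correct side of a factor-two window. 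Second, for that test the paper uses a more direct statistic than a spectral gap or a likelihood comparison: it samples a random set $\mathbf I$ of $n^{3/4}$ vertices, observes that the expected number of edges from $\mathbf I$ to its complement is $(1\pm o(1))\,dn^{3/4}(1+2\eps\gamma^2)$, thresholds this count to decide the regime, and then runs the chosen subroutine on the induced subgraph on $[n]\setminus\mathbf I$. The explicit random-permutation symmetrization in the paper is exactly what justifies your appeal to permutation equivariance.
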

Notice that the algorithm only receives $\mathbf{G},d,\eps$ and hence it \textit{does not} know $p\,.$ Note also that in the above, if one of the events $\mathbf{x}_i=\mathbf{x}_j$ or $\mathbf{x}_i\neq\mathbf{x}_j$ has probability 0 (i.e., if $p=0$ or $p=1$), then we adopt the convention that the corresponding conditional expectation is 0.

We defer the proof of \cref{theorem:weak-recovery-sbm} to \cref{section:deferred-proofs}.

\section{The algorithm}\label{section:algorithm}
In this section we prove \cref{theorem:main-algorithm}.
We start by describing the underlying algorithm. Throughout the section, for a given $t\,,$ we assume  $\cT=\set{(d_\ell,\eps_\ell)}_{\ell=1}^t$ to be a sequence of $t$ tuples $(d_\ell,\epsilon_\ell)$ each satisfying $d_\ell\cdot \epsilon^2_\ell/4-1>0\,.$ 
For each $\ell\in[t]$, let $C_{\ell}$ be the weak-recovery constant that is achievable for $\SBM_{n,d_\ell,\epsilon_\ell}$ in the sense of \Cref{theorem:weak-recovery-sbm} (recall this constant depends only on $d_\ell,\eps_\ell$) and let
$$\Cbar=\frac1t\sum_{\ell\in[t]}C_{\ell}>0\,.$$
We also assume $k\leq n^{1-\Omega(1)}\,.$




\begin{algorithm}[ht]
   \caption{Community detection for multi-view stochastic block models}
   \label{algorithm:heterogeneous-models}
\begin{algorithmic}
   \STATE {\bfseries Input:}  $ k, G_1,\ldots,G_t\,.$
    \STATE {\bfseries Output:} Community vector $\hat{\mathbf{z}}$ in $[k]^n\,.$ 
    \STATE {\bfseries For} each $G_\ell$ with $\ell\in [t]$, run the community detection algorithm of \cref{theorem:weak-recovery-sbm}.
    \STATE Construct the directed graph $\mathbf{F}$ on the vertex set $[n]$ as follows.
    \FOR{$i=1$ {\bfseries to} $n$} 
    \STATE Add an outgoing edge to the $n/k$ vertices $j\in [n]$ with largest $\sum_{\ell \in [t]} \hat{\mathbf{X}}(G_\ell)_\ij\,.$
    \ENDFOR
    \STATE Run \cref{algorithm:second-moment-rounding} on the adjacency matrix $\mathbf{A}$ of $\mathbf{F}$ and return the resulting vector.
\end{algorithmic}
\end{algorithm}

\begin{remark}[Running time]
By \cref{theorem:weak-recovery-sbm}, the first step of the algorithm takes time $O\Paren{t\cdot n^{O(1)}}\,.$
Computing the values of $\sum_{\ell \in [t]} \hat{\mathbf{X}}(G_\ell)_\ij$ for all pairs takes time $O(t\cdot n^2)\,.$  Drawing the edges takes time $O(n^2\log n)$.
Hence the algorithm runs in time $O\Paren{(tn)^{O(1)} + T}$ where $T$ is the running time of \cref{algorithm:second-moment-rounding}.
\end{remark}

\paragraph{Graph structure on balanced instances}
\cref{algorithm:heterogeneous-models} will work on typical instances from $\hsbm$. In particular, it will work on instances that are approximately balanced, as described below.

\begin{definition}[Balanced vector]
Let $z\in[k]^n\,.$
We say that $z$ is balanced if for all $p \in [k]\,,$ it holds 
$$\Paren{1-n^{-\Omega(1)}}\frac{n}{k}\leq \Snorm{c_p(z)}\leq \Paren{1+n^{-\Omega(1)}}\frac{n}{k}\,.$$
If $z$ is balanced we also say that $\bm \cI\sim \hsbm(z)$ is balanced.
\end{definition}
It is immediate to see that a random instance $(\mathbf{z},(\mathbf{f_1,\mathbf{G}_1}),\ldots(\mathbf{f}_t,\mathbf{G}_t))\sim\hsbm$ is balanced with high probability.

\begin{fact}[Probability of balanced instance]\label{fact:balanced-instance}
Let $\bm \cI:=(\mathbf{z},(\mathbf{f_1,\mathbf{G}_1}),\ldots(\mathbf{f}_t,\mathbf{G}_t))\sim\hsbm,.$ Then, with probability $1-n^{-10}\,,$ $\bm \cI$ is balanced.
\end{fact}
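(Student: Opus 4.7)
The plan is to show that the balanced condition holds with high probability by a straightforward Chernoff plus union bound argument, invoking the assumption $k\leq n^{1-\Omega(1)}$ stated earlier in the section.

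First, I would observe that being balanced is a property only of the label vector $\mathbf{z}$, so the mappings $\mathbf{f}_\ell$ and the graphs $\mathbf{G}_\ell$ are irrelevant to this fact. Since $\mathbf{z}$ is drawn uniformly at random from $[k]^n$, each coordinate $\mathbf{z}_i$ is independently uniform over $[k]$. Therefore, for every fixed label $p\in[k]$, the quantity $\Snorm{c_p(\mathbf{z})}=\sum_{i\in[n]}\bracbb{\mathbf{z}_i=p}$ is a sum of $n$ i.i.d.\ Bernoulli$(1/k)$ random variables, and in particular $\E\Snorm{c_p(\mathbf{z})}=n/k$.

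Next, I would apply a standard multiplicative Chernoff bound: for any $\delta\in(0,1)$,
\begin{align*}
\bbP\Paren{\Bigabs{\Snorm{c_p(\mathbf{z})}-\tfrac{n}{k}}\geq \delta\cdot \tfrac{n}{k}}\leq 2\exp\Paren{-\tfrac{\delta^2 n}{3k}}\mper
\end{align*}
By the assumption $k\leq n^{1-\Omega(1)}$, we have $n/k\geq n^{\Omega(1)}$, so there exists a constant $\alpha>0$ (depending on the hidden exponent) such that choosing $\delta=n^{-\alpha}$ gives $\delta^2 n/k\geq n^{\Omega(1)}\geq 100\log n$ for $n$ large enough. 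In particular, the right-hand side is bounded by $n^{-20}$.

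Finally, I would take a union bound over the $k\leq n$ choices of $p$, obtaining that with probability at least $1-k\cdot n^{-20}\geq 1-n^{-10}$, every $p\in[k]$ satisfies $(1-n^{-\alpha})\tfrac{n}{k}\leq \Snorm{c_p(\mathbf{z})}\leq (1+n^{-\alpha})\tfrac{n}{k}$, which is exactly the balanced condition with $n^{-\Omega(1)}=n^{-\alpha}$. There is no real obstacle here; the only mild subtlety is making sure the exponent in $n^{-\Omega(1)}$ in the definition of balanced can be taken consistently with the assumed exponent in $k\leq n^{1-\Omega(1)}$, which is handled simply by picking $\alpha$ small enough compared to the gap $1-\log_n k$.
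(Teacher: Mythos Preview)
Your proposal is correct and takes essentially the same approach as the paper: apply a multiplicative Chernoff bound to each community size $\Snorm{c_p(\mathbf z)}$ (a sum of i.i.d.\ Bernoulli$(1/k)$ variables), use the standing assumption $k\leq n^{1-\Omega(1)}$ to ensure the deviation is $n^{-\Omega(1)}$, and union bound over the $k$ labels. The paper's proof is a terse one-liner with the explicit deviation $\sqrt{400k(\log n)/n}$, but the content is identical to what you wrote.
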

The proof of \cref{fact:balanced-instance} is straightforward and we defer it to \cref{section:deferred-proofs}.

On balanced instances with sufficiently many observations, the adjacency matrix $\mathbf{A}$ of $\mathbf{F}$ becomes a good approximation of the true community matrix $\sum_{i\in [k]} \dyad{c_i(z)}$.

\begin{lemma}[Graphs structure from good instances]\label{lemma:structure-of-graph}
Let $n,k, t>0$. Let $\bm \cI:=(\mathbf z,(\mathbf f_1,\mathbf G_1),\ldots(\mathbf f_t,\mathbf G_t))$. Then \cref{algorithm:heterogeneous-models} constructs an adjacency matrix $\mathbf{A}$ such that $\forall p \in [k]\,, \forall i \in [n]$
\begin{align*}
    \E&\Brac{\Snorm{\mathbf{A}_i-c_p(\mathbf{z})}\given  \mathbf{z}_i=p}\leq O(n)\cdot \Paren{n^{-\Omega(1)} +e^{-\Omega(\bar{C}^2\cdot t)}}\,. 
\end{align*}
\end{lemma}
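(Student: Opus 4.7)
The plan is to show that $\mathbf{A}_i$ is close in Hamming distance to $c_p(\mathbf z)$. Both $\mathbf A_i$ and $c_p(\mathbf z)$ are $\set{0,1}$-valued, so $\Snorm{\mathbf A_i - c_p(\mathbf z)} = |\mathbf B_i \symdiff C_p|$, where $\mathbf B_i := \set{j : \mathbf A_{ij} = 1}$ is the set of $n/k$ vertices selected by \cref{algorithm:heterogeneous-models} and $C_p := \set{j : \mathbf z_j = p}$. By \cref{fact:balanced-instance}, $\mathbf z$ is balanced with probability $\geq 1-n^{-10}$, and on the complement $|\mathbf B_i \symdiff C_p| \leq 2n$ trivially. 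Hence it suffices to bound the conditional expectation on balanced instances, where $\bigl||C_p|-n/k\bigr|\leq n^{1-\Omega(1)}/k$.

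The second step is a threshold reduction. Let $\mathbf S_{ij} := \sum_{\ell\in[t]} \hat{\mathbf X}(\mathbf G_\ell)_{ij}$ be the score that the algorithm ranks on, and pick a threshold $T$ (to be chosen). Define the ``misclassified'' sets $\mathbf V := \set{j\in C_p : \mathbf S_{ij}\leq T}$ and $\mathbf W := \set{j\notin C_p : \mathbf S_{ij} > T}$. A short case analysis on whether $|\set{j : \mathbf S_{ij}>T}|$ is at least or less than $n/k$, together with the balancedness bound, will yield $|\mathbf B_i \symdiff C_p| \leq 2(|\mathbf V|+|\mathbf W|) + n^{1-\Omega(1)}/k$. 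So the remaining task is to bound $\E\brac{|\mathbf V|+|\mathbf W|\mid \mathbf z}$ on balanced $\mathbf z$.

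The crux is computing the expected gap of $\mathbf S_{ij}$ between same- and different-community pairs. For each $\ell$, conditioning on $\mathbf f_\ell = f$, the vector $f(\mathbf z)$ is i.i.d.\ Bernoulli, so \cref{theorem:weak-recovery-sbm} applies to $(f(\mathbf z),\mathbf G_\ell)$: setting $\mu^\pm_\ell(f) := \E[\hat{\mathbf X}(\mathbf G_\ell)_{ij}\mid \mathbf f_\ell = f, f(\mathbf z_i)= f(\mathbf z_j)\text{ or not}]$, I obtain $\mu^+_\ell(f)-\mu^-_\ell(f)\geq C_\ell$ whenever $f$ is non-constant. For $j$ with $\mathbf z_j = p$ the event $\mathbf f_\ell(\mathbf z_i) = \mathbf f_\ell(\mathbf z_j)$ is certain, so $\E[\hat{\mathbf X}(\mathbf G_\ell)_{ij}\mid \mathbf z] = \E_{\mathbf f_\ell}[\mu^+_\ell(\mathbf f_\ell)]$. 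For $j$ with $\mathbf z_j = q\neq p$, the indicator $\chi_\ell := \Ind[\mathbf f_\ell(p) = \mathbf f_\ell(q)]$ is Bernoulli$(1/2)$ over $\mathbf f_\ell$, and conditioning on $\chi_\ell=0$ forces $\mathbf f_\ell$ to be non-constant, so
\begin{align*}
&\E[\hat{\mathbf X}(\mathbf G_\ell)_{ij}\mid \mathbf z,\, \mathbf z_j=p] - \E[\hat{\mathbf X}(\mathbf G_\ell)_{ij}\mid \mathbf z,\, \mathbf z_j=q] \\
&\qquad= \E_{\mathbf f_\ell}\bigl[(1-\chi_\ell)(\mu^+_\ell(\mathbf f_\ell)-\mu^-_\ell(\mathbf f_\ell))\bigr] \geq \tfrac{1}{2}C_\ell\,.
\end{align*}
Summing over $\ell$, the expected score of a same-community $j$ exceeds that of a different-community $j$ by at least $t\bar C/2$.

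Finally, given $\mathbf z$ the summands $\hat{\mathbf X}(\mathbf G_\ell)_{ij}$ are independent across $\ell$ (since the pairs $(\mathbf f_\ell, \mathbf G_\ell)$ are independent) and lie in $[-1,1]$. Choosing $T$ to be the midpoint of the two conditional expectations of $\mathbf S_{ij}$, Hoeffding's inequality yields $\Pr[j\in\mathbf V\cup \mathbf W\mid \mathbf z] \leq 2\exp(-\Omega(\bar C^2 t))$ for every single $j$, and linearity of expectation then gives $\E[|\mathbf V|+|\mathbf W|\mid \mathbf z]\leq O(n)\cdot e^{-\Omega(\bar C^2 t)}$. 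Combined with the earlier threshold- and balancedness-reductions, this yields the lemma. The hard part will be the third step: averaging \cref{theorem:weak-recovery-sbm}'s guarantee correctly over the random $\mathbf f_\ell$ and identifying exactly when a non-trivial gap is asserted (the key observation being that $\chi_\ell=0$, which has probability $1/2$, rules out the constant $\mathbf f_\ell$ that would otherwise trivialize the guarantee). The remaining steps are standard threshold-concentration arguments.
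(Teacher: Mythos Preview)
Your proof is correct and takes essentially the same approach as the paper: reduce to balanced $\mathbf{z}$, compare the top-$n/k$ row $\mathbf{A}_i$ to a thresholded indicator vector (the paper's $\mathbf{A}_i'$, with threshold $C^*-\bar{C}t/3$), and bound the threshold error via the gap-plus-Hoeffding argument you spell out, which the paper packages separately as \cref{fact:concentration-edges} and proves in the appendix by exactly your $\chi_\ell$-is-Bernoulli$(1/2)$ computation. The only cosmetic difference is that your top-$n/k$-versus-threshold reduction is a two-case analysis on $\card{\set{j:\mathbf{S}_{ij}>T}}$, whereas the paper writes it as the one-line $\ell_1$ inequality $\normo{\mathbf{A}_i-\mathbf{A}_i'}\le\bigabs{\normo{\mathbf{A}_i'}-n/k}$ followed by the triangle inequality.
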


To prove \cref{lemma:structure-of-graph} we require an intermediate step.

\begin{fact}\label{fact:concentration-edges}
Consider the setting of \cref{lemma:structure-of-graph}. There exists a constant $C^*\in [-t,t]$ such that, for $i,j\in [n]$,
\begin{align*}
    &
    \resizebox{0.47\textwidth}{!}{$\displaystyle
    \bbP\Paren{\sum_{\ell \in [t]} \hat{\mathbf{X}}(\mathbf G_{\ell})_\ij < C^*- \frac{\Cbar\cdot t}{3}\suchthat  \mathbf{z}_i= \mathbf{z}_j}
    \leq  e^{-\Omega\Paren{\Cbar^2\cdot t}}\,,
    $}\\
    &
    \resizebox{0.47\textwidth}{!}{$\displaystyle
    \bbP\Paren{\sum_{\ell \in [t]} \hat{\mathbf{X}}(\mathbf G_{\ell})_\ij\geq C^* -\frac{\Cbar\cdot t}{3}\suchthat  \mathbf{z}_i\neq \mathbf{z}_j}\leq e^{-\Omega\Paren{\Cbar^2\cdot t}}\,.
    $}
\end{align*}
\end{fact}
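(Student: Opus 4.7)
The plan is to apply Hoeffding's inequality to the bounded, conditionally independent summands $\Delta_\ell:=\hat{\mathbf{X}}(\mathbf{G}_\ell)_{ij}\in[-1,1]$ centered at $C^*:=\tfrac12\sum_\ell(E_\ell^++E_\ell^-)\in[-t,t]$, where $E_\ell^\pm:=\E[\Delta_\ell\mid\mathbf{z}_i=\mathbf{z}_j \text{ resp.\ } \neq]$. The proof reduces to two subclaims: (a) a per-view pair-wise gap $E_\ell^+-E_\ell^-\geq\Omega(C_\ell)$, which implies $\sum_\ell E_\ell^\pm\geq C^*\pm\Omega(\Cbar\cdot t)$; and (b) concentration of $\sum_\ell\Delta_\ell$ around $\sum_\ell\mu_\ell(\mathbf{z})$ together with the closeness $\sum_\ell\mu_\ell(\mathbf{z})\approx\sum_\ell E_\ell^\pm$ for typical (balanced) $\mathbf{z}$, where $\mu_\ell(\mathbf{z}):=\E[\Delta_\ell\mid\mathbf{z}]$.

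For (a), fix $\ell$ and condition on $\mathbf{f}_\ell$. Then $\mathbf{f}_\ell(\mathbf{z})$ is i.i.d.\ Bernoulli$(p_\ell)$ with $p_\ell:=|\mathbf{f}_\ell^{-1}(+1)|/k$ and $\mathbf{G}_\ell\sim\SBM_{n,d_\ell,\eps_\ell}(\mathbf{f}_\ell(\mathbf{z}))$, so \cref{theorem:weak-recovery-sbm} yields the label-conditional pair-wise correlation $A_\ell(\mathbf{f}_\ell)-B_\ell(\mathbf{f}_\ell)\geq C_\ell$ where $A_\ell$ and $B_\ell$ condition on $\mathbf{f}_\ell(\mathbf{z})_i=\mathbf{f}_\ell(\mathbf{z})_j$ and $\neq$ respectively. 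To pass from this to the $\mathbf{z}$-conditional gap, I would expand both $E_\ell^\pm\mid\mathbf{f}_\ell$ in the four value-conditional expectations $V^{\sigma\tau}_\ell(\mathbf{f}_\ell):=\E[\Delta_\ell\mid\mathbf{f}_\ell(\mathbf{z})_i=\sigma,\mathbf{f}_\ell(\mathbf{z})_j=\tau,\mathbf{f}_\ell]$, noting that $\mathbf{z}_i=\mathbf{z}_j$ forces $\mathbf{f}_\ell(\mathbf{z})_i=\mathbf{f}_\ell(\mathbf{z})_j$ with conditional weights $(p_\ell,1-p_\ell)$ on $(++,--)$, while the combinatorics of $\mathbf{z}_i\neq\mathbf{z}_j$ gives weights of the form $p_\ell(kp_\ell-1)/(k-1)$ etc.\ on the four sign patterns. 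A short Bayes simplification identifies
\[
E_\ell^+-E_\ell^-\mid\mathbf{f}_\ell=\frac{2kp_\ell(1-p_\ell)}{k-1}\Bigl(\tfrac12(V^{++}_\ell+V^{--}_\ell)-B_\ell(\mathbf{f}_\ell)\Bigr),
\]
which equals $\frac{2kp_\ell(1-p_\ell)}{k-1}(A_\ell(\mathbf{f}_\ell)-B_\ell(\mathbf{f}_\ell))$ plus a correction of order $|1-2p_\ell|$ (vanishing at $p_\ell=\tfrac12$); on the Chernoff-typical event $p_\ell\in[\tfrac13,\tfrac23]$ (probability $1-e^{-\Omega(k)}$ over the uniform $\mathbf{f}_\ell$), the leading coefficient is $\Omega(1)$ and the correction is negligible, yielding $E_\ell^+-E_\ell^-\geq\Omega(C_\ell)$.

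For (b), conditional on $\mathbf{z}$ the $\Delta_\ell$'s are independent since the $(\mathbf{f}_\ell,\mathbf{G}_\ell)_\ell$ are, so Hoeffding's inequality gives $\Pr(|\sum_\ell\Delta_\ell-\sum_\ell\mu_\ell(\mathbf{z})|>\Cbar\cdot t/12\mid\mathbf{z})\leq 2e^{-\Omega(\Cbar^2 t)}$. By label-permutation symmetry $\mu_\ell(\mathbf{z})$ depends on $\mathbf{z}$ only through the unlabeled partition structure, and by \cref{fact:balanced-instance} this partition is balanced with probability $1-n^{-10}$; a short stability argument for the algorithm of \cref{theorem:weak-recovery-sbm} (the induced distribution on $\mathbf{f}_\ell(\mathbf{z})$ varies by $n^{-\Omega(1)}$ in total variation across balanced partitions) then gives $|\sum_\ell\mu_\ell(\mathbf{z})-\sum_\ell E_\ell^+|=o(\Cbar\cdot t)$ whenever $\mathbf{z}$ is balanced with $\mathbf{z}_i=\mathbf{z}_j$. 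Combining with (a), on the balanced event $\sum_\ell\Delta_\ell\geq C^*+\Omega(\Cbar t)-\Cbar t/12-o(\Cbar t)>C^*-\Cbar t/3$ with probability $1-e^{-\Omega(\Cbar^2 t)}$; integrating over $\mathbf{z}\mid\mathbf{z}_i=\mathbf{z}_j$ (the $n^{-10}$ unbalanced contribution being absorbed into the error) proves the first inequality, and the second follows symmetrically.

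The main obstacle is step (a): \cref{theorem:weak-recovery-sbm} is stated for i.i.d.\ Bernoulli labels and conditions \emph{strictly} on $\mathbf{x}_i\neq\mathbf{x}_j$, whereas the multi-view event $\mathbf{z}_i\neq\mathbf{z}_j$ retains positive probability on $\mathbf{f}_\ell(\mathbf{z})_i=\mathbf{f}_\ell(\mathbf{z})_j$ (two distinct labels in $[k]$ can be mapped to the same sign by $\mathbf{f}_\ell$); reconciling these two conditionings costs a constant factor in the gap, and the slack between the derived $\Omega(\Cbar t)$ and the stated $\Cbar t/3$ threshold is precisely what accommodates this loss.
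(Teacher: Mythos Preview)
Your overall strategy (pick a center $C^*$, separate the two conditional means via \cref{theorem:weak-recovery-sbm}, then apply Hoeffding to the bounded summands) is the same as the paper's, but the paper reaches the mean separation by a much shorter argument, and your step (a) as written has a genuine gap.

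The paper takes $C^*_\ell:=\E\bigl[\hat{\mathbf X}(\mathbf G_\ell)_{ij}\,\big|\,\mathbf f_\ell(\mathbf z)_i=\mathbf f_\ell(\mathbf z)_j\bigr]$ and $C^*=\sum_\ell C^*_\ell$. For $\mathbf z_i=\mathbf z_j$ it simply invokes Hoeffding (this event forces $\mathbf f_\ell(\mathbf z)_i=\mathbf f_\ell(\mathbf z)_j$ for every $\ell$). For $\mathbf z_i\neq\mathbf z_j$, the only observation needed is that $\bbP\bigl(\mathbf f_\ell(\mathbf z)_i=\mathbf f_\ell(\mathbf z)_j\,\big|\,\mathbf z_i\neq\mathbf z_j\bigr)=\tfrac12$ (over the uniform $\mathbf f_\ell$, the values $\mathbf f_\ell(\mathbf z_i),\mathbf f_\ell(\mathbf z_j)$ are independent fair bits when $\mathbf z_i\neq\mathbf z_j$), so
\[
\E\bigl[\Delta_\ell\,\big|\,\mathbf z_i\neq\mathbf z_j\bigr]\;=\;\tfrac12\, C^*_\ell\;+\;\tfrac12\,\E\bigl[\Delta_\ell\,\big|\,\mathbf f_\ell(\mathbf z)_i\neq\mathbf f_\ell(\mathbf z)_j\bigr]\;\leq\; C^*_\ell-\tfrac{C_\ell}{2}\,,
\]
directly from the gap guarantee of \cref{theorem:weak-recovery-sbm}. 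Summing over $\ell$ yields a mean gap of $\Cbar t/2$, and one-sided Hoeffding with deviation $\Cbar t/6$ gives both inequalities at the threshold $C^*-\Cbar t/3$. There is no $V^{\sigma\tau}$ decomposition, no typical-$p_\ell$ event, and no correction term to control.

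Your step (a) instead expands in the four value-conditional expectations $V^{\sigma\tau}_\ell$ and compares to $A_\ell-B_\ell$. The exact identity one gets is
\[
E_\ell^+-E_\ell^-\Bigm|\mathbf f_\ell\;=\;\frac{2kp_\ell(1-p_\ell)}{k-1}\Bigl((A_\ell-B_\ell)\;+\;\frac{(V^{++}_\ell-V^{--}_\ell)(1-2p_\ell)}{2\bigl(p_\ell^2+(1-p_\ell)^2\bigr)}\Bigr),
\]
so the correction is proportional to $|1-2p_\ell|\cdot|V^{++}_\ell-V^{--}_\ell|$, not to $|1-2p_\ell|$ alone. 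On your event $p_\ell\in[1/3,2/3]$ you only have $|1-2p_\ell|\leq 1/3$, while \cref{theorem:weak-recovery-sbm} gives no control on $|V^{++}_\ell-V^{--}_\ell|$ (which can be $\Theta(1)$ when $p_\ell\neq\tfrac12$); your bound therefore degrades to $E_\ell^+-E_\ell^-\geq\Omega(C_\ell)-O(1)$, which is vacuous for small $C_\ell$. You would need either an extra bound on $|V^{++}_\ell-V^{--}_\ell|$ or the much tighter concentration $|p_\ell-\tfrac12|\ll C_\ell$ to push this through; the paper's $\tfrac12$--$\tfrac12$ split sidesteps the problem entirely. (Your step (b) stability claim is also only asserted; the paper is equally brief about the Hoeffding step, so this is a shared looseness rather than a point of difference.)
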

We defer the proof of \cref{fact:concentration-edges} to \cref{section:deferred-proofs}.
We are now ready to prove \Cref{lemma:structure-of-graph}.

\begin{proof}[Proof of \cref{lemma:structure-of-graph}]
Fix $i \in [n]\,.$
We can limit our analysis after conditioning on the event  $\cE(\mathbf{z})$  that $\mathbf{z}$ is balanced.
Indeed, we have\begin{align*}
    \E & \Brac{\Snorm{\mathbf{A}_i-c_p(\mathbf{z})}\given  \cE(\mathbf{z})^c\,,\mathbf{z}_i=p}\cdot \bbP\Paren{\cE(\mathbf{z})}\leq O(n)\cdot \bbP\Paren{\cE(\mathbf{z})}\leq n^{-\Omega(1)}\,.
\end{align*}
Let $C^*$ be the constant of \cref{fact:concentration-edges}. Define
$$\mathbf{B}_{ij}=\sum_{\ell\in [t]} \hat{\mathbf{X}}(\mathbf{G}_\ell)_\ij\,,$$
and let $\mathbf{A}'_i=(\mathbf{A}'_{ij})_{j\in [n]}$ be the binary vector defined as
$$\mathbf{A}'_{ij} = \Bracbb{\mathbf{B}_{ij}\geq C^*-\frac{\bar{C}\cdot t}{3}}\,.$$

Notice that $\Norm{\mathbf{A}_i'}_1$ is the number of indices $j$ with $\mathbf{B}_{ij}\geq C^*-\frac{\bar{C}\cdot t}{3}\,.$ Now from the definition of the binary vector $\mathbf{A}_i\,,$ we know that $\mathbf{A}_{ij}=1$ if and only if $\mathbf{B}_{ij}$ is among the top $n/k$ values in $\Set{\mathbf{B}_{ij'}:j'\in[n]}\,.$ From this it is not hard to see that that $\Norm{\mathbf{A}_i-\mathbf{A}_i'}_1$ can be bounded from above by how much $\Norm{\mathbf{A}_i'}_1$ deviates from $n/k\,,$ that is
\begin{align*}
    \Norm{\mathbf{A}_i-\mathbf{A}_i'}_1&\leq \Abs{\Norm{\mathbf{A}_i'}_1-\frac{n}{k}}\,.
\end{align*}
Now assuming that $\cE(\mathbf{z})$ holds, we have 
$$\Norm{c_p(z)}_1=\Norm{c_p(z)}^2= \frac{n}{k}\pm n^{1-\Omega(1)}\,,$$
hence
\begin{align*}
    \Norm{\mathbf{A}_i-\mathbf{A}_i'}_1&\leq \Abs{\Norm{\mathbf{A}_i'}_1-\Norm{c_p(z)}_1}+ n^{1-\Omega(1)}\leq \Norm{\mathbf{A}_i'-c_p(z)}_1+ n^{1-\Omega(1)}\,.
\end{align*}
Since $c_p(\mathbf{z})$ and $\mathbf{A}_i$ are binary vectors, we have
\begin{align*}
    \Norm{\mathbf{A}_i-c_p(z)}^2&= \Norm{\mathbf{A}_i-c_p(z)}_1\leq \Norm{\mathbf{A}_i-\mathbf{A}_i'}_1+\Norm{\mathbf{A}_i'-c_p(z)}_1\,,
\end{align*}
and hence, given $\cE(\mathbf{z})\,,$
\begin{equation}
\label{equation:inequality-between-norms-for-balanced}
    \Norm{\mathbf{A}_i-c_p(z)}^2\leq 2\Norm{\mathbf{A}_i'-c_p(z)}_1+ n^{1-\Omega(1)}\,.
\end{equation}
Now notice that
\begin{align*}
\Abs{(\mathbf{A}'_i)_j-c_p(\mathbf{z})_j}
&=
\resizebox{0.44\textwidth}{!}{$
\bracbb{(\mathbf{A}'_i)_j=1}\bracbb{c_p(\mathbf{z})_j=0}+\bracbb{(\mathbf{A}'_i)_j=0}\bracbb{c_p(\mathbf{z})_j=1}
$}\\
&=
\bracbb{(\mathbf{A}'_i)_j=1}\bracbb{\mathbf{z}_j\neq p}+\bracbb{(\mathbf{A}'_i)_j=0}\bracbb{\mathbf{z}_j= p}\,.
\end{align*}
Using \cref{fact:concentration-edges} and leveraging the fact that $\bbP\Paren{\mathcal{E}(\mathbf{z})^c \given \mathbf{z}_i=p}\leq O\Paren{n^{-10}}$ we get
\begin{align*}
    \E&\Brac{\Abs{(\mathbf{A}'_i)_j-c_p(\mathbf{z})_j}\given\mathcal{E}(\mathbf{z}), \mathbf{z}_i=p}\leq e^{-\Omega\Paren{\Cbar^2\cdot t}}+O\Paren{n^{-10}}\,.
\end{align*}
Combining this with \eqref{equation:inequality-between-norms-for-balanced} we conclude that
\begin{align*}
    \E&\Brac{\Snorm{\mathbf{A}_i-c_p(\mathbf{z})}\given  \mathcal{E}(\mathbf{z}),\mathbf{z}_i=p}\leq O(n)\cdot \Paren{n^{-\Omega(1)} +e^{-\Omega(\bar{C}^2\cdot t)}}\,.
\end{align*}

\end{proof}

\paragraph{Rounding} Thanks to \cref{lemma:structure-of-graph}, an application of the following rounding scheme --sometimes called \textit{second moment rounding}, as one may see $\mathbf{A}$ as an estimate of $\sum_p \dyad{c_p(z)}$-- suffices to compute the true communities.
\begin{algorithm}[ht]
   \caption{Second moment rounding}
   \label{algorithm:second-moment-rounding}
\begin{algorithmic}
   \STATE {\bfseries Input:}  A matrix $A\in \set{0,1}^{n\times n}\,.$
    \STATE {\bfseries Output:} Community vector $\hat{\mathbf{z}}$ in $[n]^k\,.$  
    \STATE Let $S=\emptyset\,.$
    \FOR{$p=1$ {\bfseries to} $k$ (stop earlier if $S=[n]$)}
    \STATE Pick uniformly at random $\mathbf i\in [n]\setminus \mathbf S\,.$ Set $\mathbf S_p =\set{\mathbf i}\,.$
    \FOR{$j\in [n]\setminus  \mathbf S\,, j\neq  \mathbf i$}
    \STATE Set $j\in \mathbf S_p$ if $\Snorm{A_{ \mathbf i}-A_j}\leq \frac{n}{k}\,.$
    \ENDFOR
    \STATE Add $\mathbf{S}_p$ to $ \mathbf S\,.$
    \ENDFOR
    \STATE Assign each $i\in [n]\setminus \mathbf S$ to a set $\mathbf S_{\mathbf p}$, where $\mathbf{p}$ is chosen uniformly at random.
    \STATE {\bfseries Return:} the vector $\hat{\mathbf{z}}$ with $\hat{\mathbf{z}}_i=p$ if and only if $i\in \mathbf S_p\,.$
\end{algorithmic}
\end{algorithm}
\begin{remark}[Running time]
Each step in the outer loop takes time $O(n^2)$. Overall \cref{algorithm:second-moment-rounding} runs in $O(k\cdot n^2)\,.$
\end{remark}

As first step of the proof, we introduce a new definition.

\begin{definition}[Representative row]
Let $z\in [k]^n$, let $c_1(z),\ldots, c_{k}(z)$ be the indicator vectors of the labels in $z$ and let $A^*(z) =\sum_p \dyad{c_p(z)}\,.$ 
For a matrix $A\in \set{0,1}^{n\times n}\,,$ we say $A_i$ is $q$-representative
if 
\begin{align}
    \Snorm{A_i-A^*(z)_i}\leq n\cdot e^{-q\cdot\Cbar^2\cdot t}\,.
\end{align}
We denote by $\cR_{q}$ the set of $q$ representatives of $A\,.$
\end{definition}

The use of representative rows is convenient because, for sufficiently many observations, it is easy to see that rows which are representative of the same community must be close together, while rows that are representative of different communities must be far from each other.

\begin{lemma}\label{lemma:sucess-when-chosing-good-representative}
 Let $n,k,t>0$\,, and let $q>0$ be the hidden constant in \Cref{lemma:structure-of-graph}. Let $t\geq C\frac{\log k}{\Cbar^2}$ for a large enough universal constant $C>0$. Let $z\in [k]^n$ be balanced.
 Suppose that at each iteration of the outer loop, \cref{algorithm:second-moment-rounding} picks some $A_{\mathbf{i}}$ that is a $q$-representative. Then $\max_{\pi\in P_k}\sum_{i\in\cR_q(z)}\bracbb{\hat{\mathbf{z}_i}=\pi(z_i)}=\Card{\cR_q}\,,$ where $P_k$ is the permutation group over $[k]\,.$
\end{lemma}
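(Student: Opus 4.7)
The plan is to first show that, under the hypothesis $t\geq C\log k/\bar C^2$, the threshold $n/k$ used by \cref{algorithm:second-moment-rounding} robustly separates $\Snorm{A_i-A_j}$ according to whether two $q$-representatives $A_i,A_j$ correspond to the same community, and then to argue by induction on the outer loop that every representative is placed in the correct cluster.

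I would begin by recording that the $i$-th row of $A^*(z)$ equals $c_{z_i}(z)$, so by the definition of a $q$-representative we have $\Snorm{A_i-c_{z_i}(z)}\leq n\,e^{-q\bar C^2 t}$. The hypothesis $t\geq C\log k/\bar C^2$ turns this into $n\,k^{-qC}$, which for $C$ large enough is a vanishing fraction of $n/k$. Combined with the triangle inequality, this immediately yields $\Snorm{A_i-A_j}\leq n/k$ whenever $z_i=z_j$ and both rows are $q$-representatives. For the different-community case, I would use that $c_p(z)$ and $c_{p'}(z)$ have disjoint supports, together with the balanced assumption, to deduce $\Snorm{c_{z_i}(z)-c_{z_j}(z)}\geq (2-n^{-\Omega(1)})n/k$; the reverse triangle inequality then gives $\Norm{A_i-A_j}\geq \sqrt{(2-o(1))n/k}-2\sqrt{n\,k^{-qC}}$, which upon squaring strictly exceeds $n/k$ for $C$ large enough.

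With this separation in hand, I would induct on the outer loop. The inductive claim is that after $p$ iterations the sets $\mathbf S_1,\ldots,\mathbf S_p$ correspond to $p$ distinct labels $\pi(1),\ldots,\pi(p)\in[k]$ and each $\mathbf S_{p'}$ contains \emph{exactly} the $q$-representatives of label $\pi(p')$ (possibly together with non-representatives, about which the lemma says nothing). The step uses that, by hypothesis, the index $\mathbf i$ picked at iteration $p+1$ is a $q$-representative, and by the inductive claim its label $z_{\mathbf i}$ lies outside $\set{\pi(1),\ldots,\pi(p)}$, so we may define $\pi(p+1):=z_{\mathbf i}$; by the separation proved above, $\mathbf S_{p+1}$ then contains exactly the remaining $q$-representatives of label $\pi(p+1)$ and none of any other label. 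Extending $\pi$ arbitrarily to a permutation of $[k]$ and taking $\sigma:=\pi^{-1}$ gives $\hat{\mathbf z}_i=\sigma(z_i)$ for every $i\in \cR_q(z)$, which is the claimed equality. The case in which the outer loop halts early because $\mathbf S=[n]$ is handled the same way, since every representative has by then been placed in some $\mathbf S_{p'}$ and the separation forces that placement to be correct.

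The main obstacle is calibrating the constant $C$ so that the slack $n\,k^{-qC}$ inherited from \cref{lemma:structure-of-graph} dominates the $n^{-\Omega(1)}$ slack in the balanced condition uniformly in $k$, including small $k$ such as $k=2$. This is where it matters that the per-row error in \cref{lemma:structure-of-graph} decays as an inverse polynomial of $k$ once $t\geq \Omega(\log k/\bar C^2)$, rather than merely as $o_n(1)$.
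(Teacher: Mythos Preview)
Your proposal is correct and follows essentially the same approach as the paper: both arguments reduce the lemma to showing that for any two $q$-representatives $A_i,A_j$ one has $\Snorm{A_i-A_j}\leq n/k$ iff $z_i=z_j$, and both establish this via the (reverse) triangle inequality combined with $\Snorm{c_p(z)-c_{p'}(z)}\in\{0,(2-o(1))n/k\}$ for balanced $z$. Your explicit induction on the outer loop and your careful calibration of $C$ spell out what the paper compresses into the single sentence ``Then no $q$-representative index remains unassigned at the end of step 1''.
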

We defer the proof of \cref{lemma:sucess-when-chosing-good-representative} to \cref{section:deferred-proofs}.
Now, to prove \cref{theorem:main-algorithm}, it remains to argue that, with high probability, \textit{first} there are few non-representatives in $\mathbf A\,,$ and \textit{second} \cref{algorithm:second-moment-rounding}  picks $q$-representatives at each iteration of the outer loop.

\begin{lemma}\label{lemma:probability-pick-representatives}
 Let $n,k,t>0$\,, and let $q>0$ be the hidden constant in \Cref{lemma:structure-of-graph}. Let $t= C\frac{\log k}{\bar{C}^2}$ for a large enough universal constant $C>0$. Let $\bm \cI:=(\mathbf z,(\mathbf f_1,\mathbf G_1),\ldots(\mathbf f_t,\mathbf G_t))\sim \hsbm\,.$ Let $\mathbf{A}$ be the matrix constructed by \cref{algorithm:heterogeneous-models}. On input $\mathbf{A}$, with probability at least $1-k^{-\Omega(1)}$, the following holds:
 \begin{itemize}
     \item there are at least $n\cdot (1-k^{-\Omega(1)})$ rows in $\mathbf A$ which are $\Omega(q)$-representatives,
     \item  step 1.(a) of \cref{algorithm:second-moment-rounding} only picks $\Omega(q)$-representative vectors.
 \end{itemize}
\begin{proof}
By \cref{fact:balanced-instance} we may assume $\mathbf{z}$ is balanced.
By \cref{lemma:structure-of-graph} and Markov's inequality, with probability $1 - e^{-\Omega(q\cdot\Cbar^2\cdot t)}$ there are at most $O(n)\cdot e^{-\Omega\Paren{q\cdot\Cbar^2\cdot t}}$ indices $\mathbf i\in[n]$ such that $\mathbf{A_i}$ is not a $\Omega(q)$-representative. Note that if $C$ is large enough, then $e^{-\Omega\Paren{q\cdot\Cbar^2\cdot t}}=k^{-\Omega(1)}$. 
At every iteration of the loop, if we pick a representative vector we remove at most $(1+o(1))\frac{n}{k}$ indices, since $\mathbf z$ is balanced. Hence at each iteration there are at least $\frac{n}{k}(1-o(1))$ indices that are $\Omega(q)$-representative and which have not yet been picked.
It follows that the probability we never pick an index that is not $\Omega(q)$-representative is at least $\Paren{1-\frac{n\cdot k^{-\Omega(1)}}{n/k}}^k\geq 1-k^{-\Omega(1)}$ as desired (by choosing $C$ to be large enough).
\end{proof}
\end{lemma}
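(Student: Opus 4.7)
The plan is to combine \cref{lemma:structure-of-graph} with Markov's inequality for the first bullet, and then to union-bound the per-iteration failure probabilities over the at most $k$ iterations of the outer loop of \cref{algorithm:second-moment-rounding} for the second bullet. As a preliminary step I would condition on the high-probability event from \cref{fact:balanced-instance} that $\mathbf{z}$ is balanced, whose complement only contributes $n^{-10}$ to the total failure probability and can be absorbed into the $k^{-\Omega(1)}$ slack.

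For the first bullet, fix $i\in[n]$ and let $p=\mathbf{z}_i$. \cref{lemma:structure-of-graph} gives $\E \Snorm{\mathbf{A}_i-c_p(\mathbf{z})} \le O(n)\cdot(n^{-\Omega(1)}+e^{-\Omega(\bar{C}^2 t)})$, and substituting $t=C\log k/\bar{C}^2$ with $C$ sufficiently large makes the exponential term $k^{-\Omega(C)}$. By the definition of an $\Omega(q)$-representative, row $i$ fails to be representative precisely when $\Snorm{\mathbf{A}_i-c_p(\mathbf{z})} > n\cdot e^{-q'\bar{C}^2 t} = n\cdot k^{-\Omega(1)}$, so a first Markov bounds the per-row failure probability by $k^{-\Omega(1)}$. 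Taking expectations of the (random) count of failing rows and applying Markov once more, I would conclude that with probability $1-k^{-\Omega(1)}$ there are at most $n\cdot k^{-\Omega(1)}$ non-representative rows, proving the first bullet.

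For the second bullet, condition on this event. Each iteration of the outer loop picks $\mathbf{i}$ uniformly at random from $[n]\setminus \mathbf{S}$, and because $\mathbf{z}$ is balanced each ``good'' (i.e., representative) pick removes at most $(1+o(1))\tfrac{n}{k}$ vertices from the pool. Hence at the start of the $p$-th iteration (with $p \le k$) the pool still has size at least $\Omega(n/k)$, assuming the algorithm has not already terminated. Since the total number of non-representatives is at most $n\cdot k^{-\Omega(1)}$, the per-iteration probability of picking a non-representative is $k^{-\Omega(1)}$; union-bounding over the at most $k$ rounds yields total failure at most $k \cdot k^{-\Omega(1)} = k^{-\Omega(1)}$, provided $C$ is chosen large enough that the exponent in $k^{-\Omega(1)}$ strictly exceeds $1$.

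The main obstacle I anticipate is threading the constants consistently. The constant hidden by ``$\Omega(q)$'' in the definition of representativeness must be small enough that \cref{lemma:structure-of-graph} combined with Markov yields $k^{-\Omega(1)}$ per-row failure, while the universal $C$ in $t = C \log k / \bar{C}^2$ must simultaneously be large enough that (i) the non-representative count is $o(n/k)$, and (ii) the union bound over $k$ iterations still gives $k^{-\Omega(1)}$. Both requirements can be met by taking $C$ sufficiently large; beyond this bookkeeping the proof is essentially two applications of Markov's inequality followed by a union bound.
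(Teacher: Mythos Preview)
Your proposal is correct and follows essentially the same approach as the paper: condition on balancedness via \cref{fact:balanced-instance}, apply \cref{lemma:structure-of-graph} together with Markov's inequality to bound the number of non-representative rows by $n\cdot k^{-\Omega(1)}$, and then control the $k$ random picks using that the remaining pool always has size at least $\Omega(n/k)$. The only cosmetic difference is that the paper phrases the last step as a product of survival probabilities $\bigl(1-\tfrac{n\cdot k^{-\Omega(1)}}{n/k}\bigr)^k$ rather than an explicit union bound, which is equivalent here.
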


\cref{theorem:main-algorithm} now immediately follows combining \cref{fact:balanced-instance}, \cref{lemma:structure-of-graph}, \cref{lemma:sucess-when-chosing-good-representative} and \cref{lemma:probability-pick-representatives}.

\paragraph{Exact recovery}
\cref{lemma:probability-pick-representatives} also implicitly yield exact recovery for sufficiently many observations.

\begin{corollary}\label{corollary:probability-pick-representatives-exact-recovery}
 Let $n,k,t>0$\,, and let $q>0$ be the hidden constant in \Cref{lemma:structure-of-graph}. Let $t\geq  C\frac{\log n}{\bar{C}^2}$ for a large enough universal constant $C>0$. Let $\bm \cI:=(\mathbf z,(\mathbf f_1,\mathbf G_1),\ldots(\mathbf f_t,\mathbf G_t))\sim \hsbm\,.$ Let $\mathbf{A}$ be the matrix constructed by \cref{algorithm:heterogeneous-models}. On input $\mathbf{A}$, with probability at least $1-n^{-\Omega(1)}$, all rows in $\mathbf A$ are $\Omega(q)$-representatives.
\begin{proof}
By \cref{lemma:structure-of-graph} and Markov's inequality, with probability $1 - e^{-\Omega(q\cdot\Cbar^2\cdot t)}$ there are at most $O(n)\cdot e^{-\Omega\Paren{q\cdot\Cbar^2\cdot t}}$ indices $\mathbf i\in[n]$ such that $\mathbf{A_i}$ is not a $\Omega(q)$-representative. Therefore, for $t \ge \frac{C}{\bar{C}^2}\log n$ no such index exists.
\end{proof}
\end{corollary}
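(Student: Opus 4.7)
The plan is to lift the per-row expected-distance bound of \Cref{lemma:structure-of-graph} to a high-probability statement over all $n$ rows via Markov's inequality and a union bound over rows. First I would observe that the $i$-th row of $A^*(\mathbf{z}) = \sum_p \dyad{c_p(\mathbf{z})}$ equals $c_{\mathbf{z}_i}(\mathbf{z})$, so $\mathbf{A}_i$ being a $q$-representative is exactly the statement $\Snorm{\mathbf{A}_i - c_{\mathbf{z}_i}(\mathbf{z})} \leq n \cdot e^{-q \bar{C}^2 t}$, which is precisely the quantity whose conditional expectation is controlled by \Cref{lemma:structure-of-graph}.

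Next I would fix a constant $q' > 0$ strictly smaller than the constant hidden inside the $\Omega(\bar{C}^2 t)$ in the conclusion of \Cref{lemma:structure-of-graph}, and apply Markov's inequality to a single row at the threshold $n \cdot e^{-q' \bar{C}^2 t}$. This gives
\[
\Pr\Brac{\mathbf{A}_i \text{ is not a } q'\text{-representative}} \leq O\Paren{n^{-\Omega(1)} \cdot e^{q' \bar{C}^2 t} + e^{-\Omega(\bar{C}^2 t)}},
\]
where the two terms come from splitting the expectation bound of \Cref{lemma:structure-of-graph}. Plugging in $t \geq C\,\log n / \bar{C}^2$ for $C$ large, both terms become $n^{-\Omega(1)}$: the second because $\bar{C}^2 t \geq C \log n$, and the first provided $q'$ is small enough that $q' C$ is strictly less than the exponent hidden in the $n^{-\Omega(1)}$ factor from \Cref{lemma:structure-of-graph} (which in turn traces back to the polynomial balance bound of \Cref{fact:balanced-instance}).

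Finally, by linearity of expectation the expected number of non-$q'$-representative rows is at most $n \cdot n^{-\Omega(1)}$, and for $C$ sufficiently large this expectation is itself $n^{-\Omega(1)}$. Since this count is a nonnegative integer, Markov's inequality implies that it is zero with probability at least $1 - n^{-\Omega(1)}$, yielding the corollary with $q$ equal to (a constant multiple of) $q'$. The argument is essentially routine given \Cref{lemma:structure-of-graph}; the only mild subtlety, which is the main bookkeeping point, is choosing $q'$ small enough compared to $1/C$ so that the $e^{q' \bar{C}^2 t}$ blowup incurred by Markov's threshold is dominated by the polynomial $n^{-\Omega(1)}$ error in the expectation bound. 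There is no real obstacle beyond tracking these constants.
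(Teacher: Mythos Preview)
Your proposal is correct and follows essentially the same route as the paper: bound the expected number of non-representative rows via \Cref{lemma:structure-of-graph} and Markov's inequality, then observe that for $t\geq C\log n/\bar{C}^2$ this expected count drops below $1$, forcing it to be zero with probability $1-n^{-\Omega(1)}$. Your version is in fact more careful than the paper's one-line argument, since you explicitly track the interaction between the threshold constant $q'$ and $C$ so that the $e^{q'\bar{C}^2 t}$ blowup from Markov does not overwhelm the $n^{-\Omega(1)}$ term; the paper's proof simply absorbs this into the $\Omega(\cdot)$ notation.
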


Since by \cref{lemma:sucess-when-chosing-good-representative} only indices corresponding to rows that are not $\Omega(q)$-representative can be misclassified, by  \cref{fact:balanced-instance}, \cref{lemma:structure-of-graph},  \cref{lemma:probability-pick-representatives} and \cref{corollary:probability-pick-representatives-exact-recovery} we obtain \cref{corollary:exact-recovery-main}. 

\section{Experiments}\label{section:experiments}
We show here experiments on synthetic data sampled from $\hsbmt$\,.
The estimator in \cref{theorem:weak-recovery-sbm} is complex and relies on a high order sum-of-squares program, making it hard to implement in practice. Nevertheless, it is reasonable to believe that: \textit{(i)} the guarantees of \cref{theorem:weak-recovery-sbm} are only sufficient, but not necessary, to obtain \cref{theorem:main-algorithm}; \textit{(ii)} other estimators provide the guarantees of \cref{theorem:weak-recovery-sbm}.
For this reason, it  makes sense to  test \cref{algorithm:heterogeneous-models} with other community detection algorithms.

The next figures compares the results on $(\mathbf{z}, (\mathbf{f}_1,\mathbf{G}_1),\ldots,(\mathbf{f}_t, \mathbf{G}_t)) \sim \hsbmt$ (for a wide range of parameters) of the following algorithms:
\begin{enumerate}
    \item[A.1] Louvain's algorithm \cite{blondel2008fast} on the union graph $\bigcup_{i \in[t]}\mathbf{G}_t\,.$
    \item[A.2] \cref{algorithm:heterogeneous-models} with Louvain's algorithm applied in place of the estimator of \cref{theorem:weak-recovery-sbm}\,.
\end{enumerate}
 The $y$-axis measures agreement as defined in \cref{eq:agreement}. Results are averaged over $20$ simulations.
 
\begin{figure}[ht]
    \centering
\begin{minipage}[b]{0.6\textwidth}
    \includegraphics[width=\textwidth]{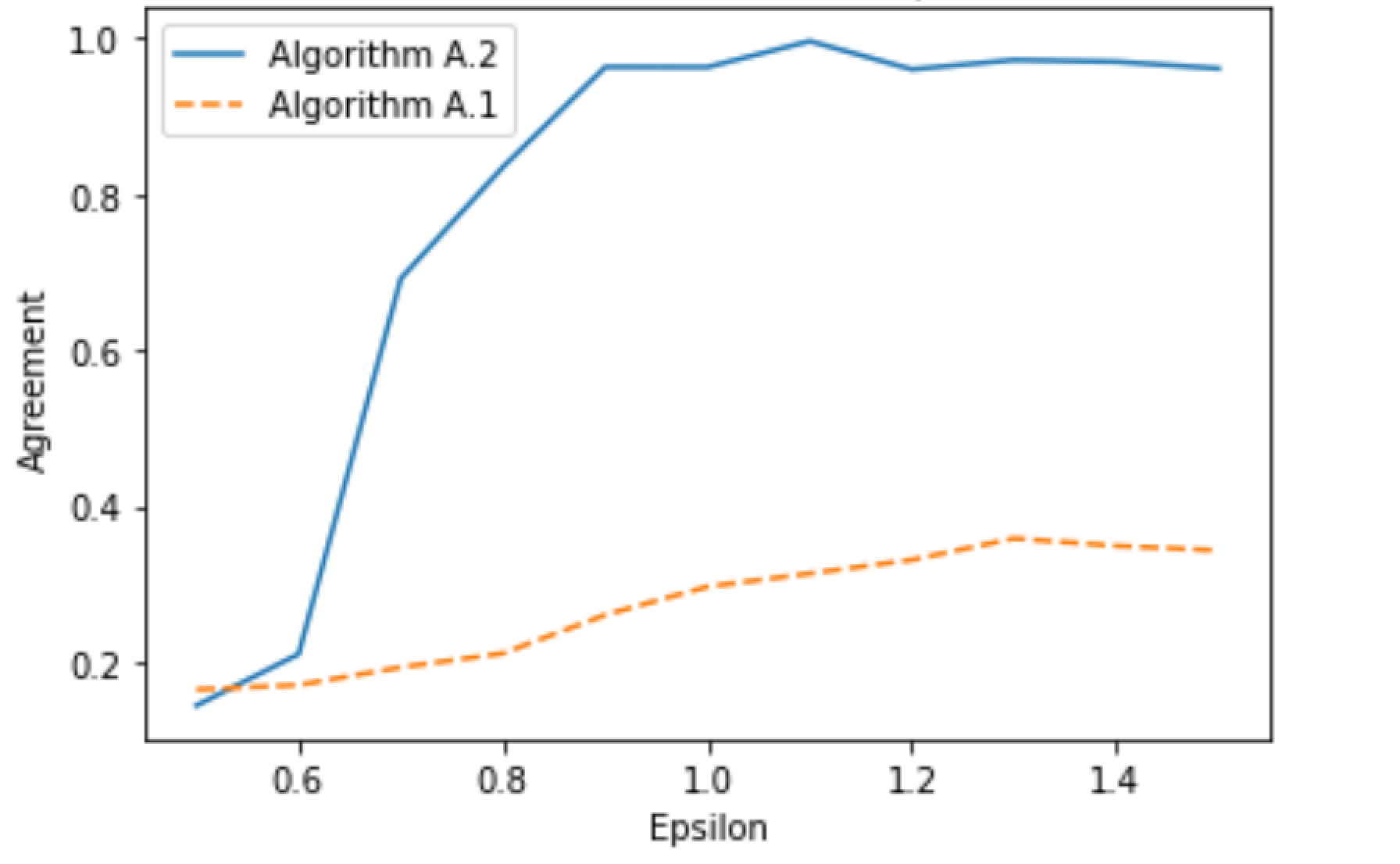}
    \vskip -0.1in
    \caption{Fixing $t=10$, $n=1000$, $k=10$, $d=50$ and varying $\epsilon$ in $[0.5,1.5]$.}
    \label{experiment1}
  \end{minipage}
  \hfill
  \vskip 0.31in
  \begin{minipage}[b]{0.61\textwidth}
    \includegraphics[width=\textwidth]{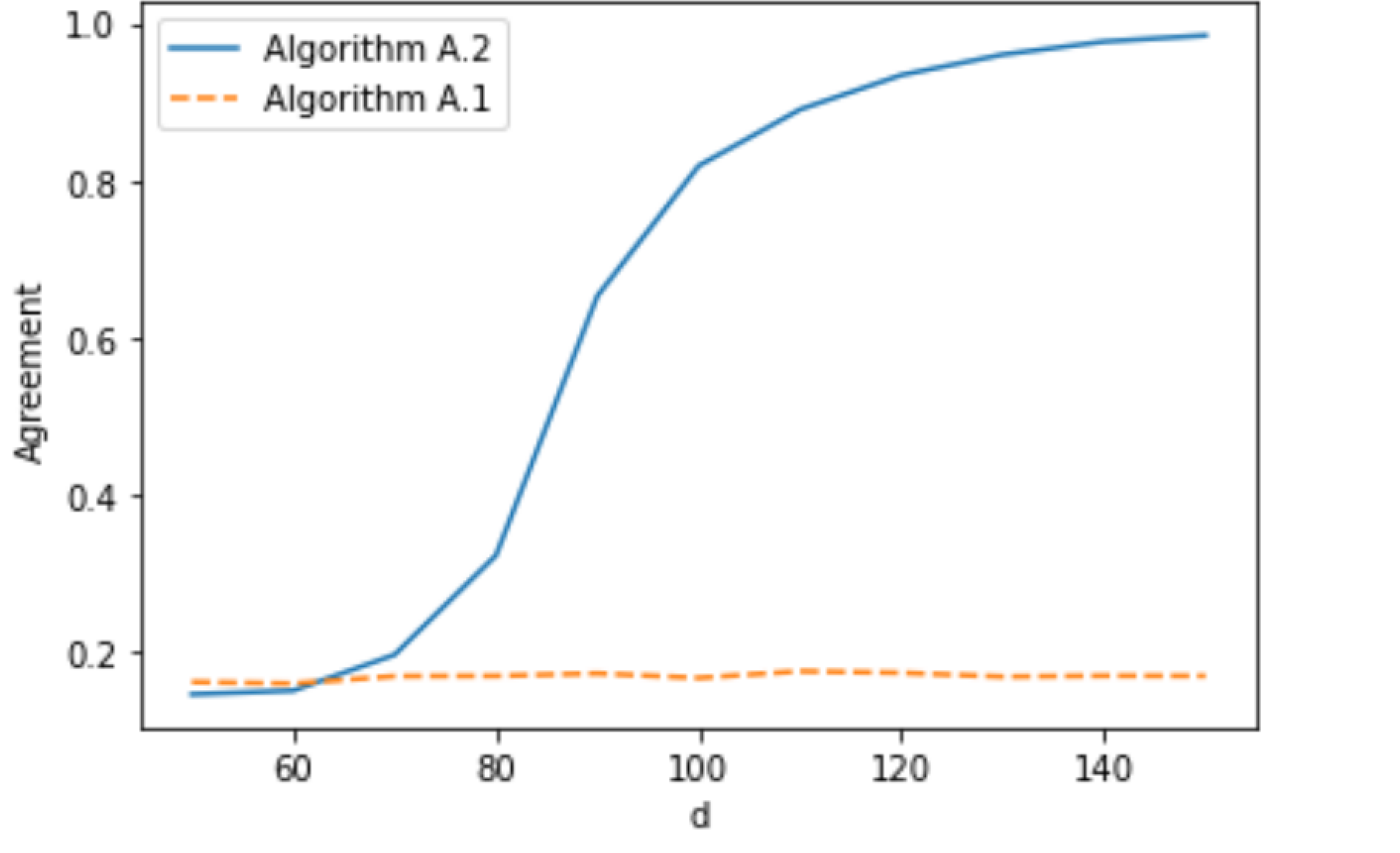}
    \vskip -0.1in
    \caption{Fixing $t=10$, $n=1000$, $k=10$, $\epsilon=0.5$ and varying $d$ in $[50, 150]$.}
    \label{experiment2}
  \end{minipage}
\end{figure}

\section{Conclusions and future directions}\label{section:conclusions}
The introduction of \cref{model:heterogeneous-sbm} raises several natural questions, for which we only provide initial answers.

\paragraph{On the phase transition threshold}
One of the most interesting question concerns the phase transition of the model. Concretely, one may expect a rich interplay between the signal-to-noise ratio of each of the observed graphs (possibly below the relative KS threshold) and the number of observations required to weakly recover the hidden vector $\mathbf{z}$. We leave the characterization of this trade-off beyond \cref{theorem:main-algorithm} and \cref{theorem:lowerbound} as a fascinating open question.

\paragraph{From 2 communities to $k$ communities in the multi-view model}
Another natural question concern the generalization to a model in which each view may have $2\leq k_\ell \leq k$ communities.
The ideas outlined above translate in principle to these settings but the correctness appears difficult to prove.
Concretely, any estimator achieving guarantees comparable to \cref{theorem:weak-recovery-sbm} but for more than $2$ communities can immediately be plugged-in \cref{algorithm:heterogeneous-models} to achieve weak recovery in these more general settings.\footnote{Without changing the proof of correctness of the algorithm!}
However, to the best of our knowledge  existing weak-recovery algorithms for $\sbmk$ do not lead to estimators of this form.

\section*{Acknowledgments}
We thank David Steurer for insightful discussions in the early stages of this work. Tommaso d’Orsi is partially supported by the project MUR FARE2020 PAReCoDi.


\bibliography{custom}
\bibliographystyle{icml2024}

\newpage
\appendix

\section{Failure of community detection on the union graph}\label{section:union-graph}
In this section we provide rigorous evidence that efficient algorithm cannot achieve comparable guarantees to \cref{theorem:main-algorithm} by only considering the union graph $\bigcup_{i\in[t]}\mathbf{G}_i$ for $(\mathbf{z},(\mathbf{f_1,\mathbf{G}_1}),\ldots(\mathbf{f}_t,\mathbf{G}_t))\sim\hsbmt$.
Concretely, we prove the following theorem.

\begin{theorem}[Limits of weak recovery from the union graph]\label{theorem:limits-weak-recovery-union}
Let $n,k,d,\eps>0$, assume that $t\geq 100\cdot(\log k)^2$, and let $\bm \cI:=(\mathbf{z},(\mathbf{f_1,\mathbf{G}_1}),\ldots(\mathbf{f_t},\mathbf{G}_t))\sim\hsbmt$.
With probability at least $1-k^{-\Omega(1)}$ over the draws of  $\mathbf{f}_1,\ldots,\mathbf{f}_t$,
the conditional distribution over $(\mathbf{z}\,,\mathbf{G}_1,\ldots,\mathbf{G}_t)$ satisfies:
\begin{itemize}
    \item $\mathbf{z}$ is drawn uniformly at random from $[k]^n$;
    \item each edge $ij$ appears (independently) in $\bigcup_{i\in[t]}\mathbf{G}_i$ with probability at most $\frac{d^*}{n}(1+(1-\tfrac{1}{k})\eps^*)$ if $\mathbf{z}_i=\mathbf{z}_j$ and probability at least $\frac{d^*}{n}(1-\tfrac{\eps}{k})$ otherwise, for some $d^*,\eps^*$ such that
    \begin{align*}
        dt\cdot \frac{1+\frac{\eps}{2}}{1+\Paren{1-\frac{1}{k}}\eps^*}\leq d^*\leq dt\cdot \frac{1+\frac{\eps^*}{1-\eps^*/k}}{1+\Paren{1-\frac{1}{k}}\eps^*}-o(1)\,.
    \end{align*}
\end{itemize}

\end{theorem}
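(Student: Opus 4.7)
The plan is to condition on the random label-maps $\mathbf f_1,\ldots,\mathbf f_t$, use Hoeffding plus a union bound to control the ``signal'' carried by distinct label pairs, and then read off admissible $(d^*,\eps^*)$ from the marginal edge probabilities of the union graph.

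Conditionally on $(\mathbf f_\ell)_{\ell\in[t]}$, the label vector $\mathbf z$ remains uniform on $[k]^n$ by the product structure of the model, and conditionally on $\mathbf z$ as well the edges within and across the $\mathbf G_\ell$ are all independent. Hence the edges of $\bigcup_\ell\mathbf G_\ell$ are independent with marginal
\[
P_{ij}\;=\;1-\prod_{\ell=1}^{t}\!\left(1-\bigl(1+\tfrac{\eps}{2}\mathbf f_\ell(\mathbf z_i)\mathbf f_\ell(\mathbf z_j)\bigr)\tfrac{d}{n}\right).
\]
This depends on $(\mathbf z_i,\mathbf z_j)$ only through the sums $S(p,q):=\sum_{\ell=1}^t\mathbf f_\ell(p)\mathbf f_\ell(q)$. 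Trivially $S(p,p)=t$, while for $p\neq q$ the summands $\mathbf f_\ell(p)\mathbf f_\ell(q)$ are i.i.d.\ Rademacher random variables.

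Next I would apply Hoeffding's inequality to each $S(p,q)$ with $p\neq q$ and union-bound over the $\binom{k}{2}$ such pairs. This gives, with probability at least $1-k^{-\Omega(1)}$ over the choice of $(\mathbf f_\ell)$, the uniform deviation bound $\max_{p\neq q}\lvert S(p,q)\rvert\leq c\sqrt{t\log k}$ for an absolute constant $c$. Under the hypothesis $t\geq 100(\log k)^2$, this forces $\eta:=\max_{p\neq q}\lvert S(p,q)\rvert/t=o(1)$. Combined with the first-order expansion of $P_{ij}$ (valid since $d/n=o(1)$),
\[
P_{ij}\;=\;\tfrac{d}{n}\sum_{\ell=1}^{t}\bigl(1+\tfrac{\eps}{2}\mathbf f_\ell(\mathbf z_i)\mathbf f_\ell(\mathbf z_j)\bigr)+O\!\Bigl(\tfrac{(dt)^2}{n^2}\Bigr),
\]
one obtains $P_{ij}\leq (1+\eps/2)dt/n$ whenever $\mathbf z_i=\mathbf z_j$ and $P_{ij}\geq \tfrac{dt}{n}(1-\tfrac{\eps\eta}{2})-O((dt/n)^2)$ whenever $\mathbf z_i\neq\mathbf z_j$.

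Finally, to exhibit $(d^*,\eps^*)$ matching the SBM's form, I would solve the pair of constraints
\[
\tfrac{d^*}{n}\bigl(1+(1-\tfrac{1}{k})\eps^*\bigr)\;\geq\;\tfrac{dt(1+\eps/2)}{n},\qquad \tfrac{d^*}{n}\bigl(1-\tfrac{\eps^*}{k}\bigr)\;\leq\;\tfrac{dt(1-\eps\eta/2)}{n}-o\!\bigl(\tfrac{1}{n}\bigr),
\]
which are jointly feasible for $d^*$ as soon as $\eps^*\gtrsim(1+\eta)\eps/2$ (obtained by clearing denominators). The first yields $d^*\geq \tfrac{dt(1+\eps/2)}{1+(1-1/k)\eps^*}$, exactly the lower bound in the statement; the second, together with the elementary identity $\tfrac{1}{1-\eps^*/k}=\tfrac{1+\eps^*/(1-\eps^*/k)}{1+(1-1/k)\eps^*}$, yields $d^*\leq dt\cdot\tfrac{1+\eps^*/(1-\eps^*/k)}{1+(1-1/k)\eps^*}-o(1)$, exactly the upper bound. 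The only delicate point---and what I would expect to be the main source of care---is balancing the Hoeffding deviation against the hypothesis on $t$: pushing the deviation down to beat the $k^2$ union bound requires a threshold of order $\sqrt{t\log k}$, and forcing this to be $o(t)$ is precisely what dictates the quadratic dependence $t\geq 100(\log k)^2$.
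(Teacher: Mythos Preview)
Your proposal is correct and follows essentially the same approach as the paper: concentrate the pairwise signals $S(p,q)$ via Hoeffding plus a union bound over $\binom{k}{2}$ pairs (the paper phrases this equivalently in terms of $\sum_\ell\bracbb{\mathbf f_\ell(q)=\mathbf f_\ell(q')}$), then read off the same-community upper bound by the union bound and the different-community lower bound by a first-order product expansion (the paper uses the Bernoulli inequality $(1+s)^r\geq 1+rs$ for this step), and finally invert the two constraints to extract the displayed range for $d^*$. The only cosmetic point is that your ``$P_{ij}\leq (1+\eps/2)dt/n$'' for same-community pairs should be justified directly by the union bound rather than the Taylor expansion (whose error term has the wrong sign for an upper bound), but this is exactly what the paper does and does not affect the argument.
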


In words, \cref{theorem:limits-weak-recovery-union} shows that the union graph $\bigcup_{i\in[t]}\mathbf{G}_i$ is essentially a $k$-community stochastic block model with parameters $d^*=\Theta(dt)\,,\eps^*=\Theta(\eps)\,.$
As discussed in \cref{section:introduction}, it is conjecturally hard to achieve weak recovery in polynomial time  for $d^*(\eps^*/k)^2\leq 1\,.$
In the context of \cref{theorem:limits-weak-recovery-union}, this implies that the parameters of the distribution of $\bigcup_{i\in[t]}\mathbf{G}_i$ are above the Kesten-Stigum threshold \textit{only} for $d^*(\eps^*/k)^2\geq \Omega(1)$.
That is, at least $t\geq \Omega(k^2)$ observations are required!

Next we prove the theorem.
\begin{proof}[Proof of \cref{theorem:limits-weak-recovery-union}]
Let $q,q'\in[k]$ be distinct. By Chernoff's bound and choice of $t$ we have\footnote{We can take $o(1)$ to be $t^{-1/4}$ and by Hoeffding's inequality we can bound the probability of the event $\cE$ in \eqref{eq:chernoff_bound_on_event_on_typical_f} not happening by $2\exp(-\sqrt{t})\leq 2\exp(-10\log k)\leq k^{-5}$.}
\begin{equation}
\label{eq:chernoff_bound_on_event_on_typical_f}
\begin{aligned}
    \bbP\Paren{\frac{t}{2}(1-o(1)) \leq \sum_{\ell\in[t]} \bracbb{\mathbf{f}_\ell(q)=\mathbf{f}_\ell(q')}\leq \frac{t}{2}(1+o(1))}\geq 1 - k^{-5}\,.
\end{aligned}
\end{equation}
Hence we may take a union over all such pairs $q,q'\in[k]$ as the corresponding event $\cE$ will hold with probability at least $1-k^{-O(1)}\,.$
So let $f_1\,\ldots,f_t:[k]\rightarrow{\pm 1}$ be fixed functions verifying the event $\cE$ of \eqref{eq:chernoff_bound_on_event_on_typical_f}.
We condition the rest of the analysis on $\mathbf{f}_1=f_1,\ldots,\mathbf{f}_t=f_t\,.$
In these settings each edge appears in  $\mathbf{G^*} := \bigcup_{i\in[t]}\mathbf{G}_i$ independently of others. Moreover, by union bound
\begin{align*}
    \bbP\Paren{ij\in \mathbf{G^*}\given \mathbf{z}_i=\mathbf{z}_j\,, \mathbf{f}_1=f_1,\ldots,\mathbf{f}_t=f_t} \leq \Paren{1+\frac{\eps}{2}}\frac{dt}{n}\,,
\end{align*}
and 
\begin{align*}
    \bbP&\Paren{ij\in \mathbf{G^*}\given \mathbf{z}_i\neq \mathbf{z}_j\,, \mathbf{f}_1=f_1,\ldots,\mathbf{f}_t=f_t} \\
    &\geq 1-\Paren{1-\Paren{1+\frac{\eps}{2}}\frac{d}{n}}^{\frac{t}{2}(1-o(1))}\Paren{1-\Paren{1-\frac{\eps}{2}}\frac{d}{n}}^{\frac{t}{2}(1+o(1))}\\
    &\geq 1 - \Paren{1-\Paren{1+\frac{\eps}{2}}\frac{dt}{2n}(1-o(1))}\Paren{1-\Paren{1-\frac{\eps}{2}}\frac{dt}{2n}(1+o(1))}\\
    &\geq (1-o(1))\frac{dt}{n}\,,
\end{align*}
where  we used the inequality $(1+s)^r\geq 1+rs\,,$ for $r>1\,, s>-1\,.$
It remains to compute $d^*$ and $\eps^*$ so that 
\begin{align*}
    \Paren{1+\frac{\eps}{2}}\frac{dt}{n} &\leq \frac{d^*}{n}\Paren{1+\Paren{1-\frac{1}{k}}\eps^*}\,,\\
    (1-o(1))\frac{dt}{n}&\geq \frac{d^*}{n}\Paren{1-\frac{\eps^*}{k}}\,.
\end{align*}
Rearranging the inequalities,
\begin{align*}
    d^*&\geq dt\cdot \frac{1+\frac{\eps}{2}}{1+\Paren{1-\frac{1}{k}}\eps^*}\,,\\
    d^*&\leq dt\cdot\frac{1}{1-\frac{\eps^*}{k}}-o(1)=dt\cdot \frac{1+\frac{\eps^*k}{k-\eps^*}}{1+\Paren{1-\frac{1}{k}}\eps^*}-o(1)
\end{align*}
as desired.
\end{proof}

\begin{remark}[On the weighted union graph]
A natural question to ask is whether the weighted union graph -- the graph over $[n]$, in which edge $ij$ has weight $\sum_{\ell\in[t]}\bracbb{\ij\in\mathbf{G}_\ell}$-- could provide better guarantees.
In the sparse settings $d\leq n^{o(1)}\,, t\leq n^{o(1)}$ only a $n^{o(1)-1}$ fraction of the edges have weight larger than $1$ and thus one may expect that this additional information does not simplify the problem.
\end{remark}

\section{Information theoretic lower bound for blackbox algorithms}\label{section:lowerbound}

In this section we would like to study, in the context of $\hsbm$, the information-theoretic limitations for having an algorithm that \textit{(i)} runs the procedure in \cref{theorem:weak-recovery-sbm} on each observation and \textit{(ii)} uses the resulting matrices $(\hat{\mathbf{X}}_\ell)_{\ell\in[t]}$ to reconstruct the original $k$ communities. Essentially, we would like to prove a formal version of \cref{theorem:lowerbound}. In order to do this, we first need to introduce some useful notation and terminology. 

Throughout the section let $\mathbf{z}\in[k]^n$ be the vector of communities, and let $(\mathbf{f}_\ell)_{\ell\in[t]}$ be $t$ independent and uniformly distributed random mappings $[k]\to\{+1,-1\}$. For every $\ell\in[t]$ let $\mathbf{x}_\ell=\mathbf{f}_\ell(\mathbf{z})$ and let $\mathbf{G}_\ell\sim \SBM_{n, 2, d_\ell,\eps_\ell}(\mathbf{f}_\ell(\mathbf{z}))$. We introduce a quantitative version of weak-recovery.

\begin{definition}[$\alpha_\ell$-weak-recovery algorithm]
We say that an algorithm\footnote{Note that $\hat{X}_\ell$ may be a randomized algorithm.} $\hat{X}_\ell$ taking $\mathbf{G}_\ell$ as input and producing an estimate $\hat{X}_\ell(\mathbf{G}_\ell)\in[+1,-1]^{n\times n}$ of $\dyad{\mathbf{x}_\ell}$ is an $\alpha_\ell$-weak-recovery algorithm if we have
\begin{equation}
\label{eq:normalized_agreement_weak_recovery}
    \E\Brac{\hat{X}_\ell(\mathbf{G})_{\ij}\suchthat (\mathbf{x}_\ell)_i=(\mathbf{x}_\ell)_j} - \E\Brac{\hat{X}(\mathbf{G})_{\ij}\suchthat (\mathbf{x}_\ell)_i\neq (\mathbf{x}_\ell)_j}\geq \alpha_\ell\,,\quad\forall i,j\in[n]\,.
\end{equation}
\end{definition}

Clearly, the algorithm mentioned in \cref{theorem:weak-recovery-sbm} is a $C_{d,\epsilon}$-weak-recovery algorithm.

We are interested in determining the information-theoretic limits for estimating $\mathbf{z}$ based only on the outputs of an $\alpha_\ell$-weak-recovery algorithm when applied on the observations $(\mathbf{G}_\ell)_{\ell\in[t]}$. To this end, let us introduce blackbox estimators:

\begin{definition}[Blackbox estimator]\label{definition:blackbox-estimator}
A \emph{blackbox estimator} for $\mathbf{z}$ is a mapping $$\hat{z}:\paren{[+1,-1]^{n\times n}}^t\to[k]^n\,.$$
\end{definition}
The blackbox estimator is applied as follows: For every $\ell\in[t]\,,$ we first compute $\hat{\mathbf{X}}_\ell=\hat{X}_\ell(\mathbf{G}_\ell)\,,$ for some $\alpha$-weak-recovery algorithm $\hat{X}_\ell$ for $\mathbf{x}_\ell$ for which we do not know anything about except that it is an $\alpha$-weak-recovery algorithm, and then compute $\hat{\mathbf{z}}=\hat{z}(\hat{\mathbf{X}}_1,\ldots,\hat{\mathbf{X}}_t)$.

We would like to guarantee that the blackbox estimator yields a successful weak-recovery of $\mathbf{z}$ using only the fact that $\hat{\mathbf{X}}_\ell=\hat{X}_\ell(\mathbf{G}_\ell)$ satisfies \eqref{eq:normalized_agreement_weak_recovery} for every $\ell\in[t]$. In order to formalize this, we will use the notion of $\alpha$-estimates:

\begin{definition}[$\alpha$-estimates]
\label{definition:alpha-estimates}
Let $\alpha=(\alpha_\ell)_{\ell\in[t]}\in(0,2]^t$ be a sequence of $t$ positive numbers. Let $\hat{\mathbf{X}}_1,\ldots,\hat{\mathbf{X}}_t\in[+1,-1]^{n\times n}$ be $t$ random matrices. We say that $(\hat{\mathbf{X}}_\ell)_{\ell\in[t]}$ are \emph{$\alpha$-estimates} of $(\dyad{\mathbf{x}_\ell})_{\ell\in[t]}$ if they satisfy the following three conditions:
\begin{itemize}
    \item[(a)] Given $(\mathbf{x}_\ell)_{\ell\in[t]}$, the random matrices $(\hat{\mathbf{X}}_\ell)_{\ell\in[t]}$ are conditionally independent from $\mathbf{z}$.
    \item[(b)] For every $\ell\in [t]$, given $\mathbf{x}_\ell$, the random matrix $\hat{\mathbf{X}}_\ell$ is conditionally independent from $(\mathbf{x}_1,\ldots,\mathbf{x}_{\ell-1},\mathbf{x}_{\ell+1},\ldots,\mathbf{x}_t,\hat{\mathbf{X}}_1,\ldots,\hat{\mathbf{X}}_{\ell-1},\hat{\mathbf{X}}_{\ell+1},\ldots,\hat{\mathbf{X}}_t)$.
    \item[(c)] For every $\ell\in [t]$ and every $i,j\in[n]$ with $i\neq j$ we have
\begin{equation}
    \E\Brac{(\hat{\mathbf{X}}_\ell)_{\ij}\suchthat (\mathbf{x}_\ell)_i=(\mathbf{x}_\ell)_j} - \E\Brac{(\hat{\mathbf{X}}_\ell)_{ij}\suchthat (\mathbf{x}_\ell)_i\neq (\mathbf{x}_\ell)_j}\geq \alpha_\ell\,,\quad\forall i,j\in[n]\,.
\end{equation}
\end{itemize}
\end{definition}

It is not hard to see that if $\hat{X}_\ell$ is an $\alpha_\ell$-weak-recovery algorithms for every $\ell\in[t]$, then $(\hat{X}(\mathbf{G}_\ell))_{\ell\in[t]}$ are $\alpha$-estimates of $(\dyad{\mathbf{x}_\ell})_{\ell\in[t]}$.

Now we are ready to formally define what we mean by ``guaranteeing that the blackbox estimator yields a successful weak-recovery of $\mathbf{z}$ using only the fact that $\hat{\mathbf{X}}_\ell=\hat{X}(\mathbf{G}_\ell)$ satisfies \eqref{eq:normalized_agreement_weak_recovery} for every $\ell\in[t]$":

\begin{definition}[$(\rho,\tau,\alpha,t)$-weak-recovery blackbox estimator]
Let $\alpha=(\alpha_\ell)_{\ell\in[t]}\in(0,2]^t$ be a sequence of $t$ positive numbers, and let $\rho>0$ and $\tau>0\,.$

A mapping\footnote{Note that $\hat{z}$ may be a randomized function.} $$\hat{z}:\paren{[-1,+1]^{n\times n}}^t\to[k]^n$$ is said to be a \emph{$(\rho,\tau,\alpha,t)$-weak-recovery blackbox estimator for $\mathbf{z}$ from $\alpha$-estimates of $(\dyad{\mathbf{x}_\ell})_{\ell\in[t]}$} if for every $t$ random matrices $(\hat{\mathbf{X}}_\ell)_{\ell\in[t]}$ which are \emph{$\alpha$-estimates} of $(\dyad{\mathbf{x}_\ell})_{\ell\in[t]}$, if
$$\mathbf{z}=\hat{z}(\hat{\mathbf{X}}_1,\ldots,\hat{\mathbf{X}}_t)\,,$$
then with probability at least $\tau\,$ we have
\begin{align}\label{eq:heterogeneous-weak-recovery-rho}
    \max_{\pi \in P_k}\sum_i\frac{1}{n}
    \bracbb{\mathbf{z}_i=\pi(\hat{\mathbf{z}}_{i})}\geq \frac{1}{k}+\rho\,,
\end{align}
where $P_k$ is the set of permutations $[k]\to[k]\,.$
\end{definition}

We are now ready to state the main theorem of the section, which implies \cref{theorem:lowerbound}.

\begin{theorem}[Formal statement of \cref{theorem:lowerbound}]\label{theorem:lowerbound-formal}
Let $\alpha=(\alpha_\ell)_{\ell\in[t]}\in(0,2]^t$ be a sequence of $t$ positive numbers and denote
$$\overline{\alpha}=\frac{1}{t}\sum_{\ell\in[t]}\alpha_\ell\,.$$
Let $\rho>0$ and $\tau>0\,.$ Let $\mathbf{z}\in[k]^n$ be the uniformly random vector of communities, and let $(\mathbf{f}_\ell)_{\ell\in[t]}$ be $t$ independent and uniformly distributed random mappings $[k]\to\{+1,-1\}$. For every $\ell\in[t]$ let $\mathbf{x}_\ell=\mathbf{f}_\ell(\mathbf{z})$. If there exists a $(\rho,\tau,\alpha,t)$-weak-recovery blackbox estimator for $\mathbf{z}$ from $\alpha$-estimates of $(\dyad{\mathbf{x}_\ell})_{\ell\in[t]}\,,$ and if $n$ is large enoug, then we must have
$$t\geq \Omega_\rho\Paren{ \frac{\tau\cdot \log k}{\overline{\alpha}}}\,.$$
\end{theorem}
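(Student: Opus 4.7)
The argument is information-theoretic and proceeds in four pieces following the techniques section. Since the blackbox $\hat z$ must succeed on every valid sequence of $\alpha$-estimates in the sense of \cref{definition:alpha-estimates}, the lower bound is equivalent to exhibiting one particularly uninformative sequence on which it cannot succeed. To this end, I would construct each $\hat{\mathbf{X}}_\ell$ as a simple randomised function of $\mathbf{x}_\ell=\mathbf{f}_\ell(\mathbf{z})$ whose entries saturate the pair-wise correlation lower bound $\alpha_\ell$ while carrying as little information as possible; a natural candidate is to let the entries $\hat{X}_{ij}\in\{-1,+1\}$ be conditionally independent given $\mathbf{x}_\ell$, with $\Pr[\hat{X}_{ij}=+1\mid \mathbf{x}_\ell]=(1+\alpha_\ell(\mathbf{x}_\ell)_i(\mathbf{x}_\ell)_j)/2$, so that items (a)--(c) of \cref{definition:alpha-estimates} are satisfied by design.

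The technical core of the proof, and the main obstacle, is the single-view mutual-information bound
\begin{equation*}
I(\mathbf{z};\hat{\mathbf{X}}_\ell)\leq O(\alpha_\ell\cdot n)\,.
\end{equation*}
For the construction above, the conditional law of $\hat{\mathbf{X}}_\ell$ given $\mathbf{z}=z$ deviates from its marginal law only by an $O(\alpha_\ell)$ first-moment shift in each entry, and a $\chi^2$-divergence computation yields a per-entry information of $O(\alpha_\ell^2)$. The difficulty is to aggregate this estimate across entries while respecting the strong correlations induced by the common factor $\mathbf{x}_\ell$: the naive bound $O(\alpha_\ell^2 n^2)$ is too weak, and the required improvement to $O(\alpha_\ell n)$ must exploit both that $\mathbf{z}$ has entropy at most $n\log k$ and that the random labelling $\mathbf{f}_\ell$ collapses $\mathbf{z}$ onto only $k$ independent $\pm 1$-coins. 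A priori the $\alpha_\ell$-correlation condition controls only first moments and says nothing about the information content of $\hat{\mathbf{X}}_\ell$, which is precisely why the freedom to tailor the worst-case $\alpha$-estimate is essential.

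The remaining two steps are more standard. Items (a) and (b) of \cref{definition:alpha-estimates}, together with the independence of the $\mathbf{f}_\ell$'s, imply that $\hat{\mathbf{X}}_1,\ldots,\hat{\mathbf{X}}_t$ are mutually independent conditional on $\mathbf{z}$, so subadditivity and the data-processing inequality applied to $\hat{\mathbf{z}}=\hat z(\hat{\mathbf{X}}_1,\ldots,\hat{\mathbf{X}}_t)$ give
\begin{equation*}
I(\mathbf{z};\hat{\mathbf{z}})\leq \sum_{\ell\in[t]}I(\mathbf{z};\hat{\mathbf{X}}_\ell)\leq O(\bar{\alpha}\cdot t\cdot n)\,.
\end{equation*}
On the other side, a Fano-type argument shows that any $\hat{\mathbf{z}}$ satisfying \eqref{eq:heterogeneous-weak-recovery-rho} with probability at least $\tau$ must carry $I(\mathbf{z};\hat{\mathbf{z}})\geq \Omega_\rho(\tau\cdot n\log k)$ bits of information about $\mathbf{z}$; the permutation-invariance of the agreement metric forces a quotient by $P_k$, but this only costs an additive $\log(k!)=o(n\log k)$ term in the regime $k=n^{1-\Omega(1)}$ considered in the paper. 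Combining the two bounds and rearranging yields $t\geq \Omega_\rho(\tau\log k/\bar{\alpha})$, as required.
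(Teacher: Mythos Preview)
Your overall architecture is right and matches the paper: exhibit a worst-case family of $\alpha$-estimates, bound $I(\mathbf{z};\hat{\mathbf X}_1,\ldots,\hat{\mathbf X}_t)\le\sum_\ell I(\mathbf z;\hat{\mathbf X}_\ell)$ by conditional independence and data processing, and compare with a Fano-type lower bound $I(\mathbf z;\hat{\mathbf z})\ge\Omega_\rho(\tau\, n\log k)$. The Fano step and the $\log(k!)$ accounting are as in the paper.

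The genuine gap is exactly the step you flag as the ``main obstacle'': with your construction the bound $I(\mathbf z;\hat{\mathbf X}_\ell)\le O(\alpha_\ell\, n)$ is not established, and in fact it is generally \emph{false}. If the $\binom{n}{2}$ entries $(\hat{\mathbf X}_\ell)_{ij}$ are conditionally independent BSC outputs of $(\mathbf x_\ell)_i(\mathbf x_\ell)_j$ with bias $\alpha_\ell$, then whenever $\alpha_\ell\gg n^{-1/2}$ one can recover $\mathbf x_\ell$ up to global sign from $\hat{\mathbf X}_\ell$ (this is dense $\mathbb Z_2$-synchronization), so $I(\mathbf x_\ell;\hat{\mathbf X}_\ell)=\Theta(n)$ while $\alpha_\ell n$ may be $o(n)$. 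The two facts you invoke --- $H(\mathbf z)=n\log k$ and the collapsing of $\mathbf z$ onto $k$ coins --- only give $I\le n\log k$ or $I\le n+k$, neither of which is $O(\alpha_\ell n)$ for small $\alpha_\ell$. So your construction is too informative, and no amount of ``aggregating carefully'' will fix it.

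The paper sidesteps this entirely by choosing a different $\alpha$-estimate: set $\hat{\mathbf X}_\ell=\hat{\mathbf x}_\ell\hat{\mathbf x}_\ell^{\mathsf T}$ where $\hat{\mathbf x}_\ell\in\{\pm1\}^n$ is obtained by passing each coordinate of $\mathbf x_\ell$ through a BSC with crossover $\tfrac12-\sqrt{\alpha_\ell/8}$. One checks directly that $\E[(\hat{\mathbf X}_\ell)_{ij}\mid \mathbf x_\ell]=\tfrac{\alpha_\ell}{2}(\mathbf x_\ell)_i(\mathbf x_\ell)_j$, so condition~(c) holds with gap exactly $\alpha_\ell$; conditions~(a),(b) are immediate. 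The point is that now $\hat{\mathbf X}_\ell$ is a deterministic function of the $n$-bit vector $\hat{\mathbf x}_\ell$, so
\[
I(\mathbf x_\ell;\hat{\mathbf X}_\ell)=I(\mathbf x_\ell;\hat{\mathbf x}_\ell)=n\Bigl(1-h_2\bigl(\tfrac12+\sqrt{\alpha_\ell/8}\bigr)\Bigr)=O(\alpha_\ell\, n)\,,
\]
the last equality by a second-order Taylor expansion of $h_2$ at $1/2$. In short, noise should be injected at the \emph{vector} level (and then squared), not independently at each matrix entry; this is what makes the single-view information bound a one-liner rather than an open problem.
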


We prove \cref{theorem:lowerbound-formal} in \cref{section:information-revealed-by-estimates}, \cref{section:weak-recovery-reduces-entropy}, and \cref{section:putting-everything-together}.
We conclude this section showing how \cref{algorithm:heterogeneous-models} is indeed a blackbox estimator.

\begin{proof}[Proof that \cref{algorithm:heterogeneous-models} is a blackbox estimator]
We can split \cref{algorithm:heterogeneous-models} into two steps:
\begin{itemize}
    \item[(1)] Computing $\hat{\mathbf{X}}_1=\hat{\mathbf{X}}_1(\mathbf{G}_1),\ldots, \hat{\mathbf{X}}_t=\hat{\mathbf{X}}_t(\mathbf{G}_t)$ by applying the algorithm in \cref{theorem:weak-recovery-sbm} to $\mathbf{G}_1,\ldots,\mathbf{G}_t\,,$ respectively.
    \item[(2)] Computing an estimate $\hat{\mathbf{z}}$ of $\mathbf{z}$ based only on $(\hat{\mathbf{X}}_\ell)_{\ell\in[t]}\,.$
\end{itemize}
In step (1), since each of $\hat{\mathbf{X}}_\ell$ is applied to $\mathbf{G}_\ell$ independently of all other graphs and since $\mathbf{G}_\ell$ depends on $\mathbf{z}$ only through $\mathbf{x}_\ell\,,$ it is not hard to see that $(\hat{\mathbf{X}}_\ell)_{\ell\in[t]}$ satisfy conditions (a) and (b)  of \cref{definition:alpha-estimates}. Now if $\alpha_\ell=C_\ell$ is the correlation guaranteed by \cref{theorem:weak-recovery-sbm} for $\hat{\mathbf{X}_\ell}\,,$ it follows that $(\hat{\mathbf{X}}_\ell)_{\ell\in[t]}$ are $\alpha$-estimates of $(\dyad{\mathbf{x}_\ell})_{\ell\in[t]}\,,$ where $\alpha=(\alpha_\ell)_{\ell\in[t]}\,.$

Now since step (2) of \cref{algorithm:heterogeneous-models} only processes $(\hat{\mathbf{X}}_\ell)_{\ell\in[t]}\,,$ and since the guarantee on the agreement of $\hat{\mathbf{z}}$ with $\mathbf{z}$ is proved based only on the fact that $(\hat{\mathbf{X}}_\ell)_{\ell\in[t]}$ are $\alpha$-estimates, we can see that, assuming that $t$ is a large enough multiple of $(\log k)/\overline{\alpha}^2\,,$ step (2) is a $(1-k^{-\Omega(1)},1-k^{-\Omega(1)},\alpha,t)$-weak-recovery blackbox estimator.
\end{proof}


\subsection{Upper bound on the information revealed by $\alpha$-estimates}\label{section:information-revealed-by-estimates}
The first step in our proof is to determine how much information about $\mathbf{z}$ the $\alpha$-estimates can reveal. Let $(\hat{\mathbf{X}}_\ell)_{\ell\in[t]}$ be $\alpha$-estimates of $(\dyad{\mathbf{x}_\ell})_{\ell\in[t]}$. The mutual information (measured in bits) between $\mathbf{z}$ and $\hat{\mathbf{X}}_1,\ldots,\hat{\mathbf{X}}_{t}$ can be upper bounded as follows:

\begin{equation}
    \label{eq:mutualinfo_upperbound_dataproc}
    \begin{aligned}
    I(\mathbf{z};\hat{\mathbf{X}}_1,\ldots,\hat{\mathbf{X}}_{t})&\stackrel{(\ast)}\leq I(\mathbf{x}_1,\ldots,\mathbf{x}_t;\hat{\mathbf{X}}_1,\ldots,\hat{\mathbf{X}}_{t})\\
    &=H(\hat{\mathbf{X}}_1,\ldots,\hat{\mathbf{X}}_{t}) - H(\hat{\mathbf{X}}_1,\ldots,\hat{\mathbf{X}}_{t} | \mathbf{x}_1,\ldots,\mathbf{x}_t)\,,
\end{aligned}
\end{equation}
where $(\ast)$ follows from the data-processing inequality\footnote{Notice that due to Property (a) of $\alpha$-estimates, $\mathbf{z}-(\mathbf{x}_\ell)_{\ell\in[t]}-(\hat{\mathbf{x}}_\ell)_{\ell\in[t]}$ is a Markov chain.}.

The entropy $H(\hat{\mathbf{X}}_1,\ldots,\hat{\mathbf{X}}_{t})$ can be upper bounded as follows:
\begin{equation}
    \label{eq:entropy_estimates_upperbound}
    \begin{aligned}
    H(\hat{\mathbf{X}}_1,\ldots,\hat{\mathbf{X}}_{t})\leq \sum_{\ell\in[t]}H(\hat{\mathbf{X}}_\ell)\,.
\end{aligned}
\end{equation}
Now using the chain rule, the conditional entropy $H(\hat{\mathbf{X}}_1,\ldots,\hat{\mathbf{X}}_{t} | \mathbf{x}_1,\ldots,\mathbf{x}_t)$ can be rewritten as follows:
\begin{equation}
    \label{eq:conditional_entropy_estimates}
    \begin{aligned}
     H(\hat{\mathbf{X}}_1,\ldots,\hat{\mathbf{X}}_{t} | \mathbf{x}_1,\ldots,\mathbf{x}_t) &= \sum_{\ell\in[t]} H(\hat{\mathbf{X}}_\ell | \mathbf{x}_1,\ldots,\mathbf{x}_t,\hat{\mathbf{X}}_1,\ldots,\hat{\mathbf{X}}_{\ell-1})\\
     &=\sum_{\ell\in[t]} H(\hat{\mathbf{X}}_\ell |\mathbf{x}_\ell)\,,
\end{aligned}
\end{equation}
where the last equality follows from Property (b) of $\alpha$-estimates.

Combining \eqref{eq:mutualinfo_upperbound_dataproc}, \eqref{eq:entropy_estimates_upperbound} and \eqref{eq:conditional_entropy_estimates} we get

\begin{equation}
    \label{eq:mutualinfo_upperbound_sum}
    \begin{aligned}
    I(\mathbf{z};\hat{\mathbf{X}}_1,\ldots,\hat{\mathbf{X}}_{t})
    &\leq \sum_{l\in[t]} \Paren{H(\hat{\mathbf{X}}_\ell) - H(\hat{\mathbf{X}}_\ell |\mathbf{x}_\ell)}\\
    &= \sum_{l\in[t]} I(\mathbf{x}_\ell;\hat{\mathbf{X}}_\ell)\,
\end{aligned}
\end{equation}

Now for each $\ell\in[t]$, we will derive an upper bound on $I(\mathbf{x}_\ell;\hat{\mathbf{X}}_\ell)$ (which would then induce an upper bound on $I(\mathbf{z};\hat{\mathbf{X}}_1,\ldots,\hat{\mathbf{X}}_{t})$). Note that we cannot obtain a non-trivial upper bound on $I(\mathbf{x}_\ell;\hat{\mathbf{X}}_\ell)$ for arbitrary $\alpha$-estimates because setting $\hat{\mathbf{X}}_\ell=\dyad{\mathbf{x}_\ell}$ would satisfy the definition of $\alpha$-estimates, and for $\hat{\mathbf{X}}_\ell=\dyad{\mathbf{x}_\ell}$ we have $I(\mathbf{x}_\ell;\hat{\mathbf{X}}_\ell)=n\,,$ which is too large for our purposes. What we will do instead is to show that there exist $\alpha$-estimates for which we can get the desired upper bound on $I(\mathbf{x}_\ell;\hat{\mathbf{X}}_\ell)$.

The $\alpha$-estimates that we will consider are of the form $\hat{\mathbf{X}}_\ell=\dyad{\hat{\mathbf{x}}_\ell}\,$ where $\hat{\mathbf{x}}_\ell\in\{+1,1\}^n$ is defined as follows:

\begin{align*}
    \bbP\Paren{\hat{\mathbf{x}}_\ell=\hat{x}_\ell|\mathbf{x}_\ell=x_\ell}=\prod_{i\in[n]}\bbP\Paren{(\hat{\mathbf{x}}_\ell)_i=(\hat{x}_\ell)_i|(\mathbf{x}_\ell)_i=(x_\ell)_i}\,,
\end{align*}
where
\begin{align*}
    \bbP\Paren{(\hat{\mathbf{x}}_\ell)_i=(\hat{x}_\ell)_i\given(\mathbf{x}_\ell)_i=(x_\ell)_i}=
    \begin{cases}
    \frac{1}{2}+\sqrt{\frac{\alpha_\ell}{8}}\quad&\text{if }(\hat{x}_\ell)_i=(x_\ell)_i\,,\\
    \frac{1}{2}-\sqrt{\frac{\alpha_\ell}{8}}\quad&\text{if }(\hat{x}_\ell)_i=-(x_\ell)_i\,.
    \end{cases}
\end{align*}
In other words, we obtain $\hat{\mathbf{x}}_\ell$ by sending the entries of $\mathbf{x}_\ell$ through a binary symmetric channel with flipping probability $\frac{1}{2}-\sqrt{\frac{\alpha_\ell}{8}}$.

Now notice that
\begin{align*}
    \E\Brac{(\hat{\mathbf{x}}_\ell)_i\given(\mathbf{x}_\ell)_i }&=
    (\mathbf{x}_\ell)_i\cdot\bbP\Paren{(\hat{\mathbf{x}}_\ell)_i=(\mathbf{x}_\ell)_i \given (\mathbf{x}_\ell)_i}-(\mathbf{x}_\ell)_i\cdot\bbP\Paren{(\hat{\mathbf{x}}_\ell)_i=-(\mathbf{x}_\ell)_i\given(\mathbf{x}_\ell)_i}\\
    &=2\sqrt{\frac{\alpha_\ell}{8}}(\mathbf{x}_\ell)_i=\sqrt{\frac{\alpha_\ell}{2}}(\mathbf{x}_\ell)_i\,.
\end{align*}
Hence, for $i\neq j$
\begin{align*}
    \E\Brac{(\hat{\mathbf{X}}_\ell)_{i,j}\given(\mathbf{x}_\ell)_i,(\mathbf{x}_\ell)_j }&= \E\Brac{(\hat{\mathbf{x}}_\ell)_i\cdot(\hat{\mathbf{x}}_\ell)_j \given(\mathbf{x}_\ell)_i,(\mathbf{x}_\ell)_j }\\
    &=\E\Brac{(\hat{\mathbf{x}}_\ell)_i\given(\mathbf{x}_\ell)_i }\E\Brac{(\hat{\mathbf{x}}_\ell)_j\given(\mathbf{x}_\ell)_j }\\
    &= \sqrt{\frac{\alpha_\ell}{2}}(\mathbf{x}_\ell)_i\cdot \sqrt{\frac{\alpha_\ell}{2}}(\mathbf{x}_\ell)_j=\frac{\alpha_\ell}{2}(\mathbf{x}_\ell)_i\cdot(\mathbf{x}_\ell)_j\,,
\end{align*}
from which it is not hard to see that
\begin{align*}
    \E\Brac{(\hat{\mathbf{X}}_\ell)_{i,j}\given(\mathbf{x}_\ell)_i=(\mathbf{x}_\ell)_j }-\E\Brac{(\hat{\mathbf{X}}_\ell)_{i,j}\given(\mathbf{x}_\ell)_i=-(\mathbf{x}_\ell)_j }&=\frac{\alpha_\ell}{2}-\Paren{-\frac{\alpha_\ell}{2}}=\alpha_\ell\,.
\end{align*}

This proves that our choice of $(\hat{\mathbf{X}}_\ell)_{\ell\in[t]}$ indeed yields $\alpha$-estimates. In the remainder of this subsection we will show that this particular choice of $\alpha$-estimates is noisy enough to yield a useful upper bound on the mutual information $I(\mathbf{x}_\ell;\hat{\mathbf{X}}_\ell)\,.$

For every $\ell\in[t]$ we have
\begin{equation}
\label{eq:individual-mutual-info-upper-bound}
    I(\mathbf{x}_\ell;\hat{\mathbf{X}}_\ell) = 
    I(\mathbf{x}_\ell;\hat{\mathbf{x}}_\ell)=H(\hat{\mathbf{x}}_\ell)-H(\hat{\mathbf{x}}_\ell|\mathbf{x}_\ell)\leq n - H(\hat{\mathbf{x}}_\ell|\mathbf{x}_\ell)\,,
\end{equation}
where the first equality follows from the fact that there is a one-to-one mapping between $\hat{\mathbf{x}}_\ell$ and $\hat{\mathbf{X}}_\ell=\dyad{\hat{\mathbf{x}}_\ell}\,,$ and the last inequality follows from the fact that $\hat{\mathbf{x}}_\ell\in\{+1,-1\}^n$ is a binary vector of length $n\,.$

Now notice that the conditional distribution of $\hat{\mathbf{x}}_\ell$ given $\mathbf{x}_\ell$ can be seen as a sequence of $n$ independent Bernoulli random variables with parameter $\frac{1}{2}\pm\sqrt{\frac{\alpha_\ell}{8}}$. Therefore\footnote{Notice that $h_2(p)=h_2(1-p)$ and hence $h_2\Paren{\frac{1}{2}+\sqrt{\frac{\alpha_\ell}{8}}}=h_2\Paren{\frac{1}{2}-\sqrt{\frac{\alpha_\ell}{8}}}$.},
\begin{equation}
\label{eq:conditional-entropy-bsc}
    H(\hat{\mathbf{x}}_\ell|\mathbf{x}_\ell)=n\cdot h_2\Paren{\frac{1}{2}+\sqrt{\frac{\alpha_\ell}{8}}}\,,
\end{equation}
where
$$h_2(p)=-p\log_2 p - (1-p)\log_2(1-p)$$
is the binary entropy function.

Combining \eqref{eq:individual-mutual-info-upper-bound} and \eqref{eq:conditional-entropy-bsc}, we get
\begin{align*}
    I(\mathbf{x}_\ell;\hat{\mathbf{X}}_\ell) \leq n \cdot\Paren{1-h_2\Paren{\frac{1}{2}+\sqrt{\frac{\alpha_\ell}8}}}\,.
\end{align*}

Now note that the function $p\mapsto h_2(p)$ is a strictly concave function achieving its maximum at $p=\frac{1}{2}$ for which we have $h_2(p)=1$. Therefore, for small $\alpha_\ell\,,$ we have
$$1-h_2\Paren{\frac{1}{2}+\sqrt{\frac{\alpha_\ell}8}}=\frac{h_2''(1/2)}{2}\Paren{\sqrt{\frac{\alpha_\ell}8}}^2\pm O\Paren{\sqrt{\frac{\alpha_\ell}8}}^3\leq O({\alpha_\ell})\,.$$
We conclude that there exists an absolute constant $C>0$ such that
\begin{align*}
    I(\mathbf{x}_\ell;\hat{\mathbf{X}}_\ell) \leq C\cdot\alpha_\ell\cdot n \,.
\end{align*}

Combining this with \eqref{eq:mutualinfo_upperbound_sum}, we conclude that for some $\alpha$-estimates $(\hat{\mathbf{X}}_{\ell})_{\ell\in[t]}$ of $(\dyad{\mathbf{x}}_{\ell})_{\ell\in[t]}$, we have
\begin{equation}
\label{eq:mutual_information_upper_bound}
I(\mathbf{z};\hat{\mathbf{X}}_1,\ldots,\hat{\mathbf{X}}_{t})\leq \sum_{\ell\in [t]} C\cdot\alpha_\ell\cdot n = C\cdot \overline{\alpha}\cdot t\cdot n\,.
\end{equation}

\subsection{Weakly recovering $\mathbf{z}$ reduces its entropy}\label{section:weak-recovery-reduces-entropy}

Now let $\hat{\mathbf{z}}\in[k]^n$ be an estimate of $\mathbf{z}$ which satisfies \eqref{eq:heterogeneous-weak-recovery-rho} with probability at least $\tau$. We will apply a modified version of the standard Fano inequality in order to upper bound $H(\mathbf{z}|\hat{\mathbf{z}})$. Define the random variable
$$\mathbf{A}=\begin{cases}
1\quad&\text{if }\mathbf{z}\text{ and }\hat{\mathbf{z}}\text{ satisfy  \eqref{eq:heterogeneous-weak-recovery-rho}}\,,\\
0\quad&\text{otherwise\,.}
\end{cases}$$

We have
\begin{align*}
    H(\mathbf{z}|\hat{\mathbf{z}}) &\leq H(\mathbf{A} , \mathbf{z}|\hat{\mathbf{z}})\\
    &= H(\mathbf{A}|\hat{\mathbf{z}}) +  H(\mathbf{z}|\hat{\mathbf{z}}, \mathbf{A})\\
    &\leq H(\mathbf{A}) +  H(\mathbf{z}|\hat{\mathbf{z}}, \mathbf{A}=0)\bbP[\mathbf{A}=0] +  H(\mathbf{z}|\hat{\mathbf{z}}, \mathbf{A}=1)\bbP[\mathbf{A}=1]\\
    &\leq 1 + (n\log_2k)\cdot\bbP[\mathbf{A}=0] + H(\mathbf{z}|\hat{\mathbf{z}}, \mathbf{A}=1)\cdot\bbP[\mathbf{A}=1]\,,
\end{align*}
where the last inequality follows from the fact that $\mathbf{A}$ is a binary random variable (and hence its entropy is at most one bit), and the fact that $\mathbf{z}\in[k]^n$, which implies that $H(\mathbf{z}|\hat{\mathbf{z}}, \mathbf{A}=0)\leq \log_2\Paren{k^n}=n\log_2 k$. We conclude that
\begin{align*}
    H(\mathbf{z}|\hat{\mathbf{z}}) & \leq 1 + n\log_2k - \Paren{n\log_2k - H(\mathbf{z}|\hat{\mathbf{z}}, \mathbf{A}=1)}\cdot \bbP[\mathbf{A}=1]\\
    & \leq 1 + n\log_2k - \tau\cdot\Paren{n\log_2k - H(\mathbf{z}|\hat{\mathbf{z}}, \mathbf{A}=1)}\,,
\end{align*}
where the last inequality follows from the fact that $\bbP[\mathbf{A}=1]\geq \tau$ and the fact that $n\log_2 k - H(\mathbf{z}|\hat{\mathbf{z}}, \mathbf{A}=1)\geq 0$ (because $\mathbf{z}\in[k]^n$). Now since $\mathbf{z}$ is a uniform random variable in $[k]^n$, we have
$H(\mathbf{z})=\log_2\Paren{k^n}=n\log_2 k\,,$
and hence
\begin{equation}
    \label{eq:mutual_information_lower_bound}
\begin{aligned}
    I(\mathbf{z};\hat{\mathbf{z}}) &= H(\mathbf{z}) - H(\mathbf{z}|\hat{\mathbf{z}})\\
    & \geq \tau\cdot\Paren{n\log_2k - H(\mathbf{z}|\hat{\mathbf{z}}, \mathbf{A}=1)} - 1\,.
\end{aligned}
\end{equation}

Now we will focus on upper bounding $H(\mathbf{z}|\hat{\mathbf{z}}, \mathbf{A}=1)$. For every $\hat{z}\in [k]^n$, we have
\begin{equation}
    \label{eq:conditional_entropy_z}
H(\mathbf{z}|\hat{\mathbf{z}}=\hat{z}, \mathbf{A}=1) \leq \log_2|Z(\hat{z},\rho)|\,,
\end{equation}
where
\begin{align*}
    Z(\hat{z},\rho) &= \Set{z\in[k]^n: \max_{\pi \in P_k}\sum_i\frac{1}{n}
    \bracbb{z_i=\pi(\hat{z}_{i})}\geq \frac{1}{k}+\rho}=\bigcup_{\pi\in P_k} Z(\hat{z},\rho, \pi)\,,
\end{align*}
and
\begin{align*}
    Z(\hat{z},\rho,\pi) &= \Set{z\in[k]^n: \sum_i\frac{1}{n}
    \bracbb{z_i=\pi(\hat{z}_{i})}\geq \frac{1}{k}+\rho}\,.
\end{align*}
Hence,
$$|Z(\hat{z},\rho)|\leq \sum_{\pi\in P_k}|Z(\hat{z},\rho,\pi)|\,.$$

We will further divide $Z(\hat{z},\rho,\pi)$ as follows:
$$Z(\hat{z},\rho,\pi) = \bigcup_{\beta n\leq m\leq n} Z(\hat{z},\rho,\pi, m)\,,$$
where
$$\beta=\frac{1}{k}+\rho\,,$$
and
\begin{align*}
    Z(\hat{z},\rho,\pi,m) &= \Set{z\in[k]^n: \sum_i
    \bracbb{z_i=\pi(\hat{z}_{i})}=m}\,.
\end{align*}
It is not hard to see that
$$|Z(\hat{z},\rho,\pi,m)| = \binom{n}{m} \cdot(k-1)^{n-m}\,.$$

By defining $\beta_m=\frac{m}{n}$ and using Stirling's formula\footnote{For large $n$, we have $\log_2(n!)=n\log_2 n - n\log_2 e\pm O(\log_2 n)$.}, we get:
\begin{align*}
    \log_2 |Z(\hat{z},\rho,\pi,m)| &= n\log_2 n - n\log_2 e \\
    &\quad \quad - m \log_2 m + m\log_2 e -  (n-m) \log_2(n-m) + (n-m)\log_2 e \\
    &\quad \quad\pm O(\log n) + (n-m)\log_2(k-1)\\
    &= n\log_2 n - \beta_m n\log_2 (\beta_m n) - (1-\beta_m) n\log_2 ((1-\beta_m) n) \\
    &\quad\quad+ (1-\beta_m)n\log_2(k-1)\pm O(\log n)\\
    &= \Paren{h_2(\beta_m) + (1-\beta_m)\cdot\log_2(k-1)}\cdot n\pm O(\log n)\,.
\end{align*}

By taking derivatives and analyzing the function $g(\beta_m) = h_2(\beta_m) + (1-\beta_m)\cdot\log_2(k-1)$, we can show that $g$ is decreasing after $\beta_m\geq\frac{1}{k}$, and hence for $\beta_m\geq \beta=\frac{1}k+\rho\geq \frac1k$, we have $g(\beta_m)\leq g(\beta)$. In particular, for every $m$ satisfying $\beta n\leq m \leq n$, we have
\begin{align*}
    \log_2 |Z(\hat{z},\rho,\pi,m)| \leq \Paren{h_2(\beta) + (1-\beta)\cdot\log_2(k-1)}\cdot n+ O(\log n)\,.
\end{align*}
Therefore,
\begin{align*}
    |Z(\hat{z},\rho)|&\leq \sum_{\pi\in P_k}\sum_{\beta n\leq m\leq n} |Z(\hat{z},\rho,\pi, m)|\\
    &\leq \sum_{\pi\in P_k}\sum_{\beta n\leq m\leq n} 2^{\Paren{h_2(\beta) + (1-\beta)\cdot\log_2(k-1)}\cdot n+ O(\log n)}\\
    &= k!\cdot n \cdot 2^{\Paren{h_2(\beta) + (1-\beta)\cdot\log_2(k-1)}\cdot n+ O(\log n)}\,,
\end{align*}
and hence
\begin{align*}
    \log_2|Z(\hat{z},\rho)|&\leq \Paren{h_2(\beta) + (1-\beta)\cdot\log_2(k-1)}\cdot n+ O(k\log k + \log n)\,.
\end{align*}

Since this is true for every $\hat{z}\in [k]^n$, we get from \eqref{eq:conditional_entropy_z} that

$$
H(\mathbf{z}|\hat{\mathbf{z}}, \mathbf{A}=1) \leq \Paren{h_2(\beta) + (1-\beta)\cdot\log_2(k-1)}\cdot n+ O(k\log k + \log n)\,.
$$

Combining this with \eqref{eq:mutual_information_lower_bound}, we get
\begin{equation}
    \label{eq:eq:mutual_information_lower_bound_in_n}
\begin{aligned}
    I(\mathbf{z};\hat{\mathbf{z}}) &\geq \tau\Paren{n\log_2 k - \Paren{h_2(\beta) + (1-\beta)\cdot\log_2(k-1)}n-O(\log n + k\log k)} - 1 \\
    &\geq \frac{\tau\cdot n}{2}\Paren{\log_2 k - h_2(\beta) - (1-\beta)\cdot\log_2(k-1)} \,,
\end{aligned}
\end{equation}
where the last inequality assumes\footnote{It is worth noting that if $\beta>1/k\,,$ then $\log_2 k - h_2(\beta) - (1-\beta)\cdot\log_2(k-1)>0\,,$ as we will show in the next subsection.} that $n$ is large enough (and in particular $n\gg k\log k$).

\subsection{Putting everything together}\label{section:putting-everything-together}
\begin{proof}[Proof of \cref{theorem:lowerbound-formal}]

Assume that there is a $(\rho,\tau,\alpha,t)$-weak-recovery blackbox estimator $\hat{z}$ for $\mathbf{z}$ and assume that $n$ is large enough. Let $(\hat{\mathbf{X}}_{\ell})_{\ell\in[t]}$ be $\alpha$-estimates of $(\mathbf{x}_{\ell})_{\ell\in[t]}$ satisfying \eqref{eq:mutual_information_upper_bound}, i.e.,
$$
I(\mathbf{z};\hat{\mathbf{X}}_1,\ldots,\hat{\mathbf{X}}_{t})\leq C\cdot\overline{\alpha}\cdot t \cdot n\,.    
$$

Let $\hat{\mathbf{z}}=\hat{z}(\hat{\mathbf{X}}_1,\ldots,\hat{\mathbf{X}}_\ell)$. From the data-processing inequality, we have
$$
I(\mathbf{z};\hat{\mathbf{z}})\leq I(\mathbf{z};\hat{\mathbf{X}}_1,\ldots,\hat{\mathbf{X}}_{t})\leq C\cdot\overline{\alpha}\cdot t \cdot n\,.    
$$

On the other hand, since $\hat{z}$ is a $(\rho,\tau,\alpha,t)$-weak-recovery blackbox estimator and since $(\hat{\mathbf{x}}_{\ell})_{\ell\in[t]}$ are $\alpha$-estimates of $(\mathbf{x}_{\ell})_{\ell\in[t]}$, it follows that $\hat{\mathbf{z}}$ satisfies \eqref{eq:heterogeneous-weak-recovery-rho} with probability $1-\tau$. It follows from \eqref{eq:eq:mutual_information_lower_bound_in_n} that for $n$ large enough, we have
$$
    I(\mathbf{z};\hat{\mathbf{z}}) \geq \frac{\tau\cdot n}{2}\cdot\Paren{\log_2 k - h_2(\beta) - (1-\beta)\cdot\log_2(k-1)} \,.
$$

We conclude that
$$\frac{\tau\cdot n}{2}\cdot\Paren{\log_2 k - h_2(\beta) - (1-\beta)\cdot\log_2(k-1)}\cdot n \leq C\cdot\overline{\alpha}\cdot t \cdot n\,.$$
Therefore, we must have
$$t\geq \tau\cdot\frac{\log_2 k - h_2(\beta) - (1-\beta)\cdot\log_2(k-1)}{2 C\cdot \overline{\alpha}}\,.$$

Now define
\begin{align*}
    l(\beta) &= \log_2 k - h_2(\beta) - (1-\beta)\cdot\log_2(k-1)\\
    &=\log_2 k +\beta\log_2\beta + (1-\beta)\log_2(1-\beta) - (1-\beta)\cdot\log_2(k-1) \,,
\end{align*}
so that
$$t\geq \frac{\tau\cdot l(\beta)}{2 C\cdot \overline{\alpha}}=\frac{\tau\cdot l(1/k+\rho)}{2 C\cdot \overline{\alpha}}\,.$$

Let us analyze the function $l(\beta)$:

\begin{itemize}
    \item A quick calculation shows that
$$l(1/k)=0\,.$$
\item The derivative of $l$ is
\begin{align*}
    l'(\beta)&=\log_2\beta + \frac{1}{\ln 2} - \log_2(1-\beta) - \frac{1}{\ln 2} + \log_2(k-1)\\
&=\log_2\beta - \log_2(1-\beta) + \log_2(k-1) \,,
\end{align*}
and hence
$$l'(1/k)=0\,.$$
\item The second derivative of $l$ is
\begin{align*}
    l''(\beta)&=\frac{1}{\beta\ln 2}+\frac{1}{(1-\beta)\ln 2}> 0\,,\quad \forall\beta\in(0,1) \,.
\end{align*}
\end{itemize}
Hence, $l'(\beta)>0$ for $\beta\in(1/k,1)\,,$ and since $l(1/k)=0$ we can see that $l(1/k+\rho)>0$ whenever $\rho>0\,.$ Furthermore, a quick calculation reveals that for fixed $\rho$ we have\footnote{Note that $\beta\log_2\beta + (1-\beta)\log_2(1-\beta)=-h_2(\beta)$ and since $0\leq h_2(\beta)\leq 1\,,$ we can see that $\lim_{k\to\infty} \frac{h_2(\beta)}{\log_2(k)}=0\,.$ Hence $\lim_{k\to\infty}\frac{l(1/k+\rho)}{\log_2(k)}$ can be simplified as $\lim_{k\to\infty} 1-(1-\rho-\frac{1}{k})\frac{\log_2(k-1)}{\log_2(k)}=\rho\,.$}
$$\lim_{k\to\infty}\frac{l(1/k+\rho)}{\log_2(k)}=\rho>0\,,$$
which means that
$$\min_{k\geq 2}\frac{l(1/k+\rho)}{\log_2(k)}>0\,,$$
and hence $l(1/k+\rho)=\Omega_\rho(\log_2(k))\,.$ We conclude that
$$t\geq \Omega_\rho\Paren{ \frac{\tau\cdot \log k}{\overline{\alpha}}}\,.$$
\end{proof}
\section{Deferred proofs}\label{section:deferred-proofs}
We present here proofs deferred in the main body of the paper.

\paragraph{Deferred proofs of \cref{section:specialized-weak-recovery}}.

To obtain \cref{theorem:weak-recovery-sbm} we need to introduce results about robust weak recovery.

\begin{definition}[$\mu$-node corrupted, balanced $2$ communities SBM]\label{definition:node-corrupted}
    Let $\mu \in [0,1]\,.$
    Let $x\in \set{\pm 1}^n$ be a vector satisfying $\sum_i x_i=0$ and let $\mathbf{G}^0\sim \sbmx{x}\,.$ An adversary may choose up to $\mu\cdot n$ vertices in $\mathbf{G}^0$ and arbitrarily
    modify edges (and non-edges) incident to at least one of them to produce the corrupted graph $G$. We write $G\stackrel{\mu}{\approx} \mathbf{G}^0$ to denote that $G$ is a $\mu$-node corrupted version of $\mathbf{G}^0$.
\end{definition}

In the context of node corrupted graphs, the definition of weak recovery is still with respect to the original vector $x$ as defined in \cref{eq:k-weak-recovery}. It is known that node robust weak recovery is achievable.

\begin{theorem}[Implicit in \cite{ding2023node}]\label{theorem:node-robust-weak-recovery}
Let $n,d,\eps >0$ be satisfying $d\cdot \eps^2-1=:\delta>0\,.$ There exist:
\begin{itemize}
    \item constants $0<\mu_\delta<1$ and $0<C_\delta<1\,,$ and
    \item a (randomized) polynomial time algorithm\footnote{The subscript $\mathrm{r}$ in $\hat{\mathbf{X}}_{\mathrm{r}}$ stands for ``robust".} $\hat{\mathbf{X}}_{\mathrm{r}}$ taking a graph $G$ of $n$ vertices as input and producing a matrix $\hat{\mathbf{X}}_{\mathrm{r}}(G)\in \Brac{-1, +1}^{n\times n}$ as output,
\end{itemize}
such that $\hat{\mathbf{X}}_{\mathrm{r}}$ is a successful weak-recovery recovery algorithm robust against any $\mu$-node corruption for all $\mu\leq \mu_\delta$. More formally, for every $x\in \set{\pm 1}^n$ satisfying $\sum_i x_i=0\,,$ and every $\mu\leq \mu_\delta\,,$ we have
\begin{align*}
    \E_{\mathbf{G}^0\sim \sbmx{x}}\Brac{ \min_{G:G\stackrel{\mu}{\approx} \mathbf{G}^0} \iprod{\hat{\mathbf{X}}_{\mathrm{r}}(G), \dyad{x}}}\geq C_\delta\cdot n^2\,.
\end{align*}
\end{theorem}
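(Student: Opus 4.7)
The plan is to derive the stated matrix-valued robust estimator from the vector-valued node-robust weak recovery algorithm of \cite{ding2023node}, via an outer-product lift. The sum-of-squares based algorithm of Ding, d'Orsi, Nasser and Steurer computes in polynomial time a vector $\hat{\mathbf{y}}(G) \in \{\pm 1\}^n$ with the following guarantee: for every $\delta = d\eps^2 - 1 > 0$ there exist $\mu_\delta, C'_\delta > 0$ such that for every balanced $x \in \{\pm 1\}^n$ with $\sum_i x_i = 0$ and every $\mu \leq \mu_\delta$,
\begin{equation*}
    \E_{\mathbf{G}^0 \sim \sbmx{x}}\Brac{\min_{G \stackrel{\mu}{\approx} \mathbf{G}^0} \frac{1}{n}\Abs{\iprod{\hat{\mathbf{y}}(G), x}}} \geq C'_\delta\,.
\end{equation*}
(One may have to absorb an $o(1)$ failure probability into the constant, but since $|\iprod{\hat{\mathbf{y}}(G), x}|/n \leq 1$ always, this has no qualitative effect.)

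Given this vector-valued robustness guarantee, I would define $\hat{\mathbf{X}}_{\mathrm{r}}(G) := \dyad{\hat{\mathbf{y}}(G)}$. By construction $\hat{\mathbf{X}}_{\mathrm{r}}(G) \in \{-1,+1\}^{n \times n} \subseteq [-1,+1]^{n \times n}$ and the algorithm remains polynomial time. The key observation is the identity $\iprod{\dyad{\hat{\mathbf{y}}}, \dyad{x}} = \iprod{\hat{\mathbf{y}}, x}^2$, from which, for every realization of $\mathbf{G}^0$,
\begin{equation*}
    \min_{G \stackrel{\mu}{\approx} \mathbf{G}^0} \iprod{\hat{\mathbf{X}}_{\mathrm{r}}(G), \dyad{x}} = \Paren{\min_{G \stackrel{\mu}{\approx} \mathbf{G}^0} \Abs{\iprod{\hat{\mathbf{y}}(G), x}}}^2\,,
\end{equation*}
where I used that squaring is monotone on nonnegatives. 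Applying Jensen's inequality to the nonnegative random variable $Y := \min_{G \stackrel{\mu}{\approx} \mathbf{G}^0}\Abs{\iprod{\hat{\mathbf{y}}(G), x}}$ yields $\E[Y^2] \geq (\E[Y])^2 \geq (C'_\delta n)^2$, and the theorem follows with $C_\delta := (C'_\delta)^2$.

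The real substance of \cref{theorem:node-robust-weak-recovery} therefore lies entirely in the existence of the robust vector-valued estimator, and the main obstacle is to extract it from \cite{ding2023node} in the above form -- in particular, with the minimum over corruptions pushed inside the expectation over $\mathbf{G}^0$, rather than the weaker statement that holds for each fixed (possibly random) corruption. Their SDP relaxation is engineered so that its value changes by at most a controlled amount when the edges incident to up to $\mu_\delta n$ vertices are rewritten arbitrarily, and their analysis certifies this stability down to the Kesten-Stigum threshold via careful bounds on self-avoiding-walk polynomials restricted to uncorrupted vertices. Once this vector-valued statement is in hand, the lift to matrices and the conversion of the linear-in-$n$ correlation into the quadratic-in-$n$ inner product are purely mechanical, as described above.
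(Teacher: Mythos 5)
The paper itself does not prove \cref{theorem:node-robust-weak-recovery}; it states it as ``Implicit in \cite{ding2023node}'' and uses it as a black box, so there is no in-paper argument to compare against. Your reconstruction is mathematically correct: the identity $\iprod{\dyad{\hat{\mathbf{y}}}, \dyad{x}} = \iprod{\hat{\mathbf{y}},x}^2$, the interchange of $\min$ with squaring on nonnegatives, and the Jensen step $\E[Y^2]\geq(\E[Y])^2$ all hold, and you rightly flag that the entire content is the vector-valued robustness guarantee in \cite{ding2023node} with the minimum over corruptions pushed inside the expectation (which, since the corruption set of a fixed $\mathbf{G}^0$ is finite, is equivalent to robustness against every fixed adversary strategy). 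Two small remarks. First, the SoS-based algorithm of \cite{ding2023node} more naturally outputs a degree-two pseudo-moment matrix $\hat{X}\in[-1,+1]^{n\times n}$ with unit diagonal satisfying $\iprod{\hat X, \dyad{x}}\geq \Omega(n^2)$ directly; taking that matrix as $\hat{\mathbf{X}}_{\mathrm{r}}(G)$ avoids the vector-to-matrix lift and the Jensen step entirely, and is presumably the route the authors have in mind. Your route via a rounded $\pm 1$ vector and its outer product also works and is arguably cleaner to state. Second, when you ``absorb an $o(1)$ failure probability'' you should make sure the failure event is an event of $\mathbf{G}^0$ alone (not of the corrupted $G$), which is indeed the right reading of the guarantee in \cite{ding2023node}; as long as that is the case, boundedness of $|\iprod{\hat{\mathbf{y}},x}|/n$ by $1$ makes the absorption harmless exactly as you say.
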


We can use \cref{theorem:node-robust-weak-recovery} to obtain \cref{theorem:weak-recovery-sbm}.

Let $\gamma=\Abs{p-\frac{1}{2}}$ be the unbalancedness in the vector of labels of $\mathbf{x}$, where $p=\bbP(\mathbf{x}_i=+1)$.

The main idea behind the algorithm in \cref{theorem:weak-recovery-sbm} is to first distinguish whether the unbalancedness $\gamma$ is sufficiently small or not. If it is sufficiently small, then we apply the robust algorithm of \cref{theorem:node-robust-weak-recovery}. Otherwise, we can achieve weak-recovery by relying on the degree of a vertex to estimate its community label.

In the following two lemmas, we treat the case where the unbalancedness $\gamma$ is sufficiently small:

\begin{lemma}
\label{lemma:robust-weak-recovery-on-almost-balanced-communities} Let $n,d,\eps,p,\mathbf{x}=(\mathbf{x}_i)_{i\in[n]}$ and $\mathbf{G}\sim\sbm(\mathbf{x})$ be as in \cref{theorem:weak-recovery-sbm} and let $\delta=\frac{\eps^2 d}{4}-1>0$. Let $\mu_\delta,C_\delta$ and $\hat{\mathbf{X}}_{\mathrm{r}}$ be\footnote{It is worth noting that since \cref{theorem:node-robust-weak-recovery} assumes that $\sum_i x_i=0\,,$ then $n$ must be even in \cref{theorem:node-robust-weak-recovery}. However, in \cref{lemma:robust-weak-recovery-on-almost-balanced-communities} we would like $n$ to be general. Hence, if $n$ is odd, we apply the following procedure: (1) we add a fictitious vertex $n+1$ which is not incident to any vertex in $[n]$ and we call the resulting graph (having $[n+1]$ as its set of vertices) as $\tilde{\mathbf{G}}\,,$ (2) we apply $\hat{\mathbf{X}}_{\mathrm{r}}$ on $\tilde{\mathbf{G}}\,,$ and (3) we take the submatrix of $\hat{\mathbf{X}}_{\mathrm{r}}(\tilde{\mathbf{G}})$ induced by the vertices in $[n]$. We still denote the overall algorithm as $\hat{\mathbf{X}}_{\mathrm{r}}$.} as in \cref{theorem:node-robust-weak-recovery} and define
$$\mu'_\delta=\frac{1}{100}\min\set{\mu_\delta, C_\delta}\,.$$
If the unbalancedness $\gamma=\Abs{\frac{1}{2}-p}$ of $\mathbf{x}$ satisfies $\gamma\leq \mu'_\delta\,,$ then for $n$ large enough, we have
\begin{align*}
    \E\Brac{ \iprod{\hat{\mathbf{X}}_{\mathrm{r}}(\mathbf{G}), \dyad{\mathbf{x}}}}\geq \frac34
    C_\delta\cdot n^2\,.
\end{align*}
\end{lemma}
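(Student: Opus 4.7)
The plan is to reduce to the balanced case covered by \cref{theorem:node-robust-weak-recovery} by viewing $\mathbf{G}$ as a small node-corruption of a graph generated from a truly balanced community vector.

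First, conditional on $\mathbf{x}$, I would deterministically choose a set $\mathbf{S}\subseteq [n]$ to flip so that the modified vector $\mathbf{x}'\in\{\pm 1\}^n$, defined by $\mathbf{x}'_i = -\mathbf{x}_i$ for $i\in \mathbf{S}$ and $\mathbf{x}'_i = \mathbf{x}_i$ otherwise, satisfies $\sum_i \mathbf{x}'_i = 0$. Concretely, if $p\ge 1/2$ then $\mathbf{S}$ consists of the $\lvert\sum_i \mathbf{x}_i\rvert/2$ smallest-indexed coordinates where $\mathbf{x}_i=+1$, and symmetrically otherwise. Then $|\mathbf{S}| = \lvert\sum_i \mathbf{x}_i\rvert/2$, and Hoeffding's inequality on the i.i.d.\ variables $\mathbf{x}_i$ gives $|\mathbf{S}|\le \gamma n + O(\sqrt{n\log n})$ with probability at least $1-n^{-10}$. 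Since $\gamma\le \mu'_\delta \le \mu_\delta/100$, the event $\cE = \{|\mathbf{S}|/n\le \mu_\delta\}$ holds with probability $1-n^{-10}$ for $n$ large. Parity of $n$ is handled by the padding convention described in the footnote of \cref{lemma:robust-weak-recovery-on-almost-balanced-communities}.

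Next, I couple $\mathbf{G}\sim\sbm(\mathbf{x})$ with a graph $\mathbf{G}^0\sim\sbm(\mathbf{x}')$, conditionally on $\mathbf{x}$, by sampling the edge indicator of every pair $\{i,j\}$ with $\{i,j\}\cap \mathbf{S}=\emptyset$ once and using it for both graphs, while sampling pairs with at least one endpoint in $\mathbf{S}$ independently in each graph. This is a valid coupling because for pairs outside $\mathbf{S}$ we have $\mathbf{x}_i\mathbf{x}_j=\mathbf{x}'_i\mathbf{x}'_j$, so the marginal Bernoulli parameters coincide. By construction, $\mathbf{G}\stackrel{|\mathbf{S}|/n}{\approx}\mathbf{G}^0$. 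Therefore, applying \cref{theorem:node-robust-weak-recovery} to the (fixed, given $\mathbf{x}$) balanced vector $\mathbf{x}'$ on the event $\cE$ yields
\[
\E\bigl[\langle \hat{\mathbf{X}}_{\mathrm{r}}(\mathbf{G}),\, \dyad{\mathbf{x}'}\rangle \bigm| \mathbf{x}\bigr] \;\ge\; C_\delta\, n^2.
\]

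Finally, I compare $\dyad{\mathbf{x}}$ with $\dyad{\mathbf{x}'}$. Their entrywise difference vanishes on pairs $(i,j)$ with both indices in $\mathbf{S}$ or both outside $\mathbf{S}$, and has absolute value $2$ on the at most $2|\mathbf{S}|(n-|\mathbf{S}|)$ remaining pairs; since $\hat{\mathbf{X}}_{\mathrm{r}}(\mathbf{G})\in[-1,+1]^{n\times n}$, it follows that $\bigl|\langle \hat{\mathbf{X}}_{\mathrm{r}}(\mathbf{G}),\, \dyad{\mathbf{x}}-\dyad{\mathbf{x}'}\rangle\bigr| \le 4|\mathbf{S}|\,n$. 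Combining with the previous display, bounding the contribution of $\cE^c$ trivially by $n^2\cdot n^{-10}$, and taking expectation over $\mathbf{x}$ gives
\[
\E\bigl[\langle \hat{\mathbf{X}}_{\mathrm{r}}(\mathbf{G}),\, \dyad{\mathbf{x}}\rangle\bigr] \;\ge\; C_\delta\, n^2 - 4\,\E[|\mathbf{S}|]\,n - o(n^2) \;\ge\; \bigl(C_\delta - 4\gamma - o(1)\bigr)\, n^2.
\]
Since $\gamma\le \mu'_\delta \le C_\delta/100$, the right-hand side is at least $\tfrac{3}{4}\,C_\delta\, n^2$ for $n$ sufficiently large. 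The main subtlety is in correctly setting up the coupling and the resulting reduction to \cref{theorem:node-robust-weak-recovery} (in particular ensuring $\mathbf{x}'$ is a deterministic function of $\mathbf{x}$ so that the robust guarantee applies conditionally); the rest is straightforward concentration and bookkeeping.
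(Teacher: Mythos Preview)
Your proposal is correct and follows essentially the same approach as the paper: condition on $\mathbf{x}$, pick a balanced $\mathbf{x}'$ differing from $\mathbf{x}$ on a small set, couple $\mathbf{G}$ with $\mathbf{G}^0\sim\sbm(\mathbf{x}')$ so that $\mathbf{G}$ is a small node-corruption of $\mathbf{G}^0$, invoke \cref{theorem:node-robust-weak-recovery}, and then pass from $\dyad{\mathbf{x}'}$ back to $\dyad{\mathbf{x}}$ via an $\ell_1$ bound on the difference of dyads. Your bookkeeping is in fact slightly sharper than the paper's (you observe that $(\dyad{\mathbf{x}}-\dyad{\mathbf{x}'})_{ij}=0$ when both $i,j\in\mathbf{S}$, whereas the paper uses the cruder bound $\Norm{\dyad{x}-\dyad{x'}}_1\le 2\Norm{x-x'}_1\cdot n$), but the structure of the argument is the same.
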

\begin{proof}
For the sake of simplicity, we will only treat the case where $n$ is even. Since $\hat{\mathbf{X}}_{\mathrm{r}}$ is robust against node corruptions, it is not hard to see that the proofs can be adapted to the case where $n$ is odd.

For every $x\in \set{\pm}^{n}$, let $n_+(x)=|\set{i\in [n]:x_i=+1}|$ and $n_-(x)=|\set{i\in [n]:x_i=-1}|$. Since $\bbP(\mathbf{x}_i=+1)=p$, then by the law of large numbers we know that $n_+(\mathbf{x})$ and $n_-(\mathbf{x})$ concentrate around $pn$ and $(1-p)n$, respectively. Furthermore, since $\gamma=\Abs{\frac{1}{2}-p}\leq\mu'_\delta$, we can use standard concentration inequalities to show that with probability at least $1-O(n^{-10})$, the random vector $\mathbf{x}$ satisfies the event
$$\mathcal{E}=\Set{x\in \set{\pm}^{n}: \Abs{\frac{n_+(x)}{n}-\frac{1}{2}}\leq 2\mu'_\delta\text{ and }\Abs{\frac{n_-(x)}{n}-\frac{1}{2}}\leq 2\mu'_\delta}\,.$$

Now since $\bbP(\mathbf{x}\in \mathcal{E})=1-O(n^{-10})$ and since $\Abs{\iprod{\hat{\mathbf{X}}_{\mathrm{r}}(\mathbf{G}), \dyad{\mathbf{x}}}}\leq n^2$, it is not hard to see that
\begin{equation}
\label{eq:inner-product-expectation-condition-on-typical-event}
    \E\Brac{ \iprod{\hat{\mathbf{X}}_{\mathrm{r}}(\mathbf{G}), \dyad{\mathbf{x}}}}=\E\Brac{ \iprod{\hat{\mathbf{X}}_{\mathrm{r}}(\mathbf{G}), \dyad{\mathbf{x}}}\given \mathbf{x}\in \mathcal{E}} \pm o(1) \,,
\end{equation}
so we can focus on studying $\E\Brac{ \iprod{\hat{\mathbf{X}}_{\mathrm{r}}(\mathbf{G}), \dyad{\mathbf{x}}}\given \mathbf{x}\in \mathcal{E}}$.

Now fix $x\in \mathcal{E}$ and condition on the event that $\mathbf{x}=x$. From the definition of $\mathcal{E}$ it is not hard to see that there is $x'\in\set{\pm}^n$ satisfying $\sum_{i\in[n]}x_i'=0$ and $$|D_{x,x'}|\leq 4\mu_\delta'n\,,$$
where
$$D_{x,x'}=|i\in[n]:x_i\neq x_i'|\,.$$

Now construct a random graph $\mathbf{G}'$ as follows:
\begin{itemize}
    \item If $i,j\in [n]\setminus D_{x,x'}\,,$ i.e., if $x_i=x_i'$ and $x_j=x_j'\,,$ then we let $\{i,j\}\in\mathbf{G}'$ if and only if $\{i,j\}\in\mathbf{G}$.
    \item If either $i\in D_{x,x'}$ or $j\in D_{x,x'}\,$ then we put the edge $\{i,j\}$ in $\mathbf{G}'$ with probability $\frac{d}{n}\Paren{1+\frac{\eps}{2}x_i'\cdot x_j'}\,.$
    \item The events $\Paren{\{i,j\}\in \mathbf{G}'}_{i,j\in[n]}$ are mutually independent.
\end{itemize}

It is not hard to see that:
\begin{itemize}
    \item $\mathbf{G}'\sim\sbm(x')\,,$ and
    \item $\mathbf{G}\stackrel{4\mu_\delta'}{\approx}\mathbf{G}'\,,$ i.e., $\mathbf{G}$ can be obtained from $\mathbf{G}'$ by adding or removing edges incident to at most $4\mu_\delta'n$ vertices.
\end{itemize}
Since $4\mu'_\delta\leq \frac{4}{100}\mu_\delta\leq \mu_\delta$, it follows from \cref{theorem:node-robust-weak-recovery} that
\begin{equation}
\label{eq:node-robust-weak-recovery-xprime}
    \E\Brac{ \iprod{\hat{\mathbf{X}}_{\mathrm{r}}(\mathbf{G}), \dyad{x'}}\given \mathbf{x}=x}\geq 
    C_\delta\cdot n^2\,.
\end{equation}

Now notice that
\begin{align*}
    \iprod{\hat{\mathbf{X}}_{\mathrm{r}}(\mathbf{G}), \dyad{x}} = \iprod{\hat{\mathbf{X}}_{\mathrm{r}}(\mathbf{G}), \dyad{x'}} + \iprod{\hat{\mathbf{X}}_{\mathrm{r}}(\mathbf{G}), \dyad{x}-\dyad{x'}}\,,
\end{align*}
and
\begin{align*}
    \Abs{\iprod{\hat{\mathbf{X}}_{\mathrm{r}}(\mathbf{G}), \dyad{x}-\dyad{x'}}}&\leq \Norm{\hat{\mathbf{X}}_{\mathrm{r}}(\mathbf{G})}_\infty\cdot\Norm{\dyad{x}-\dyad{x'}}_1\leq 1\cdot \Norm{\dyad{x}-\dyad{x'}}_1\\
    &\leq 2\Norm{x-x'}_1\cdot n = 4|D_{x,x'}|\cdot n\leq 16\mu_\delta'\cdot n^2\,.
\end{align*}
Combining this with \eqref{eq:node-robust-weak-recovery-xprime}, we get
\begin{align*}
    \E\Brac{ \iprod{\hat{\mathbf{X}}_{\mathrm{r}}(\mathbf{G}), \dyad{x}}\given \mathbf{x}=x}\geq 
    \Paren{C_\delta -16\mu_\delta'}\cdot n^2\geq \frac{84}{100}C_\delta\cdot n^2\,.
\end{align*}
Now since this is true for all $x\in\mathcal{E}\,,$ we conclude that
\begin{align*}
    \E\Brac{ \iprod{\hat{\mathbf{X}}_{\mathrm{r}}(\mathbf{G}), \dyad{\mathbf{x}}}\given \mathbf{x}\in\mathcal{E}}\geq 
    \Paren{C_\delta -16\mu_\delta'}\cdot n^2\geq \frac{84}{100}C_\delta\cdot n^2\,,
\end{align*}
where the last inequality is true because $\mu_\delta'=\frac{1}{100}\min\set{\mu_\delta, C_\delta}$.
Combining the above with \eqref{eq:inner-product-expectation-condition-on-typical-event}, we can deduce that for $n$ large enough, we have
\begin{align*}
    \E\Brac{ \iprod{\hat{\mathbf{X}}_{\mathrm{r}}(\mathbf{G}), \dyad{\mathbf{x}}}}\geq \frac34
    C_\delta\cdot n^2\,.
\end{align*}
\end{proof}

The following lemma takes the algorithm of \cref{lemma:robust-weak-recovery-on-almost-balanced-communities} and applies a symmetrization argument in order to get ``a positive correlation at the edge level".

\begin{lemma}[Pair-wise weak recovery for sufficiently balanced 2 communities stochastic block mode]
\label{lemma:weak-recovery-sufficiently-balanced} Let $n,d,\eps,p,\mathbf{x}=(\mathbf{x}_i)_{i\in[n]}$ and $\mathbf{G}\sim\sbm(\mathbf{x})$ be as in \cref{theorem:weak-recovery-sbm}. Let $\delta=\frac{\eps^2 d}{4}-1>0$ and let $\gamma=\Abs{\frac{1}{2}-p}$ be the unbalancedness of $\mathbf{x}$. There exist constants $\mu_\delta'>0$ and $C_\delta'>0$ and a randomized polynomial-time algorithm\footnote{The subscript $\mathrm{sb}$ in $\hat{\mathbf{X}}_{\mathrm{sb}}$ stands for ``sufficiently balanced".} $\hat{\mathbf{X}}_{\mathrm{sb}}$ taking $\mathbf{G}$ as input and producing a matrix $\hat{\mathbf{X}}_{\mathrm{sb}}(\mathbf{G})\in[-1,+1]^{n\times n}$ such that if $$\gamma\leq \mu_\delta'\,,$$ then for every $i,j\in[n]$ with $i\neq j$, we have
\begin{align*}
    C_{\delta}'\leq \E\Brac{\hat{\mathbf X}_{\mathrm{sb}}(\mathbf{G})_{\ij}\suchthat \mathbf{x}_i=\mathbf{x}_j} - \E\Brac{\hat{\mathbf X}_{\mathrm{sb}}(\mathbf{G})_{\ij}\suchthat \mathbf{x}_i\neq \mathbf{x}_j}\,.
\end{align*}
\end{lemma}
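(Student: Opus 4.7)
The plan is to boost the global inner-product guarantee of \cref{lemma:robust-weak-recovery-on-almost-balanced-communities} into a per-entry guarantee via a symmetrization trick. Let $\bm\pi$ be a uniformly random permutation of $[n]$, drawn independently of $\mathbf{G}$ and $\mathbf{x}$, and let $\bm\pi(\mathbf{G})$ denote the graph obtained by relabeling vertex $i$ of $\mathbf{G}$ as $\bm\pi(i)$. Define
$$\hat{\mathbf{X}}_{\mathrm{sb}}(\mathbf{G})_{ij} \;=\; \hat{\mathbf{X}}_{\mathrm{r}}\!\bigl(\bm\pi(\mathbf{G})\bigr)_{\bm\pi(i),\, \bm\pi(j)}.$$
Because the entries of $\mathbf{x}$ are i.i.d., the pair $(\bm\pi(\mathbf{G}), \bm\pi(\mathbf{x}))$ is distributed as $(\mathbf{G}, \mathbf{x})$, so invariance of the Frobenius inner product under simultaneous relabeling transfers the bound: $\E[\langle \hat{\mathbf{X}}_{\mathrm{sb}}(\mathbf{G}), \dyad{\mathbf{x}}\rangle] = \E[\langle \hat{\mathbf{X}}_{\mathrm{r}}(\mathbf{G}), \dyad{\mathbf{x}}\rangle] \geq \tfrac{3}{4} C_\delta\, n^2$.

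The real payoff of the construction is exchangeability. For any fixed permutation $\tau$ of $[n]$, the change of variables $\bm\pi \mapsto \bm\pi\tau$ (still uniform and independent of $\mathbf{G}, \mathbf{x}$) shows
$$\bigl(\hat{\mathbf{X}}_{\mathrm{sb}}(\mathbf{G})_{\tau(i),\tau(j)},\, \mathbf{x}_{\tau(i)},\, \mathbf{x}_{\tau(j)}\bigr) \;\stackrel{d}{=}\; \bigl(\hat{\mathbf{X}}_{\mathrm{sb}}(\mathbf{G})_{ij},\, \mathbf{x}_i,\, \mathbf{x}_j\bigr).$$
Consequently, for every off-diagonal pair $i \neq j$ the scalar $E := \E[\hat{\mathbf{X}}_{\mathrm{sb}}(\mathbf{G})_{ij}\,\mathbf{x}_i \mathbf{x}_j]$ takes a common value, and so do the conditional expectations $\alpha := \E[\hat{\mathbf{X}}_{\mathrm{sb}}(\mathbf{G})_{ij} \mid \mathbf{x}_i = \mathbf{x}_j]$ and $\beta := \E[\hat{\mathbf{X}}_{\mathrm{sb}}(\mathbf{G})_{ij} \mid \mathbf{x}_i \neq \mathbf{x}_j]$ (which are well defined because $\gamma < 1/2$). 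Splitting the global bound into its diagonal part (trivially bounded by $n$ in absolute value) and its $n(n-1)$ identical off-diagonal terms yields $E \geq \tfrac{3}{4} C_\delta - O(1/n)$.

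It remains to convert the lower bound on $E$ into the required gap $\alpha - \beta$. Writing $p = \tfrac{1}{2} + s\gamma$ with $s \in \{\pm 1\}$ and $q = 1-p$, a direct expansion gives
$$E \;=\; (p^2 + q^2)\,\alpha \;-\; 2pq\,\beta \;=\; \tfrac{1}{2}(\alpha - \beta) \;+\; 2\gamma^2 (\alpha + \beta).$$
Since the entries of $\hat{\mathbf{X}}_{\mathrm{sb}}(\mathbf{G})$ lie in $[-1, +1]$, we have $|\alpha|, |\beta| \leq 1$, and the quadratic-in-$\gamma$ correction is at most $4\gamma^2$. Choosing $\mu'_\delta$ small enough that $4(\mu'_\delta)^2 \leq C_\delta / 4$, and $n$ large enough to absorb the $O(1/n)$ slack, we obtain $\alpha - \beta \geq C_\delta$, establishing the lemma with, say, $C'_\delta = C_\delta/2$.

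The main obstacle I expect is the bookkeeping in the exchangeability identity: one has to verify carefully that the substitution $\bm\pi \mapsto \bm\pi\tau$ correctly implements the desired relabeling on both the graph and the label vector, using that $\tau(\bm\pi\tau^{-1}(\mathbf{G})) = \bm\pi(\mathbf{G})$ and that $\mathbf{x}$ being i.i.d.\ makes the reindexed label vector identically distributed to $\mathbf{x}$. Everything else --- the transfer of the Frobenius bound, the conditional-expectation algebra, and the final calibration of $\mu'_\delta$ against $\gamma^2$ --- is a short routine calculation given the foundation provided by \cref{lemma:robust-weak-recovery-on-almost-balanced-communities}.
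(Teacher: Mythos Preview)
Your proposal is correct and follows essentially the same approach as the paper: symmetrize $\hat{\mathbf X}_{\mathrm r}$ by a uniformly random vertex permutation, use the symmetry of the SBM together with the i.i.d.\ structure of $\mathbf x$ to make $\E[\hat{\mathbf X}_{\mathrm{sb}}(\mathbf G)_{ij}\,\mathbf x_i\mathbf x_j]$ independent of the off-diagonal pair $(i,j)$, and then convert the resulting per-entry bound into the conditional-expectation gap via $E=\tfrac12(\alpha-\beta)+2\gamma^2(\alpha+\beta)$ with $|\alpha|,|\beta|\le 1$. The only cosmetic difference is that the paper expands the permutation average explicitly rather than invoking exchangeability, and it fixes $\mu'_\delta=\tfrac{1}{100}\min\{\mu_\delta,C_\delta\}$ up front (which simultaneously satisfies the hypothesis of \cref{lemma:robust-weak-recovery-on-almost-balanced-communities} and the $\gamma^2$ calibration you need); you should make sure your chosen $\mu'_\delta$ also respects the threshold required by that lemma.
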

\begin{proof}
Let $\mu_\delta,C_\delta$ and $\hat{\mathbf{X}}_{\mathrm{r}}$ be as in \cref{theorem:node-robust-weak-recovery}, and let $\mu_\delta'=C_\delta'=\frac{1}{100}\min\set{\mu_\delta, C_\delta}$. \cref{lemma:robust-weak-recovery-on-almost-balanced-communities} shows that
\begin{align*}
    \E\Brac{ \iprod{\hat{\mathbf{X}}_{\mathrm{r}}(\mathbf{G}), \dyad{\mathbf{x}}}}\geq \frac34
    C_\delta\cdot n^2\,.
\end{align*}

The algorithm $\hat{\mathbf{X}}_{\mathrm{sb}}(\mathbf{G})$ can be obtained by ``symmetrizing" the algorithm $\hat{\mathbf{X}}_{\mathrm{r}}$ as follows: Let ${\bm \sigma}$ be a (uniformly) random permutation $[n]\to[n]$ and let $\mathbf{G}_{{\bm \sigma}}$ be the graph obtained from $\mathbf{G}$ by ${\bm \sigma}$-permuting its vertices, i.e., we let the edge\footnote{For simplicity, We denote ${\bm \sigma}(i)$ as ${\bm \sigma}_i$.} $\{{\bm \sigma}_i,{\bm \sigma}_j\}$ belong to $\mathbf{G}_{\bm \sigma}$ if and only if $\{i,j\}\in \mathbf{G}$. We define the matrix $\hat{\mathbf{X}}_{\mathrm{sb}}(\mathbf{G})\in[-1,+1]^n$ as follows:
$$\hat{\mathbf{X}}_{\mathrm{sb}}(\mathbf{G})_{ij}=\hat{\mathbf{X}}_{r}(\mathbf{G}_{{\bm \sigma}})_{{\bm \sigma}_i{\bm \sigma}_j}\,.$$
In other words, we apply the random permutation ${\bm \sigma}$ to graph $\mathbf{G}\,,$ we apply the algorithm $\hat{\mathbf{X}}_{\mathrm{r}}\,,$ and then we apply the inverse of the permutation on the resulting matrix.

Due to the symmetry of the SBM distribution, it is not hard to see that for every $i,j\in[n]$ with $i\neq j$, we have
\begin{align*}
    \E\Brac{\hat{\mathbf X}_{\mathrm{sb}}(\mathbf{G})_{\ij}\cdot \mathbf{x}_i\mathbf{x}_j}&=\sum_{\substack{i',j'\in[n]:\\i'\neq j'}} \E\Brac{\hat{\mathbf X}_{\mathrm{sb}}(\mathbf{G})_{\ij}\cdot \mathbf{x}_i\mathbf{x}_j\given {\bm \sigma}_i=i',{\bm \sigma}_j=j'}\cdot\bbP\Paren{{\bm \sigma}_i=i',{\bm \sigma}_j=j'}\\
    &=\sum_{\substack{i',j'\in[n]:\\i'\neq j'}} \E\Brac{\hat{\mathbf{X}}_{r}(\mathbf{G}_{{\bm \sigma}})_{{\bm \sigma}_i{\bm \sigma}_j}\cdot \mathbf{x}_i\mathbf{x}_j\given {\bm \sigma}_i=i',{\bm \sigma}_j=j'}\cdot\frac{1}{n(n-1)}\\
    &\stackrel{(\ast)}{=}\frac{1}{n(n-1)}\sum_{\substack{i',j'\in[n]:\\i'\neq j'}} \E\Brac{\hat{\mathbf{X}}_{r}(\mathbf{G})_{{\bm \sigma}_i{\bm \sigma}_j}\cdot \mathbf{x}_{{\bm \sigma}_i}\mathbf{x}_{{\bm \sigma}_j}\given {\bm \sigma}_i=i',{\bm \sigma}_j=j'}\\
    &=\frac{1}{n(n-1)}\sum_{\substack{i',j'\in[n]:\\i'\neq j'}}\E\Brac{\hat{\mathbf X}_{\mathrm{r}}(\mathbf{G})_{i'j'}\cdot \mathbf{x}_{i'}\mathbf{x}_{j'}}\\
    &=\frac{1}{n(n-1)}\E\Brac{ \iprod{\hat{\mathbf{X}}_{\mathrm{r}}(\mathbf{G}), \dyad{\mathbf{x}}}} - \frac{1}{n(n-1)}\sum_{i\in[n]}\E\Brac{\hat{\mathbf X}_{\mathrm{r}}(\mathbf{G})_{ii}\cdot \mathbf{x}_i^2}\\
    &\geq \frac{1}{n(n-1)}\cdot \frac34
    C_\delta\cdot n^2 -\frac{1}{n-1}\\
    &\geq \frac34C_\delta - o(1)\,,
\end{align*}
where $(\ast)$ follows from the symmetry of the SBM distribution under the simultaneous permutation of vertices and labels.

Now notice that
\begin{align*}
    \E\Brac{\hat{\mathbf X}_{\mathrm{sb}}(\mathbf{G})_{\ij}\cdot \mathbf{x}_i\mathbf{x}_j}&= \E\Brac{\hat{\mathbf X}_{\mathrm{sb}}(\mathbf{G})_{\ij}\suchthat \mathbf{x}_i=\mathbf{x}_j}\cdot\bbP\Paren{\mathbf{x}_i=\mathbf{x}_j} - \E\Brac{\hat{\mathbf X}_{\mathrm{sb}}(\mathbf{G})_{\ij}\suchthat \mathbf{x}_i\neq \mathbf{x}_j}\cdot\bbP\Paren{\mathbf{x}_i\neq \mathbf{x}_j}\\
    &=\E\Brac{\hat{\mathbf X}_{\mathrm{sb}}(\mathbf{G})_{\ij}\suchthat \mathbf{x}_i=\mathbf{x}_j}\cdot\Paren{p^2+(1-p)^2} - \E\Brac{\hat{\mathbf X}_{\mathrm{sb}}(\mathbf{G})_{\ij}\suchthat \mathbf{x}_i\neq \mathbf{x}_j}\cdot\Paren{2p(1-p)}\\
    &=\E\Brac{\hat{\mathbf X}_{\mathrm{sb}}(\mathbf{G})_{\ij}\suchthat \mathbf{x}_i=\mathbf{x}_j}\cdot\Paren{\frac{1}{2}+2\gamma^2} - \E\Brac{\hat{\mathbf X}_{\mathrm{sb}}(\mathbf{G})_{\ij}\suchthat \mathbf{x}_i\neq \mathbf{x}_j}\Paren{\frac{1}{2}-2\gamma^2}\\
    &\leq \frac{1}{2}\cdot\E\Brac{\hat{\mathbf X}_{\mathrm{sb}}(\mathbf{G})_{\ij}\suchthat \mathbf{x}_i=\mathbf{x}_j}+2\gamma^2 - \frac{1}{2}\cdot\E\Brac{\hat{\mathbf X}_{\mathrm{sb}}(\mathbf{G})_{\ij}\suchthat \mathbf{x}_i\neq \mathbf{x}_j}+2\gamma^2\,,
\end{align*}
where in the last inequality we used the fact that $\Abs{\hat{\mathbf X}_{\mathrm{sb}}(\mathbf{G})_{\ij}}\leq 1$. We can deduce that for $n$ large enough, we have
\begin{align*}
    \E\Brac{\hat{\mathbf X}_{\mathrm{sb}}(\mathbf{G})_{\ij}\suchthat \mathbf{x}_i=\mathbf{x}_j}&- \E\Brac{\hat{\mathbf X}_{\mathrm{sb}}(\mathbf{G})_{\ij}\suchthat \mathbf{x}_i\neq \mathbf{x}_j}\\
    &\geq 2\E\Brac{\hat{\mathbf X}_{\mathrm{sb}}(\mathbf{G})_{\ij}\cdot \mathbf{x}_i\mathbf{x}_j} - 8\gamma^2\geq \frac{3}{2}C_\delta - o(1) - 8\gamma^2\geq C_\delta\,,
\end{align*}
where the last inequality follows from the fact that
$$8\gamma^2\leq 8\mu_\delta'^2\leq 8\Paren{\frac{1}{100}C_\delta}^2\leq \frac{C_\delta}{100}\,.$$

By picking $C_\delta'=C_\delta\,,$ the lemma follows.
\end{proof}

Now we turn to show that if $\mathbf{x}$ is sufficiently unbalanced, then there exists an efficient algorithm that achieves pair-wise weak recovery.

\begin{lemma}[Pair-wise weak recovery for sufficiently unbalanced 2 communities stochastic block mode]
\label{lemma:weak-recovery-sufficiently-unbalanced} Let $n,d,\eps,p,\mathbf{x}=(\mathbf{x}_i)_{i\in[n]}$ and $\mathbf{G}\sim\sbm(\mathbf{x})$ be as in \cref{theorem:weak-recovery-sbm}. Further assume
that $p\in[0,1]$. Let $\delta=\frac{\eps^2 d}{4}-1>0$ and let $\gamma=\Abs{\frac{1}{2}-p}$ be the unbalancedness of $\mathbf{x}$, and let $\mu_\delta'$ be as in \cref{lemma:weak-recovery-sufficiently-balanced}. There exists a constant $C_{\delta}''>0$ and a randomized polynomial-time algorithm
\footnote{The subscript $\mathrm{su}$ in $\hat{\mathbf{X}}_{\mathrm{su}}$ stands for ``sufficiently unbalanced".}
$\hat{\mathbf{X}}_{\mathrm{su}}$ taking $\mathbf{G}$ as input and producing a matrix $\hat{\mathbf{X}}_{\mathrm{su}}(\mathbf{G})\in[-1,+1]^{n\times n}$ such that if $$\gamma\geq \frac12\mu_\delta'\,,$$ then for every $i,j\in[n]$ with $i\neq j$, we have
\begin{align*}
    C_{\delta}''\leq \E\Brac{\hat{\mathbf X}_{\mathrm{sb}}(\mathbf{G})_{\ij}\suchthat \mathbf{x}_i=\mathbf{x}_j} - \E\Brac{\hat{\mathbf X}_{\mathrm{sb}}(\mathbf{G})_{\ij}\suchthat \mathbf{x}_i\neq \mathbf{x}_j}\,.
\end{align*}
\end{lemma}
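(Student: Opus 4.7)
The plan is to split on $p\,.$ When $p\in\set{0,1}$ the vector $\mathbf{x}$ is almost-surely constant, so $\bbP(\mathbf{x}_i\neq\mathbf{x}_j)=0$ and the convention following \cref{theorem:weak-recovery-sbm} sets $\E\Brac{\hat{\mathbf{X}}_{\mathrm{su}}(\mathbf{G})_\ij\suchthat \mathbf{x}_i\neq\mathbf{x}_j}=0\,.$ Outputting the all-ones matrix $\hat{\mathbf{X}}_{\mathrm{su}}(\mathbf{G})_\ij\equiv 1$ then trivially gives separation $1\geq C_\delta''\,.$

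For $p\in(0,1)$ with $\gamma\geq \mu_\delta'/2\,,$ the two communities have noticeably different sizes, hence noticeably different expected degrees, and I will exploit this with a degree-threshold estimator. Let $\sigma=\sign(p-1/2)\in\set{\pm 1}\,,$ unknown to the algorithm. Conditional on $\mathbf{x}_i\,,$ the degree $\deg_{\mathbf{G}}(i)$ is a sum of $n-1$ independent Bernoullis with success probability $(d/n)\Paren{1+\sigma\eps\gamma\mathbf{x}_i}\,,$ so it has mean $\approx d(1+\sigma\eps\gamma\mathbf{x}_i)$ and standard deviation $\Theta(\sqrt{d})\,.$ The algorithm computes $\hat{\mathbf{x}}_i=\sign\Paren{\deg_{\mathbf{G}}(i)-d}$ with arbitrary tie-breaking, and returns $\hat{\mathbf{X}}_{\mathrm{su}}(\mathbf{G})_\ij=\hat{\mathbf{x}}_i\hat{\mathbf{x}}_j\in[-1,+1]\,.$ Crucially, the output depends on $\hat{\mathbf{x}}$ only through the product $\hat{\mathbf{x}}_i\hat{\mathbf{x}}_j\,,$ so the sign $\sigma$ never needs to be identified.

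The analysis proceeds in two steps. First, I will show that there is a constant $c_\delta>0$ depending only on $\delta$ such that $\Abs{\E\Brac{\hat{\mathbf{x}}_i\suchthat\mathbf{x}_i}}\geq c_\delta$ with sign $\sigma\mathbf{x}_i\,.$ This reduces to the claim that a Binomial with mean $d(1\pm \eps\gamma)$ puts at least a $1/2+c_\delta$ fraction of its mass on the appropriate side of the threshold $d\,;$ the key quantitative input is the signal-to-noise bound
$$\sqrt{d}\,\eps\gamma\geq 2\sqrt{1+\delta}\cdot\frac{\mu_\delta'}{2}\geq \mu_\delta'\,,$$
which follows from $d\eps^2/4\geq 1+\delta$ together with $\gamma\geq \mu_\delta'/2\,.$ Second, I will argue that $\hat{\mathbf{x}}_i$ and $\hat{\mathbf{x}}_j$ are asymptotically conditionally independent given $(\mathbf{x}_i,\mathbf{x}_j)\,$: the only edge shared between $\deg_{\mathbf{G}}(i)$ and $\deg_{\mathbf{G}}(j)$ is $\{i,j\}$ itself, contributing $O(1)$ to each degree, and a direct covariance computation shows that the dependence induced by the common latent labels $\mathbf{x}_k$ with $k\neq i,j$ contributes only $O(d^2\eps^2/n)=o(1)\,.$ Combining the two steps,
$$\E\Brac{\hat{\mathbf{x}}_i\hat{\mathbf{x}}_j\suchthat\mathbf{x}_i,\mathbf{x}_j}=\E\Brac{\hat{\mathbf{x}}_i\suchthat\mathbf{x}_i}\E\Brac{\hat{\mathbf{x}}_j\suchthat\mathbf{x}_j}+o(1)\geq \sigma^2 c_\delta^2\mathbf{x}_i\mathbf{x}_j+o(1)=c_\delta^2\mathbf{x}_i\mathbf{x}_j+o(1)\,,$$
which yields $A-B\geq 2c_\delta^2-o(1)\geq C_\delta''$ for $n$ large enough, where the $\sigma^2=1$ cancellation is exactly why the algorithm need not identify $\sigma\,.$

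The hard part will be the uniform-in-$(d,\eps,\gamma)$ Binomial-tail bound establishing $c_\delta>0\,.$ For $d$ much larger than $1/\eps^2$ this is a routine Gaussian approximation via CLT, but near the Kesten--Stigum threshold $d\approx 4/\eps^2$ the Binomial is quite discrete, so I would either invoke a quantitative Berry--Esseen estimate or prove the bound directly via a coupling: write $\mathrm{Poi}(d(1+\eps\gamma))$ as the sum of an independent $\mathrm{Poi}(d(1-\eps\gamma))$ and $\mathrm{Poi}(2d\eps\gamma)\,,$ so that the probability of crossing the threshold when moving from the lower to the higher mean is at least $\Omega(\sqrt{d}\,\eps\gamma)\geq\Omega(\mu_\delta')$ via the local-CLT estimate $\bbP(\mathrm{Poi}(d(1-\eps\gamma))=d)=\Theta(1/\sqrt{d})\,.$ A secondary technical point is making the approximate-independence step fully rigorous in the sparse regime; this can be handled by the standard coupling of $(\deg_{\mathbf{G}}(i),\deg_{\mathbf{G}}(j))$ with two independent Binomials at total-variation cost $O(1/n)\,.$
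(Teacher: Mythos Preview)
Your approach is correct and shares the paper's core idea: when $\gamma$ is bounded below, the two communities have noticeably different expected degrees, so a degree-based estimator suffices. The route differs in the choice of estimator. The paper uses a \emph{truncated linear} functional of the centered degree,
\[
\hat{x}_i^{(j)}(\mathbf{G})=\frac{1}{C}\bigl(\mathrm{deg}_{\neq j}(i)-d(1-2/n)\bigr)\cdot\Bracbb{\bigl|\mathrm{deg}_{\neq j}(i)-d(1-2/n)\bigr|\leq C}\,,
\]
for a constant $C=C(\delta)$, and sets $\hat{\mathbf{X}}_{\mathrm{su}}(\mathbf{G})_{ij}=\hat{x}_i^{(j)}\hat{x}_j^{(i)}$. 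This buys two things. First, dropping the edge $\{i,j\}$ from each degree makes the two factors (essentially) conditionally independent given $(\mathbf{x}_i,\mathbf{x}_j)$, so the factoring step is immediate rather than requiring the TV-coupling argument you outline. Second, and this is where the paper's path is genuinely shorter, the linear-with-clipping form gives $\E[\hat{x}_i^{(j)}\mid\mathbf{x}_i]=\frac{d(1-2/n)\eps\gamma\mathbf{x}_i}{C}$ up to an error bounded via Cauchy--Schwarz plus Chebyshev by $O\bigl((d+d^2\eps^2\gamma^2)/C^2\bigr)$; choosing $C$ a large enough multiple of $d\eps+1/(\eps\mu_\delta')$ makes this error a small fraction of the main term \emph{uniformly} over all $(d,\eps,\gamma)$ with $d\eps^2/4=1+\delta$ and $\gamma\geq\mu_\delta'/2$. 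This completely sidesteps the uniform Binomial-tail bound you correctly flag as the hard step of the sign-threshold approach. So: your estimator is cleaner to state but pays for it in the analysis; the paper's soft estimator is slightly more elaborate to define but reduces everything to a one-line second-moment computation. A minor point: your case split on $p\in\{0,1\}$ is unnecessary and a bit confusing, since the algorithm does not see $p$; these are just the $\gamma=\tfrac12$ endpoint, and your degree argument already covers them without modification.
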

\begin{proof}
For the sake of simplicity we may assume without loss of generality that $p>\frac{1}{2}$\,, i.e., $p=\frac12+\gamma$ and hence ``$+1$" is the larger community in expectation.

For every $i,j\in[n]$, let
$$\mathrm{\textbf{deg}}_{\neq j}(i)=\Abs{\Set{v\in[n]\setminus \{i,j\}:\{i,v\}\in\mathbf{G}}}$$ be the number of vertices in $[n]\setminus\set{i,j}$ which are adjacent to $i$ in $\mathbf{G}$.

Let $C>0$ be a large enough constant (to be chosen later) and let
\begin{align*}
    \hat{x}_i^{(j)}(\mathbf{G})&=\frac{1}{C}\Paren{\mathrm{\textbf{deg}}_{\neq j}(i)-d(1-2/n)}\cdot\Bracbb{\Abs{\mathrm{\textbf{deg}}_{\neq j}(i)-d(1-2/n)}\leq C}\in[-1,1]\,,
\end{align*}
and define the matrix $\hat{\mathbf{X}}_{\mathrm{su}}(\mathbf{G})\in[-1,+1]^{n\times n}$ as:
$$
\hat{\mathbf{X}}_{\mathrm{su}}(\mathbf{G})_{ij}= \hat{x}_i^{(j)}(\mathbf{G})\cdot \hat{x}_j^{(i)}(\mathbf{G})\,.
$$

It is not hard to see that given $(\mathbf{x}_i,\mathbf{x}_j)\,,$ the random variables $\mathrm{\textbf{deg}}_{\neq j}(i)$ and $\mathrm{\textbf{deg}}_{\neq i}(j)$ are conditionally independent. Therefore, $\hat{x}_i^{(j)}(\mathbf{G})$ and $ \hat{x}_j^{(i)}(\mathbf{G})$ are conditionally independent given $(\mathbf{x}_i,\mathbf{x}_j)\,,$ hence
\begin{align*}
    \E\Brac{\hat{\mathbf{X}}_{\mathrm{su}}(\mathbf{G})_{ij}\given \mathbf{x}_i,\mathbf{x}_j}
    &= \E\Brac{\hat{x}_i^{(j)}(\mathbf{G})\given \mathbf{x}_i,\mathbf{x}_j}\cdot \E\Brac{\hat{x}_j^{(i)}(\mathbf{G})\given \mathbf{x}_i,\mathbf{x}_j}\\
    &= \E\Brac{\hat{x}_i^{(j)}(\mathbf{G})\given \mathbf{x}_i}\cdot \E\Brac{\hat{x}_j^{(i)}(\mathbf{G})\given \mathbf{x}_j}\,.
\end{align*}

Now, for every $v\in[v]\setminus\set{i,j}\,,$ we have
\begin{align*}
    \bbP\Paren{\{i,v\}\in\mathbf{G}\given \mathbf{x}_i}
    &=\E\Brac{ \bbP\Paren{\{i,v\}\in\mathbf{G}\given \mathbf{x}_i,\mathbf{x}_v}\given \mathbf{x}_i}=\E\Brac{ \frac{d}{n}\Paren{1+\frac{1}{2}\eps\cdot\mathbf{x}_i\mathbf{x}_v}\given \mathbf{x}_i}\\
    &=\frac{d}{n}\Paren{1+\frac{1}{2}\eps\cdot\mathbf{x}_i\E\brac{\mathbf{x}_v}}=\frac{d}{n}\Paren{1+\frac{1}{2}\eps\cdot\Paren{p-(1-p)}}\\
    &=\frac{d}{n}\Paren{1+\eps\gamma\mathbf{x}_i}\,.
\end{align*}

Therefore, the conditional distribution of $\mathrm{\textbf{deg}}_{\neq j}(i)$ given $\mathbf{x}_i$ is $\mathrm{Binomial}\Paren{n-2, \frac{d}{n}\Paren{1+\eps\gamma\mathbf{x}_i}}$, hence
\begin{align*}
    \E\Brac{\mathrm{\textbf{deg}}_{\neq j}(i)\given \mathbf{x}_i}&=(n-2)\cdot\frac{d}{n}\Paren{1+\eps\gamma\mathbf{x}_i}\,,
\end{align*}
and so
\begin{equation}
\label{eq:conditional-expectation-centered-degree-without-bounding}
    \E\Brac{\frac{1}{C}\Paren{\mathrm{\textbf{deg}}_{\neq j}(i)-d(1-2/n)}\given \mathbf{x}_i}=\frac{d(1-2/n)\cdot{\eps\gamma\mathbf{x}_i}}{C}\,.
\end{equation}

On the other hand, by the Cauchy-Schwarz inequality, we have
\begin{align*}
    \E&\Brac{\Abs{\hat{x}_i^{(j)}(\mathbf{G})-\frac{1}{C}\Paren{\mathrm{\textbf{deg}}_{\neq j}(i)-d(1-2/n)}}\given \mathbf{x}_i}\\
    &\leq \E\Brac{\frac{1}{C}\Abs{\mathrm{\textbf{deg}}_{\neq j}(i)-d(1-2/n)}\cdot\Bracbb{\Abs{\mathrm{\textbf{deg}}_{\neq j}(i)-d(1-2/n)}> C}\given \mathbf{x}_i}\\
    &\leq \frac{1}{C}\E\Brac{\Paren{\mathrm{\textbf{deg}}_{\neq j}(i)-d(1-2/n)}^2\given \mathbf{x}_i}^{1/2}\cdot\E\Brac{\Bracbb{\Abs{\mathrm{\textbf{deg}}_{\neq j}(i)-d(1-2/n)}> C}^2\given \mathbf{x}_i}^{1/2}\\
    &\leq \frac{1}{C}\E\Brac{\Paren{\mathrm{\textbf{deg}}_{\neq j}(i)-d(1-2/n)}^2\given \mathbf{x}_i}^{1/2}\cdot\bbP\Paren{\Abs{\mathrm{\textbf{deg}}_{\neq j}(i)-d(1-2/n)}> C\given \mathbf{x}_i}^{1/2}\,,
\end{align*}
and by Chebychev's inequality, we have
\begin{align*}
    \bbP\Paren{\Abs{\mathrm{\textbf{deg}}_{\neq j}(i)-d(1-2/n)}> C\given \mathbf{x}_i}
    &\leq \frac{\E\Brac{\Paren{\mathrm{\textbf{deg}}_{\neq j}(i)-d(1-2/n)}^2\given \mathbf{x}_i}}{C^2}\,,
\end{align*}
hence
\begin{equation}
\label{eq:degree-expectation-concentration-inequality}
    \E\Brac{\Abs{\hat{x}_i^{(j)}(\mathbf{G})-\frac{1}{C}\Paren{\mathrm{\textbf{deg}}_{\neq j}(i)-d(1-2/n)}}\given \mathbf{x}_i}\leq \frac{\E\Brac{\Paren{\mathrm{\textbf{deg}}_{\neq j}(i)-d(1-2/n)}^2\given \mathbf{x}_i}}{C^2}\,.
\end{equation}

Now notice that
\begin{align*}
    \E\Brac{\Paren{\mathrm{\textbf{deg}}_{\neq j}(i)-d(1-2/n)}^2\given \mathbf{x}_i}&=\E\Brac{\Paren{\Paren{\mathrm{\textbf{deg}}_{\neq j}(i)-d(1-2/n)\Paren{1+\eps\gamma\mathbf{x}_i} + \Paren{\eps\gamma\mathbf{x}_i}d(1-2/n) }}^2\given \mathbf{x}_i}\\
    &\stackrel{(\ast)}{\leq}\E\Brac{2\Paren{\mathrm{\textbf{deg}}_{\neq j}(i)-d(1-2/n)\Paren{1+\eps\gamma\mathbf{x}_i}}^2 + 2\Paren{\Paren{\eps\gamma\mathbf{x}_i}d(1-2/n) }^2\given \mathbf{x}_i}\\
    &\leq 2\E\Brac{\Paren{\mathrm{\textbf{deg}}_{\neq j}(i)-d(1-2/n)\Paren{1+\eps\gamma\mathbf{x}_i}}^2\given \mathbf{x}_i}+2d^2\epsilon^2\gamma^2\,,
\end{align*}
where $(\ast)$ is true because $(a+b)^2\leq 2a^2+2b^2$ for all $a,b\in\mathbb{R}$.

Now since the conditional distribution of $\mathrm{\textbf{deg}}_{\neq j}(i)$ given $\mathbf{x}_i$ is $\mathrm{Binomial}\Paren{n-2, \frac{d}{n}\Paren{1+\eps\gamma\mathbf{x}_i}}$, its conditional expectation is equal to $d(1-2/n)\Paren{1+\eps\gamma\mathbf{x}_i}$ and its conditional variance is equal to
\begin{align*}
    \E\Brac{\Paren{\mathrm{\textbf{deg}}_{\neq j}(i)-d(1-2/n)\Paren{1+\eps\gamma\mathbf{x}_i}}^2\given \mathbf{x}_i}
    &= (n-2)\Paren{\frac{d}{n}\Paren{1+\eps\gamma\mathbf{x}_i}}\cdot \Paren{1-\frac{d}{n}\Paren{1+\eps\gamma\mathbf{x}_i}}\\
    &\leq d(1+\eps\gamma)\leq 2d\,.
\end{align*}
Therefore,
\begin{align*}
    \E\Brac{\Paren{\mathrm{\textbf{deg}}_{\neq j}(i)-d(1-2/n)}^2\given \mathbf{x}_i}
    &\leq 4d+2d^2\epsilon^2\gamma^2\,,
\end{align*}
and hence from \eqref{eq:degree-expectation-concentration-inequality} we get
\begin{align*}
    \E&\Brac{\Abs{\hat{x}_i^{(j)}(\mathbf{G})-\frac{1}{C}\Paren{\mathrm{\textbf{deg}}_{\neq j}(i)-d(1-2/n)}}\given \mathbf{x}_i}\leq \frac{4d+2d^2\epsilon^2\gamma^2}{C^2}\,.
\end{align*}

Combining this with \eqref{eq:conditional-expectation-centered-degree-without-bounding} we get
\begin{align*}
    \E\Brac{\hat{x}_i^{(j)}(\mathbf{G})\given \mathbf{x}_i}&=\frac{d(1-2/n)\cdot\eps\gamma\mathbf{x}_i}{C}\pm O\Paren{\frac{d+d^2\epsilon^2\gamma^2}{C^2}}\\
    &=\frac{d(1-2/n)}{C}\Paren{\eps\gamma\mathbf{x}_i\pm O\Paren{\frac{1+d\epsilon^2\gamma^2}{C}}}\,.
\end{align*}

Let
$$C=\tilde{C}(1-2/n)\Paren{d\epsilon +  \frac{1}{\epsilon \mu'_\delta}}\,,$$
for some large enough constant $\tilde{C}\geq 1$ to be chosen later. We have
$$O\Paren{\frac{1}{C}}\leq O\Paren{\frac{1}{\tilde{C}(1-2/n)/(\eps\mu'_\delta)}}=O\Paren{\frac{\eps\mu'_\delta}{\tilde{C}}}\leq O\Paren{\frac{\eps\gamma}{\tilde{C}}}\,,$$
where the last inequality follows from $\gamma\geq\frac{\mu_\delta'}{2}$. Furthermore,
$$
O\Paren{\frac{d\epsilon^2\gamma^2}{C}}\leq O\Paren{\frac{d\epsilon^2\gamma^2}{\tilde{C}(1-2/n)\cdot d\epsilon}}\leq O\Paren{\frac{\epsilon\gamma^2}{\tilde{C}}}\leq O\Paren{\frac{\epsilon\gamma}{\tilde{C}}}\,.
$$
We conclude that
\begin{align*}
    \E\Brac{\hat{x}_i^{(j)}(\mathbf{G})\given \mathbf{x}_i}
    &=\frac{d(1-2/n)}{C}\Paren{\eps\gamma\mathbf{x}_i\pm O\Paren{\frac{\epsilon\gamma}{\tilde{C}}}}=\frac{d(1-2/n)\epsilon\gamma}{C}\Paren{\mathbf{x}_i\pm O\Paren{\frac{1}{\tilde{C}}}}\\
    &=\frac{d\epsilon\gamma}{\tilde{C}\Paren{d\epsilon +  \frac{1}{\epsilon \mu'_\delta}}}\Paren{\mathbf{x}_i\pm O\Paren{\frac{1}{\tilde{C}}}}=C''_\delta\gamma\Paren{\mathbf{x}_i\pm O\Paren{\frac{1}{\tilde{C}}}}\,,
\end{align*}
where
\begin{align*}
    C''_\delta&=\frac{d\epsilon^2}{\tilde{C}\Paren{d\epsilon^2 +  \frac{1}{ \mu'_\delta}}}=\frac{4(1+\delta)}{\tilde{C}\Paren{4(1+\delta)+  \frac{1}{ \mu'_\delta}}}=\frac{4(1+\delta)\mu'_\delta}{\tilde{C}\Paren{4(1+\delta)\mu'_\delta+  1}}\,.
\end{align*}
Notice how $C''_\delta$ depends only on $\delta\,.$

Finally,
\begin{align*}
    \E\Brac{\hat{\mathbf{X}}_{\mathrm{su}}(\mathbf{G})_{ij}\given \mathbf{x}_i,\mathbf{x}_j}
    &=\E\Brac{\hat{x}_i^{(j)}(\mathbf{G})\given \mathbf{x}_i}\cdot\E\Brac{\hat{x}_j^{(i)}(\mathbf{G})\given \mathbf{x}_j}\\
    &=C''_\delta\gamma\Paren{\mathbf{x}_i\pm O\Paren{\frac{1}{\tilde{C}}}}\cdot C''_\delta\gamma\Paren{\mathbf{x}_j\pm O\Paren{\frac{1}{\tilde{C}}}}\\
    &=(C''_\delta\gamma)^2\Paren{\mathbf{x}_i\mathbf{x}_j\pm O\Paren{\frac{1}{\tilde{C}}}}\,,
\end{align*}
and so
\begin{align*}
    \E\Brac{\hat{\mathbf{X}}_{\mathrm{su}}(\mathbf{G})_{ij}\given \mathbf{x}_i=\mathbf{x}_j} - \E\Brac{\hat{\mathbf{X}}_{\mathrm{su}}(\mathbf{G})_{ij}\given \mathbf{x}_i\neq\mathbf{x}_j}
    &=(C''_\delta\gamma)^2\Paren{2\pm O\Paren{\frac{1}{\tilde{C}}}}\,.
\end{align*}
By choosing $\tilde{C}$ to be an absolute constant which is large enough, we get
$$\E\Brac{\hat{\mathbf{X}}_{\mathrm{su}}(\mathbf{G})_{ij}\given \mathbf{x}_i=\mathbf{x}_j} - \E\Brac{\hat{\mathbf{X}}_{\mathrm{su}}(\mathbf{G})_{ij}\given \mathbf{x}_i\neq\mathbf{x}_j}\geq (C''_\delta\gamma)^2\geq \frac{(C''_\delta\mu_\delta')^2}{4}\,,$$
where the last inequality is true because we assume that $\gamma\geq \frac{\mu_\delta'}{2}$.
\end{proof}

Now we can leverage
\cref{lemma:weak-recovery-sufficiently-balanced} and \cref{lemma:weak-recovery-sufficiently-unbalanced} in order to prove \cref{theorem:weak-recovery-sbm}.

\begin{proof}[Proof of \cref{theorem:weak-recovery-sbm}]
Let $\gamma,\mu_\delta',C_\delta',C_\delta''$ be as in \cref{lemma:weak-recovery-sufficiently-balanced} and \cref{lemma:weak-recovery-sufficiently-unbalanced}.

We will first distinguish between the sufficiently balanced and sufficiently unbalanced cases by counting the number of edges which are incident to a random sublinear (but sufficiently high) set of vertices. Let $m=\lceil n^{3/4}\rceil$ and let $\mathbf{I}$ be a random subset of $[n]$ of size $m$.

Let $\mathrm{deg}(\mathbf{I})$ be the number of edges in $\mathbf{G}$ from $\mathbf{I}$ to $[n]\setminus\mathbf{I}$. We have
\begin{align*}
    \E\Brac{\mathrm{deg}(\mathbf{I})\given \mathbf{I}}
    &=\sum_{i\in\mathbf{I},j\in[n]\setminus\mathbf{I}}\E\Brac{\Bracbb{\{i,j\}\in \mathbf{G}}}=\sum_{i\in\mathbf{I},j\in[n]\setminus\mathbf{I}}\E\Brac{\bbP\Paren{\{i,j\}\in \mathbf{G}\given\mathbf{x}_i,\mathbf{x}_j}}\\
    &=\sum_{i\in\mathbf{I},j\in[n]\setminus\mathbf{I}}\E\Brac{\frac{d}{n}\Paren{1+\frac12\epsilon\mathbf{x}_i\mathbf{x}_j}}=\frac{d}{n}\sum_{i\in\mathbf{I},j\in[n]\setminus\mathbf{I}}\Paren{1+\frac12\eps\E\Brac{\mathbf{x}_i\mathbf{x}_j}}\\
    &=\frac{d}{n}\sum_{i\in\mathbf{I},j\in[n]\setminus\mathbf{I}}\Paren{1+\frac12\eps\Paren{\bbP\Paren{\mathbf{x}_i=\mathbf{x}_j}-\bbP\Paren{\mathbf{x}_i\neq\mathbf{x}_j}}}\\
    &=\frac{d}{n}\sum_{i\in\mathbf{I},j\in[n]\setminus\mathbf{I}}\Paren{1+\frac12\eps\Paren{
    p^2+(1-p)^2-2p(1-p)}}\\
    &=\frac{dm(n-m)}{n}\Paren{1+\frac12\eps\Paren{2p-1}^2}\\
    &=\frac{dm(n-m)}{n}\Paren{1+2\eps\gamma^2}=(1\pm o(1))dn^{3/4}\Paren{1+2\eps\gamma^2}\,.\\
\end{align*}

So the algorithm $\hat{\mathbf{X}}$ is defined as follows:
\begin{itemize}
    \item If $\mathrm{deg}(\mathbf{I})\geq dn^{3/4}\Paren{1+2\eps\Paren{\frac{3\mu_\delta'}{4}}^2}\,$ we apply the algorithm $\hat{\mathbf{X}}_{\mathrm{su}}$ on the subgraph $\mathbf{G}([n]\setminus\mathbf{I})$ of $\mathbf{G}$ induced on $n\setminus \mathbf{I}$ and define
    \begin{align*}
        \hat{\mathbf{X}}(\mathbf{G})_{ij}=
        \begin{cases}
        \hat{\mathbf{X}}_{\mathrm{su}}(\mathbf{G}([n]\setminus\mathbf{I}))_{ij}\quad&\text{if }i,j\in[n]\setminus\mathbf{I}\,,\\
        0\quad&\text{otherwise}.\\
        \end{cases}
    \end{align*}
    \item If $\mathrm{deg}(\mathbf{I})< dn^{3/4}\Paren{1+2\eps\Paren{\frac{3\mu_\delta'}{4}}^2}\,$ we apply the algorithm $\hat{\mathbf{X}}_{\mathrm{sb}}$ on the subgraph $\mathbf{G}([n]\setminus\mathbf{I})$ of $\mathbf{G}$ induced on $n\setminus \mathbf{I}$ and define
    \begin{align*}
        \hat{\mathbf{X}}(\mathbf{G})_{ij}=
        \begin{cases}
        \hat{\mathbf{X}}_{\mathrm{sb}}(\mathbf{G}([n]\setminus\mathbf{I}))_{ij}\quad&\text{if }i,j\in[n]\setminus\mathbf{I}\,,\\
        0\quad&\text{otherwise}.\\
        \end{cases}
    \end{align*}
\end{itemize}

By standard concentration inequalities, we can show that:
\begin{itemize}
    \item If $\gamma<\frac{\mu'_\delta}{2}\,,$ then with probability $1-o(1)$ we have $\mathrm{deg}(\mathbf{I})< dn^{3/4}\Paren{1+2\eps\Paren{\frac{3\mu_\delta'}{4}}^2}\,$ and so we apply the algorithm $\hat{\mathbf{X}}_{\mathrm{sb}}$ which will succeed in achieving pair-wise weak recovery according to \cref{lemma:weak-recovery-sufficiently-balanced}, assuming $i,j\in[I]$.
    \item If $\gamma\geq \frac{\mu'_\delta}{2}\,,$ then with probability $1-o(1)$ we have $\mathrm{deg}(\mathbf{I})\geq dn^{3/4}\Paren{1+2\eps\Paren{\frac{3\mu_\delta'}{4}}^2}\,$ and so we apply the algorithm $\hat{\mathbf{X}}_{\mathrm{ub}}$ which will succeed in achieving pair-wise weak recovery according to \cref{lemma:weak-recovery-sufficiently-unbalanced}, assuming $i,j\in[I]$.
    \item If $\frac{\mu'_\delta}{2}<\gamma<\mu'_\delta\,,$ then it follows from \cref{lemma:weak-recovery-sufficiently-balanced} and \cref{lemma:weak-recovery-sufficiently-unbalanced} that it does not matter which algorithm we apply because both of them achieve pair-wise weak recovery, assuming $i,j\in[I]$.
\end{itemize}

Now for any $i,j\in[n]\,,$ the probability of picking any of them in $\mathbf{I}$ is vanishingly small. We conclude that the algorithm $\hat{\mathbf{X}}$ satisfies the guarantees sought in \cref{theorem:weak-recovery-sbm}.

\end{proof}

\paragraph{Deferred proof of \cref{section:algorithm}}

First we prove \cref{fact:balanced-instance}.
\begin{proof}[Proof of \cref{fact:balanced-instance}]
Let $p\in [k]$ be fixed. By Chernoff's bound, with probability at least $1-n^{20}$, $\Paren{1-\sqrt{\frac{400k\log n}{n}}}\frac{n}{k}\leq \Snorm{c_p(\mathbf z)}\leq \Paren{1+\sqrt{\frac{400k\log n}{n}}}\frac{n}{k}\,,$ hence the property follows by a union bound.
\end{proof}

Next we prove \cref{fact:concentration-edges}.

\begin{proof}
For each $\ell\in [t]\,,$ define
$$\E\Brac{\hat{\mathbf{X}}(\mathbf G_{\ell})_\ij \suchthat \mathbf{f}_\ell(\mathbf{z})_i= \mathbf{f}_\ell(\mathbf{z})_j}=:C^*_\ell$$
and let $$C^*=\sum_{\ell \in [t]}C_\ell\,.$$
By independence of the observations, the inequality of \cref{fact:concentration-edges} for the case case $\mathbf{z}_i= \mathbf{z}_j$  follows with an application of Hoeffding's inequality.

The case $\mathbf{z}_i\neq \mathbf{z}_j$ needs a bit more work. By \cref{theorem:weak-recovery-sbm}, for each $\ell\in[t]$, we have
$$\E\Brac{\hat{\mathbf{X}}(\mathbf G_{\ell})_\ij\given \mathbf{f}_\ell(\mathbf{z})_i= \mathbf{f}_\ell(\mathbf{z})_j}=C^*_\ell\,,$$
and
$$\E\Brac{\hat{\mathbf{X}}(\mathbf G_{\ell})_\ij\given \mathbf{f}_\ell(\mathbf{z})_i\neq \mathbf{f}_\ell(\mathbf{z})_j}\leq C^*_\ell -  C_{\ell}\,.$$

Therefore,
\begin{align*}
\E\Brac{\hat{\mathbf{X}}(\mathbf G_{\ell})_\ij\given \mathbf{z}_i\neq \mathbf{z}_j}
=&\E\Brac{\hat{\mathbf{X}}(\mathbf G_{\ell})_\ij\given \mathbf{f}_\ell(\mathbf{z})_i= \mathbf{f}_\ell(\mathbf{z})_j}\cdot\bbP\Paren{\mathbf{f}_\ell(\mathbf{z})_i= \mathbf{f}_\ell(\mathbf{z})_j\given \mathbf{z}_i\neq \mathbf{z}_j}\\
&+\E\Brac{\hat{\mathbf{X}}(\mathbf G_{\ell})_\ij\given \mathbf{f}_\ell(\mathbf{z})_i\neq \mathbf{f}_\ell(\mathbf{z})_j} \cdot\bbP\Paren{\mathbf{f}_\ell(\mathbf{z})_i\neq \mathbf{f}_\ell(\mathbf{z})_j\given \mathbf{z}_i\neq \mathbf{z}_j}\\
\leq& C^*_\ell - C_\ell\cdot\frac{1}{2}\,,
\end{align*}
and so
$$\E\Brac{\sum_{\ell \in [t]}\hat{\mathbf{X}}(\mathbf G_{\ell})_\ij\suchthat \mathbf{z}_i\neq \mathbf{z}_j}\leq \sum_{\ell\in[t]} \Paren{C^*_\ell - C_\ell\cdot\frac{1}{2}} = C^* - \frac{\Cbar\cdot t}{2}\,.$$
The inequality of \cref{fact:concentration-edges} for case $\mathbf{z}_i\neq \mathbf{z}_j$ follows with another application of Hoeffding's inequality.
\end{proof}

Now we prove \cref{lemma:sucess-when-chosing-good-representative}.
\begin{proof}[Proof of \cref{lemma:sucess-when-chosing-good-representative}]
Let $A_i$ be a $(p,q)$-representative and  $A_j$ be a $(p',q)$-representative, for $p\,,p'\in[k]\,.$
It suffices to show that if $p=p'$ then $\Snorm{A_i-A_j}\leq n/k$ and otherwise $\Snorm{A_i-A_j}> n/k\,.$  Then no $q$-representative index remains unassigned at the end of step $1$.
By the reverse triangle inequality
\begin{align*}
    &\Abs{\Norm{A_i-A_j}-\Norm{c_p(z) -c_{p'}(z)}}\\
    &\leq \Norm{A_i-A_j-c_p(z)+c_{p'}(z)} \\
    &\leq \Norm{A_i-c_p(z)}+\Norm{A_j-c_{p'}(z)} \\
    &\leq 2n\cdot e^{-q\cdot\Cbar^2\cdot t}\,.
\end{align*}
For $p=p'$ we have $\Norm{c_p(z) -c_{p'}(z)}=0$ and by choice of $t\,,$ the first inequality follows.
For $p\neq p'$ we have $\Norm{c_p(z) -c_{p'}(z)}\geq \frac{n}{k}(2-o(1))$ since $z$ is balanced and the second inequality follows again by choice of $t$.
\end{proof}

\end{document}